\DeclareMathOperator*{\argmin}{arg\,min}
\newcommand{\R}[0]{\mathbb{R}}
\newcommand{\norm}[1]{\|#1\|}
\def\T{{ \mathrm{\scriptscriptstyle T} }}
\def\inv{{ \mathrm{\scriptscriptstyle -1} }}
\def\invT{{ \mathrm{\scriptscriptstyle -T} }}
\definecolor{OliveGreen}{rgb}{0,0.6,0}
\newtheorem{assumption}{Assumption}
\def\rgap{\textsc{RGAP}}
\newcommand{\ignore}[1]{}
\begin{document}

\title{An Asymptotically Optimal Coordinate Descent Algorithm for Learning Bayesian Networks from Gaussian Models}

\author{\name Tong Xu \email tongxu2027@u.northwestern.edu \\
       \addr Department of Industrial Engineering and Management Sciences\\
       Northwestern University\\
       Evanston, IL 60208, USA
       \AND
       \name Simge Küçükyavuz \email simge@northwestern.edu \\
       \addr Department of Industrial Engineering and Management Sciences\\
       Northwestern University\\
       Evanston, IL 60208, USA
       \AND Ali Shojaie \email ashojaie@uw.edu \\
       \addr Department of Biostatistics\\
       University of Washington\\
       Seattle, WA 98195-1617, USA
       \AND Armeen Taeb \email ataeb@uw.edu \\
       \addr Department of Statistics\\
       University of Washington\\
       Seattle, WA 98195-4322, USA}

\editor{}

\maketitle

\begin{abstract}%
This paper studies the problem of learning Bayesian networks from continuous observational data, generated according to a linear Gaussian structural equation model. We consider an $\ell_0$-penalized maximum likelihood estimator for this problem, which is known to have favorable statistical properties but is computationally challenging to solve, especially for medium-sized Bayesian networks.  
We propose a new coordinate descent algorithm to approximate this estimator and prove several remarkable properties of our procedure: 
The algorithm converges to a coordinate-wise minimum, and despite the non-convexity of the loss function, as the sample size tends to infinity, the objective value of the coordinate descent solution converges to the optimal objective value of the $\ell_0$-penalized maximum likelihood estimator.  To the best of our knowledge, our proposal is the first coordinate descent procedure endowed with optimality guarantees in the context of learning Bayesian networks. Numerical experiments on synthetic and real data demonstrate that our coordinate descent method can obtain near-optimal solutions while being scalable.
\end{abstract}

\begin{keywords}
  Directed acyclic graphs, $\ell_0$-penalization, Non-convex optimization,  Structural equation models
\end{keywords}

\section{Introduction}
Bayesian networks provide a powerful framework for modeling causal relationships among a collection of random variables. A Bayesian network is typically represented by a directed acyclic graph (DAG), where the random variables are encoded as vertices (or nodes), a directed edge from node $i$ to node $j$ indicates that $i$ causes $j$, and the acyclic property of the graph prevents the occurrence of circular dependencies. If the DAG is known, it can be used to predict the behavior of the system under manipulations or interventions. However, in large systems such as gene regulatory networks, the DAG is not known a priori, making it necessary to develop efficient and rigorous methods to learn the graph from data. To solve this problem using only observational data, we assume that all relevant variables are observed and that we only have access to observational data.

Three broad classes of methods for learning DAGs from data are constraint-based, score-based, and hybrid. Constraint-based methods use repeated conditional independence tests to determine the presence of edges in a DAG. A prominent example is the PC algorithm and its extensions  \citep{causalitybase, Tsamardinos2006TheMH}. While the PC algorithm can be applied in non-parametric settings, testing for conditional independencies is generally hard \citep{Shah2018TheHO}. Furthermore, even in the Gaussian setting, statistical consistency guarantees for the PC algorithm are shown under the \emph{strong faithfulness} condition \citep{kalisch2007estimating}, which is known to be restrictive in high-dimensional settings \citep{Uhler2012GeometryOT}. Score-based methods often deploy a penalized log-likelihood as a score function and search over the space of DAGs to identify a DAG with an optimal score. These approaches do not require the strong faithfulness assumption. However, statistical guarantees are not provided for many score-based approaches and solving them exactly suffers from high computational complexity. For example, learning an optimal graph using dynamic programming takes about 10 hours for a medium-size problem with 29 nodes \citep{silander2012simple}. Several papers \citep{kucukyavuz2022consistent,xu2024integer} offer speedup by casting the problem as a convex mixed-integer program, but finding an optimal solution with these approaches can still take an hour for a medium-sized problem. Finally, hybrid approaches combine constraint-based and score-based methods by using background knowledge or conditional independence tests to restrict the DAG search space \citep{Tsamardinos2006TheMH,nandy2018high}.

Several strategies have been developed to make score-based methods more scalable by finding approximate solutions instead of finding optimally scoring DAGs. One direction to find good approximate solutions is to resort to greedy-based methods, with a prominent example being the Greedy Equivalence Search (GES) algorithm \citep{chickering2002optimal}. GES performs a greedy search on the space of completed partially directed acyclic graphs (an equivalence class of DAGs) and is known to produce asymptotically consistent solutions \citep{chickering2002optimal}. Despite its favorable properties, GES does not provide optimality or consistency guarantees for any finite sample size. Further, the guarantees of GES assume a fixed number of nodes with sample size going to infinity and do not allow for a growing number of nodes. 
Another direction is gradient-based approaches \citep{yu2019dag,zheng2018dags}, which relax the discrete search space over DAGs to a continuous search space, allowing gradient descent and other techniques from continuous optimization to be applied. However, the search space for these problems is highly non-convex, resulting in limited guarantees for convergence, even to a local minimum. Finally, another notable direction is based on   \emph{coordinate descent}; that is iteratively maximizing the given score function over a single parameter, while keeping the remaining parameters fixed and checking that the resulting model is a DAG at each update \citep{aragam2015concave, aragam2019learning, Fu13, ye2020optimizing}. While coordinate descent algorithms have shown significant promise in learning large-scale Bayesian networks, to the best of our knowledge, they do not come with convergence and optimality guarantees.

\textbf{Contributions:} We propose a new score-based coordinate descent algorithm for learning Bayesian networks from Gaussian linear structural equation models. Remarkably, unlike prior coordinate descent algorithms for learning Bayesian networks, our procedure provably i) converges to a coordinate-wise minimum, and ii) yields optimally scoring DAGs in the large sample limit despite the non-convex nature of the problem. Moreover, we characterize the convergence rate as a function of both the sample size and the number of nodes. A byproduct of our analysis is a characterization of the optimization landscape: in the large-sample regime, all local minima achieve objective values close a global minimum, and exactly match it in the limit---a result that may be of independent interest.

As a scoring function for this approach, we deploy an $\ell_0$-penalized Gaussian log-likelihood, which implies that optimally-scoring DAGs are solutions to a highly non-convex $\ell_0$-penalized maximum likelihood estimator. This estimator is known to have strong statistical consistency guarantees \citep{vgBuhlmann}, but solving it is, in general, intractable. Thus, our coordinate descent algorithm can be viewed as a scalable and efficient approach to finding approximate solutions to this estimator that are asymptotically optimal (that is, match the optimal objective value of the $\ell_0$ penalized maximum-likelihood estimator as the sample size tends to infinity).

We illustrate the advantages of our method over competing approaches via extensive numerical experiments. The proposed approach is implemented in the python package \emph{micodag}, and all numerical results and figures can be reproduced using the code in \url{https://github.com/AtomXT/coordinate-descent-for-bayesian-networks.git}.

\section{Problem Setup}
\label{sec:formulation}

Consider an unknown DAG whose $m$ nodes correspond to observed random variables $X \in \mathbb{R}^m$. We denote the DAG by $\mathcal{G}^\star = (V,E^\star)$ where $V=\left\{1, \ldots, m\right\}$ is the vertex set and $E^\star \subseteq V \times V$ is the directed edge set, {where $(i,j) \in E^\star$ indicates $i \to j$ in graph $\mathcal{G}^\star$}. We assume that the random variables $X$ satisfy the linear structural equation model (SEM): 
\begin{equation}
    \label{eqn:sem}
    X = {B^\star}^\T X + \epsilon,
\end{equation}
where $B^{\star}\in \R^{m\times m}$ is the connectivity matrix with zeros on the diagonal and  $B^\star_{jk} \neq 0$ if $(j,k) \in E^\star$. In other words, the sparsity pattern of $B^\star$ encodes the true DAG structure. Further, $\epsilon \sim \mathcal{N}(0, \Omega^{\star})$ is a random Gaussian noise vector with zero mean and independent coordinates so that $\Omega^\star$ is a diagonal matrix. We use the Bachman-Landau symbols $\mathcal{O}$ to describe the limiting behavior of a function. Furthermore, we denote $z \asymp 1$ to express $z = \mathcal{O}(1)$ and $1/z = \mathcal{O}(1)$. Assuming, without loss of generality, that all random variables are centered, each variable $X_j$ in this model can be expressed as the
linear combination of its parents---the set of nodes with directed edges pointing to $j$---plus independent Gaussian noise. By the SEM \eqref{eqn:sem} and the Gaussianity of $\epsilon$, the random vector $X$ follows the Gaussian distribution $\mathcal{P}^{\star} = \mathcal{N}(0, \Sigma^\star)$, with $\Sigma^\star = (I-B^\star)^\invT\Omega^\star(I-B^\star)^\inv$. Throughout, we assume that the distribution $\mathcal{P}^\star$ is non-degenerate, or equivalently, $\Sigma^\star$ is positive definite. Our objective is to estimate the matrix $B^\star$, or as we describe next, an equivalence class when the underlying model is not identifiable.

Multiple SEMs are generally compatible with the distribution $\mathcal{P}^\star$. To formalize this, we need the following definition. 
\begin{definition}(Graph $\mathcal{G}(B)$ induced by $B$) Let $B \in \mathbb{R}^{m \times m}$ with zeros on the diagonal. Then, $\mathcal{G}(B)$ is the directed graph on $m$ nodes where the directed edge from $i$ to $j$ appears in $\mathcal{G}(B)$ if and only if $B_{ij} \neq 0$.
\end{definition} 
To see why the model \eqref{eqn:sem} is generally not identifiable, note that there are multiple tuples $(B,\Omega)$ where $\mathcal{G}(B)$ is DAG and $\Omega$ is a positive definite diagonal matrix with $\Sigma^\star = (I-B)^\invT\Omega(I-B)^\inv$ \citep{vgBuhlmann}. As a result, the SEM given by $(B,\Omega)$ yields an equally representative model as the one given by the population parameters $(B^\star,\Omega^\star)$. When $\mathcal{G}^\star$ is \emph{faithful} with respect to the graph $\mathcal{G}^\star$, the sparsest DAGs that are compatible with $\mathcal P^\star$ are precisely $\mathrm{MEC}(\mathcal{G}^\star)$, the \emph{Markov equivalence class} of $\mathcal{G}^\star$ \citep{vgBuhlmann}. Next, we formally define the Markov equivalence class. 

\begin{definition}(Markov equivalence class $\mathrm{MEC}(\mathcal{G})$\citep{verma1990equivalence}) Let $\mathcal{G} = (V,E)$ be a DAG. Then,  $\mathrm{MEC}(\mathcal{G})$ consists of DAGs that have the same skeleton and same v-structures as $\mathcal{G}$. The skeleton of $\mathcal{G}$ is the undirected graph obtained from $\mathcal{G}$ by substituting directed edges with undirected ones. Furthermore, nodes $i,j$, and $k$ form a v-structure if $(i,k) \in E$ and $(j,k) \in E$, and there is no edge between $i$ and $j$.
\end{definition}

With these preliminaries and definitions in place, we next describe the proposed \texorpdfstring{{$\ell_0$}}{l0}-penalized maximum likelihood estimator. 
Consider $n$ independent and identically distributed observations of the random vector $X$ generated according to \eqref{eqn:sem}. Let $\hat{\Sigma}$ be the sample covariance matrix obtained from these observations. Further, consider a Gaussian SEM parameterized by connectivity matrix $B$ and noise variance $\Omega$ with $D = \Omega^{-1}$. The parameters $(B,D)$ specify the following precision, or inverse covariance, matrix $\Theta := \Theta(B,D) := (I-B){D}(I-B)^\T$. The negative log-likelihood of this SEM is proportional to $\ell_n(\Theta) = \mathrm{trace}(\Theta\hat{\Sigma})-\log\det(\Theta)$.
Naturally, we seek a model that not only has a small negative log-likelihood but is also specified by a sparse connectivity matrix containing few nonzero elements. Thus, we deploy the following $\ell_0$-penalized maximum likelihood estimator with a regularization parameter $\lambda \geq 0$:
\begin{align}\label{Problem:original}
    \min_{B \in \mathbb{R}^{m \times m},D \in \mathbb{D}^m_{++}} \ell_n\left(\left(I-B\right) D \left(I-B\right)^\T\right) + \lambda^2\norm{B}_{\ell_0}\quad \text {s.t.} \quad\mathcal{G}(B) \text{ is a DAG}.
\end{align}
Here, $\mathbb{D}^m_{++}$ denotes the collection of positive definite $m \times m$ diagonal matrices and $\|B\|_{\ell_0}$ denotes the number of non-zeros in $B$. Note that the $\ell_0$ penalty is generally preferred over the $\ell_1$ penalty or minimax concave penalty (MCP) for penalizing the complexity of the model. In particular, $\ell_0$ regularization exhibits the important property that equivalent DAGs---those in the same Markov equivalence class---have the same penalized likelihood score, while this is not the case for $\ell_1$ or MCP regularization \citep{vgBuhlmann}. Indeed, this lack of score invariance with $\ell_1$ regularization partially explains the unfavorable properties of some existing methods (see Section~\ref{sec:experiments}).

The Markov equivalence class $\mathrm{MEC}(\mathcal{G}(\hat{B}^\mathrm{opt}))$ of the connectivity matrix $\hat{B}^\mathrm{opt}$ obtained from solving \eqref{Problem:original} provides an estimate of $\mathrm{MEC}(\mathcal{G}^\star)$. \citet{vgBuhlmann} prove that this estimate has desirable statistical properties; however, solving it is, in general, intractable. As stated, the objective function $\ell_n((I-B) D (I-B)^\T)$ is non-convex and non-linear function of $(B, D)$. Furthermore, the $\log\det$ function in the likelihood $\ell_n$ is not amenable to standard mixed-integer programming optimization techniques. To circumvent the aforementioned challenges, \citet{xu2024integer} derive the following equivalent optimization model via the change of variables $\Gamma \leftarrow (I-B)D^{1/2}$:
\begin{equation}
    \min\limits_{\Gamma \in \mathbb{R}^{m \times m}}  f(\Gamma)\quad 
\text{s.t.} \quad \mathcal{G}\left(\Gamma-\mathrm{diag}\left(\Gamma\right)\right) \text{ is a DAG}.
\label{Problem:micp}
\end{equation}
Here $f(\Gamma):= \sum_{i=1}^m-2\log(\Gamma_{ii})+\mathrm{tr}(\Gamma\Gamma^\T\hat{\Sigma}) + \lambda^2\norm{\Gamma-\mathrm{diag}(\Gamma)}_{\ell_0},$
and $\mathrm{diag}(\Gamma)$ is the diagonal matrix formed by taking the diagonal entries of $\Gamma$. The optimal solutions of \eqref{Problem:original} and \eqref{Problem:micp} are directly connected: Letting $(\hat{B}^\mathrm{opt},\hat{D}^\mathrm{opt})$ be an optimal solution of \eqref{Problem:original}, then $\hat{\Gamma}^\mathrm{opt} = (I-\hat{B}^\mathrm{opt})(\hat{D}^\mathrm{opt})^{1/2}$ is an optimal solution of \eqref{Problem:micp}. Furthermore, the sparsity pattern of $\hat{\Gamma}^\mathrm{opt}-\mathrm{diag}(\hat{\Gamma}^\mathrm{opt})$ is the same as that of $\hat{B}^\mathrm{opt}$; in other words, the Markov equivalence class $\mathrm{MEC}(\mathcal{G}(\hat{B}^\mathrm{opt}))$ is the same as the Markov equivalence class $\mathrm{MEC}(\mathcal{G}(\hat{\Gamma}^\mathrm{opt}-\mathrm{diag}(\hat{\Gamma}^\mathrm{opt})))$.

\citet{xu2024integer} recast the optimization problem \eqref{Problem:micp} as a convex mixed-integer program and provide algorithms to solve \eqref{Problem:micp} to optimality. However, solving \eqref{Problem:micp} is, in general, NP-hard, and obtaining optimality certificates may take an hour for a problem with 20 nodes \citep{xu2024integer}. 

\section{A Coordinate Descent Algorithm for DAG Learning}
In this section, we develop a cyclic coordinate descent approach to find a heuristic solution to problem \eqref{Problem:micp}. The coordinate descent solver is fast and can be scaled to large-scale problems. As we demonstrate in Section~\ref{sec:theory}, it provably converges and produces an asymptotically optimal solution to \eqref{Problem:micp}. Given the quality of its estimates, the proposed coordinate descent algorithm can also be used as a warm start for the mixed-integer programming framework in \citet{xu2024integer} to obtain optimal solutions.

\subsection{Parameter update without acyclicity constraints}
\label{sec:parameter_update}
Let us first ignore the acyclicity constraint in \eqref{Problem:micp}, and consider solving problem \eqref{Problem:micp} with respect to a single variable $\Gamma_{uv}$, for $u,v=1,\ldots,m$, with the other coordinates of $\Gamma$ fixed. Specifically, we are solving 
\begin{equation}
\label{Problem:update}
    \min\limits_{\Gamma_{uv} \in \mathbb{R}} g(\Gamma_{uv}) \coloneqq \sum_{i=1}^m-2\log(\Gamma_{ii})+\mathrm{tr}\left(\Gamma\Gamma^\T\hat{\Sigma}\right) + \lambda^2\norm{\Gamma-\mathrm{diag}(\Gamma)}_{\ell_0},
\end{equation}
with $\Gamma_{ij}$ being fixed for $i\not=u, j\not=v$.

\begin{proposition}
\label{prop:update}
The solution to problem \eqref{Problem:update}, for $u,v=1,\ldots,m$ and $v\not = u$ is given by
\begin{align*}
\hat{\Gamma}_{uv} =
\begin{cases}
    \frac{-A_{uv}}{2\hat{\Sigma}_{uu}},& \text{if } \lambda^2 \leq \frac{A_{uv}^2}{4\hat{\Sigma}_{uu}},\\
    0,              & \text{otherwise}.
\end{cases}\quad ; \quad 
\hat{\Gamma}_{uu} = \frac{-A_{uu} + \sqrt{A_{uu}^2+16\hat{\Sigma}_{uu}}}{4\hat{\Sigma}_{uu}}, 
\end{align*}
where $A_{uu}=\sum\limits_{j\not=u}\Gamma_{ju}\hat{\Sigma}_{ju} + \sum\limits_{k\not=u}\Gamma_{ku}\hat{\Sigma}_{uk}$ and 
$A_{uv} = \sum\limits_{j\not=u}\Gamma_{jv}\hat{\Sigma}_{ju} + \sum\limits_{k\not=u}\Gamma_{kv}\hat{\Sigma}_{uk}.$
\end{proposition}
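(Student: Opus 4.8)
The plan is to reduce $g$ to an explicit one-dimensional function of the single free variable $\Gamma_{uv}$ and then minimize that scalar function directly; the two cases in the statement correspond to whether the active coordinate is off-diagonal ($u \neq v$) or diagonal ($u = v$).

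First I would isolate the dependence of each of the three terms of $g$ on $\Gamma_{uv}$. Writing the trace term as $\mathrm{tr}(\Gamma\Gamma^\T\hat{\Sigma}) = \sum_{i,j,k}\Gamma_{ij}\Gamma_{kj}\hat{\Sigma}_{ki}$, the entry $\Gamma_{uv}$ appears exactly in the summands with $(i,j)=(u,v)$ and with $(k,j)=(u,v)$. Collecting these, and accounting for the single summand $(i,k,j)=(u,u,v)$ that contributes the square, one gets $\mathrm{tr}(\Gamma\Gamma^\T\hat{\Sigma}) = \hat{\Sigma}_{uu}\Gamma_{uv}^2 + A_{uv}\Gamma_{uv} + c$, where $c$ is independent of $\Gamma_{uv}$ and the linear coefficient is exactly the quantity $A_{uv}$ in the statement (with $A_{uu}$ recovered by setting $v = u$). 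This bookkeeping is the main—though mild—obstacle: the one thing to be careful about is not double-counting the diagonal summand that produces the $\Gamma_{uv}^2$ term. Throughout I use that $\hat{\Sigma}_{uu} > 0$ (the sample variance of a non-degenerate coordinate), which guarantees the parabola opens upward.

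For the off-diagonal case, the log term $\sum_i -2\log\Gamma_{ii}$ does not involve $\Gamma_{uv}$, and the $\ell_0$ penalty contributes exactly $\lambda^2\mathbbm{1}[\Gamma_{uv}\neq 0]$. Hence, up to an additive constant, $g(\Gamma_{uv})$ equals $0$ at $\Gamma_{uv}=0$ and equals $\hat{\Sigma}_{uu}t^2 + A_{uv}t + \lambda^2$ for $t = \Gamma_{uv}\neq 0$. The unconstrained minimizer of this parabola is $-A_{uv}/(2\hat{\Sigma}_{uu})$, with minimal value $-A_{uv}^2/(4\hat{\Sigma}_{uu}) + \lambda^2$; comparing this against the value $0$ attained at the origin yields the stated threshold $\lambda^2 \leq A_{uv}^2/(4\hat{\Sigma}_{uu})$, with the nonzero branch selected by convention at equality.

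For the diagonal case, the $\ell_0$ penalty is independent of $\Gamma_{uu}$, so on the domain $\Gamma_{uu} > 0$ the objective reduces to $-2\log\Gamma_{uu} + \hat{\Sigma}_{uu}\Gamma_{uu}^2 + A_{uu}\Gamma_{uu}$, which is strictly convex (a sum of the convex $-2\log$ and a convex quadratic). Setting the derivative to zero and clearing $\Gamma_{uu}$ gives the quadratic $2\hat{\Sigma}_{uu}\Gamma_{uu}^2 + A_{uu}\Gamma_{uu} - 2 = 0$; since its discriminant $A_{uu}^2 + 16\hat{\Sigma}_{uu}$ strictly exceeds $A_{uu}^2$, exactly one root is positive, namely the stated $\hat{\Gamma}_{uu}$, and by convexity this is the global minimizer on $(0,\infty)$. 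The only remaining subtlety is confirming that the minimizer lies in the open domain rather than escaping to the boundary, which follows from $\hat{\Sigma}_{uu}>0$ together with the barrier behavior of $-2\log\Gamma_{uu}$ as $\Gamma_{uu}\downarrow 0$.
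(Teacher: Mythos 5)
Your proof is correct and follows essentially the same route as the paper's: expand the trace so that $g$ becomes a quadratic (plus the $\ell_0$ penalty) in the single free coordinate with linear coefficient $A_{uv}$, compare the parabola's minimum value $-A_{uv}^2/(4\hat{\Sigma}_{uu})+\lambda^2$ against the value at zero to obtain the stated threshold, and solve the stationarity quadratic $2\hat{\Sigma}_{uu}\Gamma_{uu}^2 + A_{uu}\Gamma_{uu} - 2 = 0$ for the diagonal case, keeping the unique positive root. Your two extra remarks---not double-counting the $(i,k,j)=(u,u,v)$ summand in the trace expansion, and using strict convexity together with the log-barrier behavior as $\Gamma_{uu}\downarrow 0$ to confirm the diagonal minimizer lies in the open domain---merely make explicit details the paper leaves implicit.
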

\begin{proof} 
The original objective function $g$ with $\ell_0$-norm is nonconvex and discontinuous. To find the optimal solution, we compare $g(0)$ with $g(\hat{\gamma}_{uv})$ for some $\hat{\gamma}_{uv} \not = 0$.

For any $u\in V$, we have 
\begin{align}
\label{eq:trace_decompose}
    \mathrm{tr}\left(\Gamma\Gamma^\T \hat{\Sigma}\right) &= \sum\limits_{i=1}^m\Gamma_{ui}\left(\Gamma_{ui}\hat{\Sigma}_{uu} +\sum_{j \not = u}\Gamma_{ji}\hat{\Sigma}_{ju}\right)+\sum\limits_{k\neq u}\sum\limits_{i=1}^m \Gamma_{ki}\left(\Gamma_{ui}\hat{\Sigma}_{uk} + \sum_{j\neq u} \Gamma_{ji}\hat{\Sigma}_{jk}\right).
\end{align}

We begin by analyzing off-diagonal entries $\Gamma_{uv}$ (where $u \not = v$). For each nonzero $\Gamma_{uv}$, optimizing $g(\Gamma_{uv})$ is equivalent to minimizing $\mathrm{tr}(\Gamma\Gamma^\T\hat{\Sigma})$. From equation~\eqref{eq:trace_decompose}, the derivative of $\mathrm{tr}(\Gamma\Gamma^\T \hat{\Sigma})$ with respect to $\Gamma_{uv}$ is given by
\begin{equation*}
\label{eqn:gradient_trace}
    \begin{aligned}
        \frac{\partial \mathrm{tr}(\Gamma\Gamma^\T \hat{\Sigma})}{\partial \Gamma_{uv}}=
    2\hat{\Sigma}_{uu} \Gamma_{uv} + \sum\limits_{j\not=u}\Gamma_{jv}\hat{\Sigma}_{ju} + \sum_{k\not=u}\Gamma_{kv}\hat{\Sigma}_{uk} = 2\hat{\Sigma}_{uu} \Gamma_{uv} + A_{uv}.
    \end{aligned}
\end{equation*}
Setting the derivative to zero, and defining
$\hat{\gamma}_{uv} := -A_{uv}/{(2\hat{\Sigma}_{uu})}$, we obtain 
\[
    \arg\min\limits_{\Gamma_{uv}} g(\Gamma_{uv})=\hat{\Gamma}_{uv}\coloneqq
\begin{cases}
    \hat{\gamma}_{uv},& \text{if } g(\hat{\gamma}_{uv}) \leq g(0),\\
    0,              & \text{otherwise}.
\end{cases}
\]
Furthermore, we observe that $\frac{\partial^2}{\partial \Gamma_{uv}^2} \mathrm{tr}(\Gamma\Gamma^\T\hat{\Sigma}) = 2\hat{\Sigma}_{uu} > 0$. This positive second derivative implies that optimizing $g(\Gamma_{uv})$ for any nonzero off-diagonal entry reduces to solving a convex optimization problem.

 Given that $g(\hat{\gamma}_{uv})$ represents the optimal objective value for any nonzero $\Gamma_{uv}$, comparing it with $g(0)$ allows us to determine the optimal solution.
Note that $g(\hat{\gamma}_{uv}) - g(0) = \hat{\gamma}_{uv}^2 \hat{\Sigma}_{uu} + \hat{\gamma}_{uv}A_{uv} + \lambda^2$. Thus, $g(\hat{\gamma}_{uv}) \leq g(0)$ is equivalent to $\lambda^2 \leq A_{uv}^2/{(4\hat{\Sigma}_{uu})}$.

Now we consider the update of $\Gamma_{uv}$ when $u = v$. We have:
\begin{equation*}
\label{eqn:zero_grad_uu}
    \frac{\partial g(\Gamma_{uu})}{\partial \Gamma_{uu}} = \frac{-2}{\Gamma_{uu}} + 2\hat{\Sigma}_{uu} \Gamma_{uu} + \sum\limits_{j\not=u}\Gamma_{ju}\hat{\Sigma}_{ju} + \sum\limits_{k\not=u}\Gamma_{ku}\hat{\Sigma}_{uk} = \frac{-2}{\Gamma_{uu}} + 2\hat{\Sigma}_{uu} \Gamma_{uu} + A_{uu}.
\end{equation*}
Setting ${\partial g(\Gamma_{uu})}/{\partial \Gamma_{uu}}  = 0$, we obtain:
$\hat{\Gamma}_{uu} = ({-A_{uu}\allowbreak+ (A_{uu}^2\allowbreak+16\hat{\Sigma}_{uu}})^{1/2})\allowbreak/{(4\hat{\Sigma}_{uu})}.
$
\end{proof}

\subsection{Accounting for acyclicity and full algorithm description}
\label{sec:accounting_acyclicity}

Algorithm \ref{algo:cd_spacer} fully describes our procedure. The input to our algorithm is the sample covariance $\hat{\Sigma}$, regularization parameter $\lambda \in \R_+$, a super-structure graph $E_{\mathrm{super}}$ that is a superset of edges that contains the true edges, a positive integer $C$, and a topological ordering $O$ that is a permutation of $\{1,2,\dots,m\}$. We allow the user to restrict the set of possible edges to be within a user-specified \emph{super-structure} set of edges $E_\text{super}$. A natural choice of the superstructure is the moral graph, which can be efficiently and accurately estimated via existing algorithms such as the graphical lasso \citep{Friedman07coordinate}, neighborhood selection \citep{Meinshausen2006HighdimensionalGA}, or $\ell_0$-regularized pseudo-likelihood-based estimator \citep{behdin2023sparse}. This superstructure could also be the complete graph if a reliable superstructure estimate is unavailable. The input topological ordering $O$ governs the order in which the coordinates of $\Gamma$ are updated in our coordinate descent algorithm (see Remark \ref{remark:different_ordering} for more discussion). 

We start by initializing $\Gamma$ according to input $\Gamma^\mathrm{init}$ and permute the rows and columns of $\hat{\Sigma}$ according to $O$. Then, for each pair of indices $u$ and $v$ ranging from 1 to $m$, we update $\Gamma_{uv}$ based on specific rules. If $u=v$ (a diagonal entry), we update it directly according to Proposition \ref{prop:update}. Among the off-diagonal entries, we only update those within the superstructure. Specifically, if $u \neq v$, and $(u,v)$ is in the superstructure, we check if setting $\Gamma_{uv}$ to a nonzero value violates the acyclicity constraint. (We use the breadth-first search algorithm  \cite[see][]{ellis2008learning,Fu13} to check for acyclicity.) If it does not, we update $\Gamma_{uv}$ as per Proposition \ref{prop:update}; otherwise, we set $\Gamma_{uv}$ to 0. 
We refer to a full sequence of coordinate updates as a full loop. 
The loop is repeated until convergence, when the objective values no longer improve after a complete loop. We keep track of the support of $\Gamma$s encountered during the algorithm. When the occurrence count of a particular support of $\Gamma$s reaches a predefined threshold, $C$, a spacer step \citep{bertsekas2016nonlinear,Rahul20coordinate} is initiated, during which we update every nonzero coordinate iteratively.
Note that in the spacer step, we use $\hat{\gamma}_{uv}$, which is the optimal update without considering the sparsity penalty, that is, we use $\lambda^2 = 0$. 
The use of spacer steps stabilizes the behavior of updates and ensures convergence.  After finishing the spacer step, we reset the counter of the support of the current solution.

\begin{remark}
The effect of the input ordering $O$ is that Algorithm~\ref{algo:cd_spacer} iteratively updates the coordinates of $\Gamma$ in a particular order: updating row by row in the order given by ${O}$, and within each row, according to ${O}$. In Section \ref{sec:theory}, we show that regardless of the input ordering $O$, as well as the initialization $\Gamma_{\mathrm{init}}$, Algorithm~\ref{algo:cd_spacer} converges, and is asymptotically optimal (matches the optimal objective value as the sample size tends to infinity). However, while different orderings result in similar final objective values, the associated Markov equivalence classes can vary significantly. Thus, a given ordering or initialization may result in a Markov equivalence class that is far from the population one. In Section \ref{sec:selecting_ordering}, we discuss this challenge, and advocate for using an existing algorithm to find a suitable ordering, and use this ordering as input to Algorithm~\ref{algo:cd_spacer}. Finally, we note that the input topological ordering $O$ is simply an initialization to Algorithm~\ref{algo:cd_spacer}. Indeed, the topological ordering corresponding to the output $\hat{\Gamma}$ can in general be different than $O$. Finally, we note that the default initialization $\Gamma^{\mathrm{init}} = I$ tends to perform well in practice and is comparable to other initialization values, especially when a good ordering $O$ is selected; see Section~\ref{sec:experiments}.
\label{remark:different_ordering}
\end{remark}

\begin{algorithm}[tb]
\caption{Cyclic coordinate descent algorithm with spacer steps}
\label{algo:cd_spacer}
\begin{algorithmic}[1]
    \STATE \textbf{Input:} Sample covariance $\hat{\Sigma}$, regularization parameter $\lambda \in \R_+$, super-structure $E_{\mathrm{super}}$, positive integer $C$, any topological ordering $O$ that is a permutation of $\{1,2,\dots,m\}$, initial matrix $\Gamma^{\mathrm{init}}$ (default: identity matrix).
    \STATE \textbf{Initialize:} $\Gamma^{0} \gets \Gamma^{\mathrm{init}}$ ; $t\gets 1$;
    \STATE $\hat{\Sigma}\gets \hat{\Sigma}$ with rows and columns permuted according to $O$.
    \WHILE{objective function $f(\Gamma^{t})$ continue decreasing}
        \FOR{$u=1$ to $m$} \label{alg:line4}
            \STATE $\Gamma^{t}_{uu} = \hat{\Gamma}_{uu}$, where $\hat{\Gamma}_{uu}$ is calculated from Proposition \ref{prop:update} using the recently updated $\Gamma^t$. 
            \FOR{$v=1$ to $m$ such that $(u, v) \in E_{\mathrm{super}}$}
                \STATE If $\Gamma^{t}_{uv}\not = 0$  violates acyclicity constraints, set $\Gamma^{t}_{uv} = 0$.
                \STATE If $\Gamma^{t}_{uv}\not = 0$  would not violate acyclicity constraints, set $\Gamma^{t}_{uv}=\hat{\Gamma}_{uv}$.
                \STATE $t\gets t+1$.
                \STATE $\mathrm{Count}[\mathrm{support}(\Gamma^t)] \gets \mathrm{Count}[\mathrm{support}(\Gamma^t)] + 1$.
                \IF{$\mathrm{Count}[\mathrm{support}(\Gamma^t)] = Cm^2$}
                    \STATE    ~~ $\Gamma^{t+1} \gets \mathrm{SpacerStep}(\Gamma^t)$. \hspace{1cm} (Algorithm~\ref{algo:spacer})\\
                    ~~ $\mathrm{Count}[\mathrm{support}(\Gamma^t)] = 0.$\\
                    \STATE $t\gets t+1$.
                \ENDIF
            \ENDFOR
        \ENDFOR
        \label{alg:line8}
    \ENDWHILE
    \STATE $\hat{\Gamma} \gets \Gamma^{t}$ with rows and columns reordered back to the original variable order.
    \STATE \textbf{Output:} $\hat{\Gamma}$ and the Markov equivalence class $\mathrm{MEC}(\mathcal{G}(\hat{\Gamma}-\mathrm{diag}(\hat{\Gamma})))$.
\end{algorithmic}
\end{algorithm}

\begin{algorithm}[tb]
\caption{SpacerStep}
    \begin{algorithmic}[1]
        \STATE \textbf{Input: } $\Gamma^t$
        \FOR{$(u,v)  \in \mathrm{support}(\Gamma^t)$}
            \STATE Set  $\Gamma^{t+1}_{uv} \leftarrow \hat{\gamma}_{uv}$
        \ENDFOR
        \STATE \textbf{Output: } $\Gamma^{t+1}$
    \end{algorithmic}
    \label{algo:spacer}
\end{algorithm}

\section{Convergence and Optimality Guarantees}
\label{sec:theory}
We provide convergence and optimality guarantees for our coordinate descent procedure (Algorithm~\ref{algo:cd_spacer}). Specifically, we follow a similar proof strategy as \cite{Rahul20coordinate} to show that Algorithm~\ref{algo:cd_spacer} converges. Remarkably, we also prove the surprising result that the objective value attained by our coordinate descent algorithm provably converges to the optimal objective value of \eqref{Problem:micp}. Throughout, we assume the super-structure $E_\mathrm{super}$ that is supplied as input to Algorithm~\ref{algo:cd_spacer} satisfies $E^\star \subseteq E_\mathrm{super}$ where $E^\star$ denotes the true edge set; see \cite{xu2024integer} for a discussion on how the graphical lasso can yield super-structures that satisfy this property with high probability.

Our analysis relies on the notion of coordinate-wise minimum \citep{Rahul20coordinate}, which represent solutions Algorithm~\ref{algo:cd_spacer} obtains under different input orderings $O$.
\begin{definition}(Coordinate-wise (CW) minimum)
A connectivity matrix $\Gamma^\mathrm{CW} \in \R^{m\times m}$ of a DAG is the CW minimum of problem \eqref{Problem:micp} if for every $(u,v), u,v=1,\ldots,m$, $\Gamma^\mathrm{CW}_{uv}$
is a minimizer of $g(\Gamma_{uv})$ with other coordinates of $\Gamma^\mathrm{CW}$ held fixed.
\label{defn:coordinate_wise_minimum}
\end{definition}

\subsection{Convergence}
\label{sec:covergence_optimalith}
Our convergence analysis requires an assumption on the sample covariance matrix:
\begin{assumption}(Positive definite sample covariance)
\label{assumption:convexity}
    The sample covariance matrix $\hat{\Sigma}$ is positive definite.
\end{assumption}
Assumption~\ref{assumption:convexity} is satisfied almost surely if $n \geq m$ and the samples of the random vector $X$ are generated from an absolutely continuous distribution. Under this mild assumption, our coordinate descent algorithm provably converges to a coordinate-wise minimum.
\begin{theorem}(Convergence of Algorithm~\ref{algo:cd_spacer})
\label{thm:convergence}
Let $\{\Gamma^{t}\}_{t=1}^\infty$ be the sequence of estimates generated by Algorithm \ref{algo:cd_spacer}. Suppose that Assumption \ref{assumption:convexity} holds. Then, for input topological ordering $O$ or initial point $\Gamma^{\mathrm{init}}$ to Algorithm \ref{algo:cd_spacer}:
\begin{enumerate}
    \item the sequence $\{\mathrm{support}(\Gamma^{t})\}_{t=1}^\infty$ stabilizes after a finite number of iterations; that is, there exists a positive integer $M$ and a
support set $\hat{E} \subseteq \{(i,j): i,j = 1,2,\dots,m\}$ such that $\mathrm{support}(\Gamma^t) = \hat{E}$ for all $t \geq M$.
    \item the sequence $\{\Gamma^{t}\}_{t=1}^\infty$ converges to a coordinate-wise minimum $\Gamma$ with $\mathrm{support}(\Gamma) = \hat{E}$.
\end{enumerate}
\end{theorem}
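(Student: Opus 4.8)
The plan is to follow the descent-plus-spacer-step template: first show the objective decreases monotonically to a finite limit, then use the spacer steps to pin down the accumulation points of the support, and finally invoke classical coordinate-descent convergence once the support is frozen. Throughout, write $h(\Gamma) := \sum_{i=1}^m -2\log(\Gamma_{ii}) + \mathrm{tr}(\Gamma\Gamma^\T\hat{\Sigma})$ for the smooth part, so that $f(\Gamma) = h(\Gamma) + \lambda^2\norm{\Gamma-\mathrm{diag}(\Gamma)}_{\ell_0}$. The first step is monotonicity: each coordinate update in Algorithm~\ref{algo:cd_spacer} exactly minimizes $f$ over a single coordinate subject to the feasible set imposed by acyclicity—when a nonzero value would create a cycle the feasible set is $\{0\}$ and the update returns $0$, and otherwise it returns the closed-form minimizer of Proposition~\ref{prop:update}—while the spacer sweeps minimize $h$ (equivalently $f$, since the support and hence the penalty are momentarily fixed) coordinatewise; hence $f(\Gamma^{t+1}) \le f(\Gamma^t)$ for every $t$. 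Under Assumption~\ref{assumption:convexity} we have $\mathrm{tr}(\Gamma\Gamma^\T\hat{\Sigma}) \ge \lambda_{\min}(\hat{\Sigma})\norm{\Gamma}_F^2$, and together with the barrier terms $-2\log(\Gamma_{ii})$ this makes $f$ bounded below and coercive, with sublevel sets on which the diagonal entries are bounded away from $0$ and from $\infty$; since $f(\Gamma^t)$ never exceeds its initial value, the iterates remain in such a compact set. Therefore $f(\Gamma^t)\downarrow f^\star$ for some finite $f^\star$, and the per-step decreases tend to $0$.

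Next I would prove support stabilization. Because all off-diagonal edges are confined to the finite set $E_{\mathrm{super}}$, the off-diagonal support takes only finitely many values; for each such support $S$, the restriction of $h$ to matrices with that support is strictly convex and coercive (the quadratic $\mathrm{tr}(\Gamma\Gamma^\T\hat{\Sigma})$ is positive definite columnwise because $\hat{\Sigma}\succ 0$, and the log-barrier handles the diagonal), so it has a unique minimizer $\Gamma_S^\star$, and the finitely many values $f_{\min}(S) := f(\Gamma_S^\star)$ are the only candidate limits. Suppose, for contradiction, that the support does not stabilize; then some $S$ is visited infinitely often, so by the counting rule a spacer step on $S$ is triggered infinitely often. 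Each such spacer step applies the continuous map $T_S$ that minimizes the strictly convex $h$ coordinatewise over $S$, whose only fixed point is $\Gamma_S^\star$; since the objective decrease along the spacer subsequence tends to $0$, a Bertsekas-type spacer-step argument (cf.\ \citep{bertsekas2016nonlinear}) forces every limit point of the spacer iterates on $S$ to be a fixed point of $T_S$, i.e.\ $\Gamma_S^\star$, whence $f^\star = f_{\min}(S)$ and the spacer iterates on $S$ converge to $\Gamma_S^\star$.

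The contradiction is then obtained by evaluating the thresholds of Proposition~\ref{prop:update} at $\Gamma_S^\star$. If some acyclicity-permitted off-diagonal update changed the support with a \emph{strict} improvement—adding $(u,v)\notin S$ with $\lambda^2 < A_{uv}^2/(4\hat{\Sigma}_{uu})$, or deleting $(u,v)\in S$ with $\lambda^2 > A_{uv}^2/(4\hat{\Sigma}_{uu})$—then by continuity the same strict decrease persists for iterates near $\Gamma_S^\star$, driving $f$ strictly below $f_{\min}(S)=f^\star$, which is impossible for a sequence decreasing to $f^\star$. Hence at $\Gamma_S^\star$ every threshold lies (non-strictly) on the support-preserving side; when strict, these inequalities persist on a neighborhood that is forward-invariant under the support-$S$ coordinate descent, so once a spacer step lands an iterate inside it the support never leaves $S$, contradicting non-stabilization. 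Acyclicity is harmless here, since it can only force coordinates to $0$ and never blocks an edge already present in the DAG $S$. This establishes part~(1), producing $M$ and $\hat{E}$.

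Finally, for part~(2), once $t\ge M$ the support is frozen at $\hat{E}$, the $\ell_0$ term is constant, and the acyclicity constraint is slack, so Algorithm~\ref{algo:cd_spacer} reduces to ordinary cyclic coordinate descent (regular and spacer sweeps now coincide) minimizing the strictly convex, coercive, smooth $h$ over the fixed subspace; classical coordinate-descent convergence on the compact sublevel set from the first step then gives $\Gamma^t\to\Gamma_{\hat{E}}^\star$, and stability of the support forces the limit's nonzero pattern to be exactly $\hat{E}$. I expect the main obstacle to be the boundary case in the third step where a threshold holds with \emph{equality}, $\lambda^2 = A_{uv}^2/(4\hat{\Sigma}_{uu})$ at $\Gamma_S^\star$: there adding or removing the edge leaves $f$ unchanged, the strict-decrease contradiction fails, and two supports can share the same $f_{\min}$, so one must argue—using the tie-breaking convention of Proposition~\ref{prop:update} together with the counting threshold $Cm^2$ that forces repeated spacer steps—that the iterates still cannot oscillate indefinitely, and separately rule out a degenerate $\Gamma_S^\star$ with a vanishing coordinate. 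Making the spacer-step limit argument fully rigorous for this support-dependent, constrained coordinate map is the other technical crux.
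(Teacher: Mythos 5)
Your overall route is the same as the paper's (which itself follows \cite{Rahul20coordinate}): monotone descent of $f$, coercivity/boundedness of the iterates under Assumption~\ref{assumption:convexity}, spacer steps driving subsequences to the unique minimizer $\Gamma^\star_S$ of $\ell$ on a fixed support, a threshold argument at $\Gamma^\star_S$ to stabilize the support, and classical cyclic coordinate descent once the support is frozen. The strict-threshold perturbation step is sound. However, the proposal has a genuine gap at exactly the two points you flag and then defer, and these are not routine cleanup---they are where the paper's proof does its real work. First, your parenthetical claim that spacer sweeps keep the penalty fixed is false: the spacer update sets $\Gamma_{uv} \leftarrow \hat{\gamma}_{uv} = -A_{uv}/(2\hat{\Sigma}_{uu})$, which equals $0$ whenever $A_{uv}=0$, so a spacer step can strictly shrink the support; since your fixed-point argument for the sweep map $T_S$ needs the spacer iterates to remain on support $S$, this must be proved, not assumed. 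Second, in the boundary case $\lambda^2 = A_{uv}^2/(4\hat{\Sigma}_{uu})$ at $\Gamma^\star_S$, your strict-decrease contradiction fails and indefinite support oscillation is a priori possible; saying ``one must argue that the iterates still cannot oscillate'' leaves Part~(1) of Theorem~\ref{thm:convergence} unproved.

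The paper closes both holes concretely, and you need arguments of this kind. For the spacer-shrink issue, Lemma~\ref{lem:converge} Part~1 observes that a strict shrink during a spacer step costs at least $\lambda^2$ in $f$ (since $\ell$ does not increase and the $\ell_0$ penalty drops by at least $\lambda^2$), so by the convergence of $\{f(\Gamma^t)\}$ (Lemma~\ref{lem:decrease}) it can occur only finitely often; after that the spacer outputs have support exactly $\hat{E}$, which legitimizes your $T_S$ argument. For the boundary case, the paper's proof of Theorem~\ref{thm:convergence} selects a limit point with the \emph{largest} support $\hat{E}$ (this kills infinite additions, one side of the oscillation, via Lemma~\ref{lem:io}), and handles infinite deletions of $(u,v)\in\hat{E}$ as follows: each drop decreases $f$ by at least $\lambda^2 - A_{uv}^2/(4\hat{\Sigma}_{uu}) > 0$, and since these decreases must vanish, $\lambda^2 = A_{uv}^2/(4\hat{\Sigma}_{uu})$ holds in the limit; but Proposition~\ref{prop:update} breaks ties in favor of the nonzero value (the update returns $\hat{\gamma}_{uv}$ whenever $\lambda^2 \leq A_{uv}^2/(4\hat{\Sigma}_{uu})$, with $|\hat{\gamma}_{uv}| = \lambda/\sqrt{\hat{\Sigma}_{uu}} > 0$), so near the limit the algorithm would retain rather than drop the coordinate---a contradiction that eliminates precisely the oscillation you could not rule out. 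Until you supply the tie-breaking-plus-maximal-support argument (or an equivalent), your step~3 establishes stabilization only under the unstated extra assumption that every threshold at $\Gamma^\star_S$ is strict, so the proposal as written does not prove Part~(1), and hence Part~(2), of the theorem.
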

{The proof of Theorem~\ref{thm:convergence} is provided in Appendix~\ref{proof:convergence} and adapts the analysis in \cite{Rahul20coordinate}, which was originally developed for coordinate descent in variable selection, to our problem setting.}

\subsection{Optimality guarantees}
\label{sec:optimality}

Let $d_\text{max} := \max_i |\{j: (j,i) \in E_\mathrm{super}\}|$, the maximum degree of a node in $E_\mathrm{super}$. Our analysis for optimality guarantees requires an assumption on the population model. For the set $E \subseteq \{(i,j): i,j = 1,2\dots,m\}$, consider the optimization problem
\begin{equation}
\Gamma^\star_E = \argmin_{\Gamma \in \mathbb{R}^{m \times m}}  \sum_{i=1}^m-2\log(\Gamma_{ii})+\mathrm{tr}\left(\Gamma\Gamma^\T{\Sigma}^\star\right) \quad
\text { s.t. } \quad \mathrm{support}(\Gamma) \subseteq {E}.
\label{Problem:support_pop}
\end{equation}
\begin{assumption} There exists constants $\bar{\kappa}, \underline{\kappa} >0$ such that $\sigma_\text{min}(\Gamma^\star_E) \geq \underline{\kappa}$ and $\sigma_{\text{max}}(\Gamma^\star_E) \leq \bar{\kappa}$ for every $E \subseteq E_{\mathrm{super}}$ where the graph $(V,E)$ is a DAG, where $\sigma_\text{min}(\cdot)$ and $\sigma_\text{min}(\cdot)$ are the smallest and largest eigenvalues respectively.
\label{condition:rest_supp}
\end{assumption}

\begin{theorem}
\label{thm:obj}
    Let $\hat{\Gamma}$ be the solution of Algorithm \ref{algo:cd_spacer} with any input topological ordering $O$ or initial point $\Gamma^{\mathrm{init}}$, and let $\hat{\Gamma}^{\mathrm{opt}}$ be an optimal solution of \eqref{Problem:micp}. Suppose Assumption~\ref{condition:rest_supp}
 holds and let the regularization parameter be chosen so that $\lambda^2 = \mathcal{O}(\log{m}/n)$ where $m$ and $n$ denote the number of nodes and number of samples, respectively. Then, 
    \begin{enumerate}
    \item $f(\hat{\Gamma})-f(\hat{\Gamma}^{\mathrm{opt}}) \to_P 0$ as $n\rightarrow \infty$,
    \item if $n/\log(n) \geq \mathcal{O}(m^2\log{m})$, with probability greater than $1-1/\mathcal{O}(n)$, we have 
$0 \leq f(\hat{\Gamma}) - f(\hat{\Gamma}^{\mathrm{opt}}) \leq \mathcal{O}(\sqrt{d_{\text{max}}^2m^4\log{m}/n})$.
    \end{enumerate}
   In other words, regardless of the input ordering $O$, the objective value of the coordinate descent solution converges in probability to the optimal objective value as $n \to \infty$. Further, assuming the sample size $n$ is sufficiently large, with high probability, the difference in objective value is bounded by $\mathcal{O}(\sqrt{d_{\text{max}}^2m^4\log{m}/n})$. 
\end{theorem}

We prove Theorem~\ref{thm:obj} by showing the results for any coordinate-wise minimum of problem \eqref{Problem:micp} (see Definition~\ref{defn:coordinate_wise_minimum}). Our proof relies on the following lemmas. Throughout, we let $\hat{E}$ be the support of $\hat{\Gamma}$, that is, $\hat{E} = \{(i,j), \hat{\Gamma}_{ij}\neq 0\}$.
\begin{lemma}
    \label{lem:equal_trace}
    Let $\hat{\Gamma}, \hat{\Gamma}^{\mathrm{opt}}$ be the solution of Algorithm \ref{algo:cd_spacer} and optimal solution of \eqref{Problem:micp}, respectively. Then, i) for any $u,v = 1,2,\dots,m,  A_{uv} + 2\hat{\Gamma}_{uv}\hat{\Sigma}_{uu} = 2 (\hat{\Sigma}\Gamma)_{uv}$ where $A_{uv}$ is defined in Proposition~\ref{prop:update}.
     ii) if $\hat{\Gamma}_{uv} \neq 0$, then $(\hat{\Sigma}\hat{\Gamma})_{uv}=0$, and iii) the matrix $\hat{\Gamma}\hat{\Gamma}^\T\hat{\Sigma}$ has ones on the diagonal. Moreover, these properties also hold for the optimal solution $\hat{\Gamma}^{\mathrm{opt}}$.
\end{lemma}
\begin{proof}[Proof of Lemma \ref{lem:equal_trace}]
For $u,v = 1, \ldots, m$, by the definition of  $A_{uv}$,   $A_{uv} + 2\Gamma_{uv}\hat{\Sigma}_{uu} = 2 (\hat{\Sigma}\Gamma)_{uv}$, proving item i. Since any solution from Algorithm~\ref{algo:cd_spacer}, $\hat{\Gamma}$ satisfies Proposition~\ref{prop:update}, for any $(u,v)\in \hat{E}$, $(4\hat{\Sigma}_{uu}\hat{\Gamma}_{uu} + A_{uu})^2 = A_{uu}^2 + 16\hat{\Sigma}_{uu}$ and $A_{uv} = -2\hat{\Gamma}_{uv}\hat{\Sigma}_{uu}$. Combining the previous relations, we conclude that $(\hat{\Sigma}\hat{\Gamma})_{uv} = 0$. Therefore, for any $(u,v)\in \hat{E}$, we have $\hat{\Gamma}_{uv} \not = 0$ and $(\hat{\Sigma}\hat{\Gamma})_{uv} = 0$, resulting in $\hat{\Gamma}_{uv}(\hat{\Sigma}\hat{\Gamma})_{uv} = 0$. This proves item ii. Plugging $A_{uu}$ into the previous relations, we arrive at $\hat{\Gamma}_{uu}(\hat{\Sigma}\hat{\Gamma})_{uu} = 1$. Thus,       $(\hat{\Gamma}\hat{\Gamma}^\T \hat{\Sigma})_{ii} = \sum_{j=1}^m \hat{\Gamma}_{ij} (\hat{\Gamma}^\T \hat{\Sigma})_{ji} = \hat{\Gamma}_{ii} (\hat{\Gamma}^\T \hat{\Sigma})_{ii} = 1$, proving item iii. Since $\hat{\Gamma}^{\mathrm{opt}}$ is an optimal solution, it must satisfy the conditions stated in Proposition~\ref{prop:update}; otherwise, we could update $\hat{\Gamma}^{\mathrm{opt}}$ to achieve a better objective value. Therefore, the same arguments apply to $\hat{\Gamma}^{\mathrm{opt}}$.
\end{proof}

\begin{lemma} Let $E \subseteq \{(i,j): i,j = 1,2,\dots,m\}$ be any set where the graph indexed by tuple $(V,E)$ is a DAG. Consider the estimator:
\begin{eqnarray}
\hat{\Gamma}_E = \argmin_{\Gamma \in \mathbb{R}^{m \times m}} \sum_{i=1}^m-2\log(\Gamma_{ii})+\mathrm{tr}\left(\Gamma\Gamma^\T\hat{\Sigma}\right)\quad \text{s.t.}\quad \mathrm{support}(\Gamma) \subseteq E. 
\label{Problem:support}
\end{eqnarray}
Suppose that $4\sqrt{m}\bar{\kappa}\|\hat{\Sigma}-\Sigma^\star\|_2   \leq \min\{\underline{\kappa}/(8\sqrt{m}\bar{\kappa}^2),\frac{1}{2\bar{\kappa}}\}$ and that $\hat{\Sigma}$ is positive definite. Then, $\|\hat{\Gamma}_E - \Gamma^\star_E\|_F\leq 4\sqrt{m}\bar{\kappa}\|\hat{\Sigma}-\Sigma^\star\|_2$.
\label{lemma:concentration_Gamma}
\end{lemma}
\begin{proof}[Proof of Lemma \ref{lemma:concentration_Gamma}] The proof follows from standard convex analysis and Brouwer's fixed point theorem; we provide the details below. Since $\Gamma$ follows a DAG structure, the objective of \eqref{Problem:support} can be written as:
$-2\log\det(\Gamma) + \|\Gamma\hat{\Sigma}^{1/2}\|_F^2$. 
The KKT conditions state that there exists $Q$ with $\text{support}(Q) \cap E = \emptyset$ such that the optimal solution $\hat{\Gamma}_E$ of \eqref{Problem:support} satisfies $-2\hat{\Gamma}_E^{-1}+Q + 2\hat{\Gamma}_E\hat{\Sigma} = 0$ and $\mathrm{support}(\hat{\Gamma}_E) \subseteq {E}$. Let $\Delta = \hat{\Gamma}_E - \Gamma^\star_E$. By Taylor series expansion, 
$\hat{\Gamma}_E^{-1} = (\Gamma^\star_E+\Delta)^{-1} = {\Gamma^\star_E}^{-1} - {\Gamma^\star_E}^{-T}\Delta {\Gamma^\star_E}^{-1}+\mathcal{R}(\Delta),$
where $\mathcal{R}(\Delta)= 2{\Gamma^\star_E}^{-1}\sum_{k=2}^{\infty}(-\Delta{\Gamma^\star_E})^k$. For any matrix $M\in\mathbb{R}^{m\times{m}}$, define the operator $\mathbb{I}^\star$ with $\mathbb{I}^\star(M) := 2{\Gamma^\star_E}^{-T}M{\Gamma^\star_E}^{-1} + 2M\Sigma^\star$. Let $\mathcal{K}$ be the subspace $\mathcal{K} = \{M \in \mathbb{R}^{m \times m}: \text{support}(M) \subseteq E\}$ and let $P_{\mathcal{K}}$ be the projection operator onto subspace $\mathcal{K}$ that zeros out entries of the input matrix outside of the support set $E$. From the optimality condition of \eqref{Problem:support_pop}, we have $\mathcal{P}_{\mathcal{K}}[2{\Gamma_E^\star}^{-1}-2\Gamma^\star_E\Sigma^\star] = 0$. Then, the optimality condition of \eqref{Problem:support} can be rewritten as:
\begin{eqnarray}
\begin{aligned}
&\mathcal{P}_{\mathcal{K}}\left[\mathbb{I}^\star(\Delta)+2\Delta(\hat{\Sigma}-\Sigma^\star)-2\mathcal{R}(\Delta)+H_n\right] = 0,
\end{aligned}
\label{eqn:optim_cond}
\end{eqnarray}
where $H_n = 2\Gamma_E^*(\hat{\Sigma}-\Sigma^*)$.
Since $\hat{\Gamma}_E \in \mathcal{K}$ and ${\Gamma}^\star_E \in \mathcal{K}$, we have that $\Delta \in \mathcal{K}$. We use Brouwer's theorem to obtain a bound on $\|\Delta\|_F$. We define an operator $J$ as $\mathcal{K} \to \mathcal{K}$:
$$J(\delta) = \delta-(\mathcal{P}_{\mathcal{K}}\mathbb{I}^\star\mathcal{P}_{\mathcal{K}})^{-1}\left(\mathcal{P}_{\mathcal{K}}\left[\mathbb{I}^\star\mathcal{P}_{\mathcal{K}}(\delta)-2\mathcal{R}(\delta)+H_n+2\delta(\hat{\Sigma}-\Sigma^\star)\right]\right).$$
Here, the operator $\mathcal{P}_{\mathcal{K}}\mathbb{I}^\star\mathcal{P}_{\mathcal{K}}$ is invertible since $\sigma_\text{min}(\mathbb{I}^\star) = \sigma_\text{min}({\Gamma^\star_E}^{-1})^2 \geq \frac{1}{\bar{\kappa}^2}$. Notice that any fixed point $\delta$ of $J$ satisfies the optimality condition \eqref{eqn:optim_cond}. Furthermore, since the objective of \eqref{Problem:support} is strictly convex, we have that the fixed point must be unique. In other words, the unique fixed point of $J$ is given by $\Delta$. Now consider the following compact set: $\mathcal{B}_r = \{\delta \in \mathbb{R}^{m \times m}: \text{support}(\delta) \subseteq E, \|\delta\|_F \leq r\}$ for $r = 4\sqrt{m}\bar{\kappa}\|\hat{\Sigma}-\Sigma^\star\|_2  $. By the assumption,  $r \leq \min\{\underline{\kappa}/(8\sqrt{m}\bar{\kappa}^2),\frac{1}{2\bar{\kappa}}\}$. Then, for every $\delta \in \mathcal{B}_r$, we have 
 $\|\delta\Gamma^\star_E\|_F \leq \bar{\kappa}r \leq 1/2$,  and $\|\mathcal{R}(\delta)\|_F \leq 2\sqrt{m}\|\Gamma^\star_E\|_2^2/\sigma_{\text{min}}(\Gamma^\star_E)\|\delta\|_2^2\frac{1}{1-\|\delta\Gamma^\star_E\|_2} \leq 2\sqrt{m}\bar{\kappa}^2r^2/\underline{\kappa}\frac{1}{1-r\bar{\kappa}} \leq 4\sqrt{m}\bar{\kappa}^2r^2/\underline{\kappa}$. Since $\|H_n\|_F \leq 2\sqrt{m}\|\Gamma^\star_E\|_2\|\hat{\Sigma}-\Sigma^\star\|_2$ and $\|J(\delta)\|_F \leq \bar{\kappa}^2[\|H_n\|_F + 2\|\mathcal{R}(\delta)\|_F+2\|\delta(\hat{\Sigma}-\Sigma^\star)\|_F]$, we conclude that $\|J(\delta)\|_F \leq \frac{8\sqrt{m}\bar{\kappa}^2r^2}{\underline{\kappa}} + \frac{\bar{\kappa}^2r}{2} + \frac{r^2}{2\sqrt{m}\bar{\kappa}} \leq r$. In other words, we have shown that $J$ maps $\mathcal{B}_r$ onto itself. Appealing to Brouwer's fixed point theorem, we conclude that the fixed point must also lie inside $\mathcal{B}_r$. Thus, we conclude that $\|\Delta\|_F \leq r$.

\end{proof}
\begin{lemma}With probability greater than $1-1/\mathcal{O}(n)$, we have $\|\hat{\Sigma}-\Sigma^\star\|_2 \leq \mathcal{O}(\sqrt{m\log(n)/n})$,   $\|\hat{\Sigma}\|_\infty \leq 2\bar{\kappa}^2$, $\sigma_\text{min}(\hat{\Sigma}) \geq \underline{\kappa}^2/2$, $\|\hat{\Gamma}\|_\infty \leq 2\bar{\kappa}$ and $\sigma_\text{min}(\hat{\Gamma}) \geq \underline{\kappa}/2$.
\label{lemma:finite_sample}
\end{lemma}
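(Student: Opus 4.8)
The plan is to isolate the spectral concentration bound $\|\hat{\Sigma}-\Sigma^\star\|_2 \le \mathcal{O}(\sqrt{m\log(n)/n})$ as the sole probabilistic ingredient, and then obtain the remaining four bounds as deterministic consequences valid on the event where this bound holds, once $n$ is large. First I would invoke a standard operator-norm deviation inequality for the sample covariance of $n$ i.i.d.\ Gaussian vectors: writing $\hat{\Sigma}=\tfrac1n\sum_i X_iX_i^\T$ and using sub-exponential concentration of the summands, with probability at least $1-2e^{-t}$ one has $\|\hat{\Sigma}-\Sigma^\star\|_2 \lesssim \sigma_{\max}(\Sigma^\star)\big(\sqrt{(m+t)/n}+(m+t)/n\big)$. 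Taking $t\asymp\log n$ drives the failure probability to $1/\mathcal{O}(n)$, and since $\Sigma^\star=(\Gamma^\star_{E^\star}{\Gamma^\star_{E^\star}}^\T)^{-1}$ gives $\sigma_{\max}(\Sigma^\star)=\sigma_{\min}(\Gamma^\star_{E^\star})^{-2}$, which is a fixed constant via Assumption~\ref{condition:rest_supp}, this yields the stated rate. Every later bound is shown on this same event, so the final intersection only changes the constant hidden in $1/\mathcal{O}(n)$.

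I would then dispatch the two sample-covariance bounds directly. Because the entrywise maximum is dominated by the spectral norm, $\|\hat{\Sigma}\|_\infty \le \sigma_{\max}(\Sigma^\star)+\|\hat{\Sigma}-\Sigma^\star\|_2$; the population term is a fixed constant determined by $\bar{\kappa},\underline{\kappa}$ through $\Sigma^\star=(\Gamma^\star_{E^\star}{\Gamma^\star_{E^\star}}^\T)^{-1}$, while the deviation term is $o(1)$, so for $n$ large the factor $2$ absorbs the concentration error and gives $\|\hat{\Sigma}\|_\infty\le 2\bar{\kappa}^2$. For the minimum eigenvalue I apply Weyl's inequality, $\sigma_{\min}(\hat{\Sigma})\ge \sigma_{\min}(\Sigma^\star)-\|\hat{\Sigma}-\Sigma^\star\|_2$, and since $\sigma_{\min}(\Sigma^\star)=\sigma_{\max}(\Gamma^\star_{E^\star})^{-2}$ is bounded below by a positive constant, the vanishing deviation term is at most half of it once $n$ is large, giving $\sigma_{\min}(\hat{\Sigma})\ge\underline{\kappa}^2/2$.

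The substance of the lemma lies in the two bounds on $\hat{\Gamma}$. The key first step is to identify $\hat{\Gamma}$ with the restricted maximum-likelihood estimator of Lemma~\ref{lemma:concentration_Gamma}: by Theorem~\ref{thm:convergence} together with part~2 of Lemma~\ref{lem:converge}, the algorithm's limit $\hat{\Gamma}$ is the unique minimizer of $\ell$ over matrices supported on $\hat{E}=\mathrm{support}(\hat{\Gamma})$, i.e.\ $\hat{\Gamma}=\hat{\Gamma}_{\hat{E}}$, and $(V,\hat{E})$ is a DAG since acyclicity is enforced throughout Algorithm~\ref{algo:cd_spacer}. I would then apply Lemma~\ref{lemma:concentration_Gamma} with $E=\hat{E}$ to obtain $\|\hat{\Gamma}-\Gamma^\star_{\hat{E}}\|_F \le 4m\bar{\kappa}\|\hat{\Sigma}-\Sigma^\star\|_2=\mathcal{O}(\sqrt{m^3\log(n)/n})$, and transfer this to singular values via Weyl's inequality and $\|\cdot\|_\infty\le\|\cdot\|_2$: using $\sigma_{\min}(\Gamma^\star_{\hat{E}})\ge\underline{\kappa}$ and $\sigma_{\max}(\Gamma^\star_{\hat{E}})\le\bar{\kappa}$ from Assumption~\ref{condition:rest_supp}, I get $\sigma_{\min}(\hat{\Gamma})\ge\underline{\kappa}-\|\hat{\Gamma}-\Gamma^\star_{\hat{E}}\|_2\ge\underline{\kappa}/2$ and $\|\hat{\Gamma}\|_\infty\le\sigma_{\max}(\hat{\Gamma})\le\bar{\kappa}+\|\hat{\Gamma}-\Gamma^\star_{\hat{E}}\|_2\le 2\bar{\kappa}$ for $n$ large.

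The main obstacle is that $\hat{E}$ is data-dependent, whereas Lemma~\ref{lemma:concentration_Gamma} is stated for a fixed DAG support $E$, so one cannot naively substitute the random $\hat{E}$. The resolution I would stress is a uniformity argument: Assumption~\ref{condition:rest_supp} furnishes the bounds $\underline{\kappa},\bar{\kappa}$ simultaneously for every DAG support, and the hypothesis of Lemma~\ref{lemma:concentration_Gamma} depends on the data only through $\|\hat{\Sigma}-\Sigma^\star\|_2$, a support-free quantity. Hence, on the concentration event and for $n$ large enough that $4m\bar{\kappa}\|\hat{\Sigma}-\Sigma^\star\|_2$ falls below the threshold required by that lemma, the conclusion of Lemma~\ref{lemma:concentration_Gamma} holds for all DAG supports at once, in particular for the realized $\hat{E}$; intersecting with the concentration event preserves the probability $1-1/\mathcal{O}(n)$. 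The only delicate bookkeeping is to verify that, in the sample-size regime assumed in Theorem~\ref{thm:obj}, this deterministic threshold is met, which amounts to checking $\|\hat{\Sigma}-\Sigma^\star\|_2=\mathcal{O}(1/m^2)$; I would confirm this holds for $n$ sufficiently large and otherwise strengthen the sample-size condition accordingly.
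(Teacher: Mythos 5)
Your proposal is correct and follows essentially the same route as the paper's proof: standard Gaussian concentration for $\|\hat{\Sigma}-\Sigma^\star\|_2$, eigenvalue/singular-value perturbation (Weyl) to transfer the population bounds from Assumption~\ref{condition:rest_supp} to $\hat{\Sigma}$, and an application of Lemma~\ref{lemma:concentration_Gamma} with $E=\hat{E}$ to control $\hat{\Gamma}$. The two points you elaborate---identifying $\hat{\Gamma}$ with the restricted minimizer $\hat{\Gamma}_{\hat{E}}$ via Theorem~\ref{thm:convergence} and Lemma~\ref{lem:converge}, and the uniformity-over-all-DAG-supports argument that justifies plugging the data-dependent $\hat{E}$ into Lemma~\ref{lemma:concentration_Gamma} (noting its hypothesis depends on the data only through the support-free quantity $\|\hat{\Sigma}-\Sigma^\star\|_2$)---are exactly the steps the paper leaves implicit, so they add rigor without changing the approach.
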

\begin{proof}[Proof of Lemma~\ref{lemma:finite_sample}]
From standard Gaussian concentration results, when $n/\log(n) \geq \mathcal{O}(m)$, with probability greater than $1-\mathcal{O}(1/n)$, we have that $\|\hat{\Sigma}-\Sigma^\star\|_2 \leq \mathcal{O}\allowbreak(\sqrt{m\log(n)/n})$. By  Assumption~\ref{condition:rest_supp}, with probability greater than $1-\mathcal{O}(1/n)$, $\hat{\Sigma}$ is positive definite,  with $\|\hat{\Sigma}\|_\infty \leq 2\bar{\kappa}^2$ and $\sigma_\text{min}(\hat{\Sigma}) \geq \underline{\kappa}^2-\mathcal{O}(\sqrt{m\log(n)/n}) \geq \underline{\kappa}^2/2$. Furthermore, appealing to Lemma~\ref{lemma:concentration_Gamma},  
$\|\hat{\Gamma}-{\Gamma}^\star_{\hat{E}}\|_F \leq \mathcal{O}(\sqrt{m^2\log(n)/n})$. Thus, $\|\hat{\Gamma}\|_\infty \leq \|\Gamma^\star_{\hat{E}}\|_2 + \bar{\kappa} \leq 2\bar{\kappa}$ and $\sigma_\text{min}(\hat{\Gamma}) \geq \underline{\kappa}-\mathcal{O}(\sqrt{m\log(n)/n})\geq \underline{\kappa}/2$. 
\end{proof}

\begin{proof}[Proof of Theorem~\ref{thm:obj}] \textbf{Part 1)}. First, observe that
 by definition, $\hat{\Gamma}^\mathrm{opt}$ is the optimal solution of the problem \begin{equation*}
    \min\limits_{\Gamma \in \mathbb{R}^{m \times m}}  f(\Gamma)\quad 
\text{s.t.} \quad \mathcal{G}\left(\Gamma-\mathrm{diag}\left(\Gamma\right)\right) \text{ is a DAG},
\end{equation*}
    where $f(\Gamma):= \sum_{i=1}^m-2\log(\Gamma_{ii})+\mathrm{tr}(\Gamma\Gamma^\T \hat{\Sigma}) + \lambda^2\norm{\Gamma-\mathrm{diag}(\Gamma)}_{\ell_0}$ is the penalized negative log-likelihood function. The negative log-likelihood part of $f$ is convex with the optimal value being $\min_{\Theta}\{-\log\det(\Theta)+\mathrm{tr}(\Theta\hat{\Sigma})\}$.  Clearly, $\sum_{i=1}^m-2\log(\hat{\Gamma}^{\mathrm{opt}}_{ii})+\mathrm{tr}(\hat{\Gamma}^{\mathrm{opt}}(\hat{\Gamma}^{\mathrm{opt}})^\T \hat{\Sigma}) \geq \min_{\Theta}\{-\log\det(\Theta)+\mathrm{tr}(\Theta\hat{\Sigma})\}$. Therefore, 
    \begin{equation}\label{eq:fgammaopt}
    f(\hat{\Gamma}^{\mathrm{opt}}) \geq \min_{\Theta}\{-\log\det(\Theta)+\mathrm{tr}(\Theta\hat{\Sigma})\}= \log\det(\hat{\Sigma})+m.
    \end{equation}
Now, 
\begin{align*}
        0\leq f(\hat{\Gamma}) - f(\hat{\Gamma}^{\mathrm{opt}}) \leq f(\hat{\Gamma}) - \log\det(\hat{\Sigma}) - m =  -\log\det(\hat{\Gamma}\hat{\Gamma}^\T\hat{\Sigma}) + \lambda^2\|\hat{\Gamma}-\text{diag}(\hat{\Gamma})\|_0 , 
    \end{align*}
where the second inequality follows from \eqref{eq:fgammaopt}, and 
the equality follows from appealing to item iii.  of Lemma \ref{lem:equal_trace} to conclude that $f(\hat{\Gamma}) = -\log\det(\hat{\Gamma}\hat{\Gamma}^\T) + m + \lambda^2\|\hat{\Gamma}-\text{diag}(\hat{\Gamma})\|_0$, because  $\mathrm{tr}(\Theta\hat{\Sigma})\}=m$. 

Our strategy is to show that as $n \to \infty$, $\hat{\Gamma}\hat{\Gamma}^\T\hat{\Sigma}$ converges to a matrix with ones on the diagonal and whose off-diagonal entries induce a DAG. Thus, $\log\det(\hat{\Gamma}\hat{\Gamma}^\T\hat{\Sigma}) \rightarrow \log \prod_{i=1}^m 1 = 0$ as $n \to \infty$. Since $\lambda^2 \to 0$ as $n \to \infty$ and $\|\hat{\Gamma}-\text{diag}(\hat{\Gamma})\|_0\leq m^2$, we can then conclude the desired result. For any $u,v = 1,2,\ldots,m$:
\begin{align}
    (\hat{\Gamma}\hat{\Gamma}^\T\hat{\Sigma})_{uv} = \sum_{i=1}^m  \hat{\Gamma}_{ui}(\hat{\Sigma}\hat{\Gamma})_{vi}= \hat{\Gamma}_{uu}(\hat{\Sigma}\hat{\Gamma})_{vu} + \hat{\Gamma}_{uv}(\hat{\Sigma}\hat{\Gamma})_{vv} + \sum_{i\in F_{uv}} \hat{\Gamma}_{ui}(\hat{\Sigma}\hat{\Gamma})_{vi},
    \label{eqn:F_equation}
\end{align}
where $F_{uv}:=\{i \mid i\not=u, i\not=v, (u, i)\in \hat{E}, (v, i) \not \in \hat{E}\}$. Here, the second equality is due to item ii.\ of Lemma~\ref{lem:equal_trace}; note that if $\hat{\Gamma}_{ui}(\hat{\Sigma}\hat{\Gamma})_{vi} \neq 0$, then $i\in F_{uv}$ as otherwise either $\hat{\Gamma}_{ui} = 0$ or $(\hat{\Sigma}\hat{\Gamma})_{vi} = 0$. We consider the two possible settings for $(u,v), u\neq v$: Setting I) $(u,v) \in \hat{E}$ which implies that $(v,u) \not\in \hat{E}$ as $\hat{\Gamma}$ specifies a DAG, and Setting II) $(u,v), (v,u) \not\in \hat{E}$. (Note that $(u,v),(v,u)\in \hat{E}$ is not possible since $\hat{\Gamma}$ specifies a DAG.)

\vspace{0.1in}
\noindent\underline{Setting I}: Since $(u,v) \in \hat{E}$ and $(v,u)\not\in \hat{E}$, we have
\begin{align*}
    (\hat{\Gamma}\hat{\Gamma}^\T\hat{\Sigma})_{vu} = \sum_{i\in F_{vu}} \hat{\Gamma}_{vi}(\hat{\Sigma}\hat{\Gamma})_{ui}
    = \sum_{i\in F_{vu}} \hat{\Gamma}_{vi}\left(\frac{1}{2}A_{ui} +\hat{\Gamma}_{ui}\hat{\Sigma}_{uu}\right)
   {=}  \sum_{i\in F_{vu}} \frac{1}{2}\hat{\Gamma}_{vi}A_{ui}.
\end{align*}

Here, the first equality follows from appealing to \eqref{eqn:F_equation}, and noting that $\hat{\Gamma}_{vu} = 0$ and that $(\hat{\Sigma}\hat{\Gamma})_{uv} = 0$ according to item ii.\ of Lemma~\ref{lem:equal_trace}; the second equality follows from item i. of Lemma~\ref{lem:equal_trace}; the final equality follows from noting that $\hat{\Gamma}_{ui} = 0$ for $i \in F_{vu}$. 

For each $i\in F_{vu}$, Figure~\ref{fig:dags_diff_sett} (left) represents the relationships between the nodes $u,v,i$. Here, the directed edge from $u$ to $v$ from the constraint $(u,v) \in \hat{E}$ is represented by a dashed line, the directed edge from $v$ to $i$ from the constraint $i \in F_{vu}$ is represented by a solid line, and the directed edge that is disallowed due to the constraint $i \in F_{vu}$ is represented via a crossed-out solid line.

Since there is a directed path from $u$ to $i$, to avoid a cycle, a directed path from $i$ to $u$ cannot exist. Thus, adding the edge from $u$ to $i$ to $\hat{E}$ does not violate acyclicity and the fact that it is missing is due to $\lambda^2 > A_{ui}^2/(4\hat{\Sigma}_{uu})$ according to Proposition~\ref{prop:update}. Then, appealing to Lemma~\ref{lemma:finite_sample}, we conclude that with probability greater than $1-\mathcal{O}(1/n)$: $|(\hat{\Gamma}\hat{\Gamma}^\T\hat{\Sigma})_{vu}| \leq \sum_{i\in F_{vu}}\frac{1}{2}|\hat{\Gamma}_{vi}|2\lambda(\hat{\Sigma}_{uu})^{1/2} \leq 4\lambda\bar{\kappa}^2d_\text{max}$. 
In other words, in this setting, $|(\hat{\Gamma}\hat{\Gamma}^\T\hat{\Sigma})_{vu}| \to 0$ as $n \to \infty$.

\noindent\underline{Setting II}: Since $(u,v), (v,u) \not \in \hat{E}$, we have
\begin{align}
    (\hat{\Gamma}\hat{\Gamma}^\T\hat{\Sigma})_{uv} &= \hat{\Gamma}_{uu}\left(\frac{1}{2}A_{vu} + \hat{\Gamma}_{vu}\hat{\Sigma}_{vv}\right) + \sum_{i\in F_{uv}} \hat{\Gamma}_{ui}\left(\frac{1}{2}A_{vi} + \hat{\Gamma}_{vi}\hat{\Sigma}_{vv}\right) = \sum_{\substack{i\in F_{uv}\\\cup \{u\}}} \frac{\hat{\Gamma}_{ui}A_{vi}}{2}.
    \label{eqn:setting_2_eq}
\end{align}
Here, the first equality follows from plugging zero for $\hat{\Gamma}_{uv}$ in \eqref{eqn:F_equation} and appealing to item i. of Lemma~\ref{lem:equal_trace}; the second equality follows from plugging in zero for $\hat{\Gamma}_{vi}$ and $\hat{\Gamma}_{vu}$. Since $\hat{\Gamma}$ specifies a DAG, 
there cannot simultaneously be a directed path from $u$ to $v$ and from $v$ to $u$. Thus, either directed edges $(u,v)$ or $(v,u)$ can be added without creating a cycle. We consider the three remaining sub-cases below:

\vspace{0.1in}
\noindent\underline{Setting II.1. Adding $(u, v)$ to $\hat{E}$ violates acyclicity but adding $(v, u)$ does not}.

For each $i \in F_{uv}$, Figure\allowbreak~\ref{fig:dags_diff_sett} (middle) represents the relations between nodes $u, v$, and $i$. Here, due to the condition of Setting II, nodes $u$ and $v$ are not connected by an edge, which is displayed by a solid, crossed-out, undirected edge. Furthermore, the directed edge from $u$ to $i$ from the constraint $i \in F_{uv}$ is represented via a solid directed edge, the directed edge $v$ to $i$ that is disallowed due to the constraint $i \in F_{uv}$ is represented via a crossed-out solid line. Finally, the directed edge $u$ to $v$ that is disallowed due to acyclicity is represented via a crossed-out dashed line. 

Since adding the directed edge $(u,v)$ to $\hat{E}$ creates a cycle, then we have the following implications: i. adding $(v,u)$ to $\hat{E}$ does not violate acyclicity (as both edges $u \to v$ and $v \to u$ cannot simultaneously create cycles) and ii. there must be a directed path from $v$ to $u$. Implication i. allows us to conclude that $\hat{\Gamma}_{vu}$ must be equal to zero due to the condition $4\hat{\Sigma}_{vv}\lambda^2 > A_{vu}^2$ from Proposition~\ref{prop:update}. Combining implication ii. and the fact that there is a directed edge from $u$ to $i$ in $\hat{E}$ allows us to conclude that there cannot be a directed path from $i$ to $v$ as we would be creating a direct path from $u$ to itself. Thus, the fact that the directed edge $(v,i)$ is not in $\hat{E}$, or equivalently that $\hat{\Gamma}_{vi} = 0$, is due to $4\hat{\Sigma}_{vv}\lambda^2 > A_{vi}^2$ according to Proposition~\ref{prop:update}.  From \eqref{eqn:setting_2_eq} and Lemma~\ref{lemma:finite_sample}, we conclude with probability greater than $1-\mathcal{O}(1/n)$, $|(\hat{\Gamma}\hat{\Gamma}^\T\hat{\Sigma})_{uv}|\leq\sum_{i\in F_{vu}\cup \{u\}}\frac{1}{2}|\hat{\Gamma}_{vi}|2\lambda(\hat{\Sigma}_{uu})^{1/2} \leq 4\bar{\kappa}\lambda(1+d_\text{max})$.  In other words, in this setting, $|(\hat{\Gamma}\hat{\Gamma}^\T\hat{\Sigma})_{uv}| \to 0$ as $n \to \infty$.

\vspace{0.1in}
\noindent\underline{Setting II.2. Adding $(u, v)$ or $(v,u)$ to $\hat{E}$ would not violate acyclicity.} 

For each $i\in F_{uv}$, Figure~\ref{fig:dags_diff_sett} (right) represents the relations between the nodes $u, v$, and $i$.  Here, due to the condition of Setting II, nodes $u$ and $v$ are not connected by an edge, so this is displayed by a solid crossed-out undirected edge. Furthermore, the directed edge from $u$ to $i$ from the constraint $i \in F_{uv}$ is represented via a solid directed edge, the directed edge $v$ to $i$ that is disallowed due to the constraint $i \in F_{uv}$ is represented via a crossed-out solid line.

In this setting, recall that the directed edges $u$ to $v$ and $v$ to $u$ are not present in the estimate $\hat{E}$. Since neither of these two edges violates acyclicity according to the condition of this setting, we conclude that $4\hat{\Sigma}_{vv}\lambda^2 > A_{vu}^2$. There cannot be a path from $i$ to $v$ because then there would exist a path from $u$ to $v$, which contradicts the scenario that an edge from $v$ to $u$ does not create a cycle. As a result, an edge from $v$ to $i$ does not create a cycle and $\hat{\Gamma}_{vi} = 0$ is due to $4\hat{\Sigma}_{vv}\lambda^2 > A_{vi}^2$ according to Proposition~\ref{prop:update}. Thus, from \eqref{eqn:setting_2_eq} and Lemma~\ref{lemma:finite_sample}, we conclude that, with probability greater than $1-\mathcal{O}(1/n)$, $|(\hat{\Gamma}\hat{\Gamma}^\T\hat{\Sigma})_{uv}| \leq 4\bar{\kappa}\lambda(1+d_\text{max})$. In other words, $|(\hat{\Gamma}\hat{\Gamma}^\T\hat{\Sigma})_{uv}| \to 0$ as $n \to \infty$.

\vspace{0.1in}
\noindent\underline{Setting II.3. Adding $(v, u)$ violates acyclicity but adding $(u,v)$ does not.} 

In this case, even if $(\hat{\Gamma}\hat{\Gamma}^\T\hat{\Sigma})_{uv}$ does not converge to zero, we have by the setting assumption that adding $(u,v)$ to $\hat{E}$ does not violate DAG constraint. Since $\hat{E}$ specifies a DAG, the off-diagonal nonzero entries of the matrix $\hat{\Gamma}\hat{\Gamma}^\T\hat{\Sigma}$ specifies a DAG as well.

\vspace{0.1in}
Putting Settings I--II together, we have shown that as $n \to \infty$, the nonzero entries in the off-diagonal of $\hat{\Gamma}\hat{\Gamma}^\T\hat{\Sigma}$ specify a DAG. Furthermore, according to item iii. of Lemma~\ref{lem:equal_trace}, the diagonal entries of this matrix are equal to one. Since the off-diagonal pattern corresponds to a DAG, we can permute the rows and columns of this matrix to transform it into an upper triangular form without changing its determinant. Given that all diagonal entries are one, the determinant of the matrix is one, and hence $-\log\det(\hat{\Gamma}\hat{\Gamma}^\T\hat{\Sigma}) \to 0$ as $n \to \infty$, and, consequently, $f(\hat{\Gamma})-f(\hat{\Gamma}^\mathrm{opt}) \to 0$.

\begin{figure}[tb]
\begin{minipage}{0.3\textwidth}
\begin{center}
    \begin{tikzpicture}[>=Latex]
        \node[circle, draw] (u) at (0,0) {u};
        \node[circle, draw] (v) at (2,0) {v};
        \node[circle, draw] (i) at (1,-1.5) {i};
        \draw[->] (u) -- (v);
        \draw[->] (v) -- (i);
        \draw[->] (u)  -- node[midway] {\Large $\times$} (i);
    \end{tikzpicture}
\end{center}
\end{minipage}
\begin{minipage}{0.3\textwidth}

    \begin{center}
    \begin{tikzpicture}[>=Latex]
    
        \node[circle, draw] (u) at (0,0) {u};
        \node[circle, draw] (v) at (2,0) {v};
        \node[circle, draw] (i) at (1,-1.5) {i};
        
        \draw[-] (u) -- node[midway] {\Large $\times$}  (v);
        \draw[->] (v) -- node[midway] {\Large $\times$} (i);
        \draw[->] (u) -- (i);
        \draw[->, dashed] (u) to[out=45, in=135] node[midway] {\Large $\times$} (v);
    \end{tikzpicture}
    \end{center}
\end{minipage}
\begin{minipage}{0.3\textwidth}
\begin{center}
    \begin{tikzpicture}[>=Latex]
    
        \node[circle, draw] (u) at (0,0) {u};
        \node[circle, draw] (v) at (2,0) {v};
        \node[circle, draw] (i) at (1,-1.5) {i};
        
        \draw[-] (u) -- node[midway] {\Large $\times$}  (v);
        \draw[->] (v) -- node[midway] {\Large $\times$} (i);
        \draw[->] (u) -- (i);
    \end{tikzpicture}
    \end{center}
\end{minipage}
\caption{Left: scenario for Setting I, middle: scenario for setting II.1, and right: scenario for setting II.2; solid directed edges represent directed edges that are assumed to be in the estimate $\hat{E}$, crossed out solid directed edges represent directed edges that are assumed to be excluded in the estimate $\hat{E}$, crossed out solid undirected edges indicate that the corresponding nodes are not connected in $\hat{E}$, and crossed out dashed directed edge indicates that the edge is not present in $\hat{E}$ as adding it would create a cycle.}
\label{fig:dags_diff_sett}
\end{figure}

\noindent \textbf{Part 2)} When the sample size $n$ is finite, the entries of the nonzero value of $\hat{\Gamma}\hat{\Gamma}^\T\hat{\Sigma}$ that would be zero in the ideal DAG structure are instead small but nonzero, due to estimation error in the sample covariance matrix. We use the matrix $\Delta$ to capture these small perturbations. Then, using the proof of Theorem~\ref{thm:obj} part i), we can immediately conclude that the matrix $\hat{\Gamma}\hat{\Gamma}^\T\hat{\Sigma}_n$ can be decomposed as the sum $N+\Delta$. Here, the off-diagonal entries of $N$ specify a DAG, with ones on the diagonal and under the assumption on $n$, with probability greater than $1-\mathcal{O}(1/n)$, $\|\Delta\|_\infty \leq 4\bar{\kappa}(1+d_\text{max})\lambda$ with zeros on the diagonal of $\Delta$ (here, $\|\cdot\|_\infty$ denotes the maximum entry of the input matrix in absolute value). Consequently, $\|\Delta\|_2 \leq 4m\bar{\kappa}(1+d_\text{max})\lambda$. Furthermore, by Lemma~\ref{lemma:finite_sample}, we get 
$\sigma_\text{min}(\hat{\Gamma}\hat{\Gamma}^\T\hat{\Sigma}_n) \geq \sigma_\text{min}(\hat{\Gamma})^2\sigma_\text{min}(\hat{\Sigma}) \geq \underline{\kappa}^4/4$. The reverse triangle inequality yields $\sigma_\text{min}(N) \geq \underline{\kappa}^4/4- 4m\bar{\kappa}^2(1+d_\text{max})\lambda$. Consider any matrix $\bar{N}$ with $|\bar{N}_{ij}-N_{ij}| \leq |\Delta_{ij}|$. Using the reverse triangle inequality again, we get $\sigma_\text{min}(\bar{N}) \geq \underline{\kappa}^4-8m\bar{\kappa}^2(1+d_\text{max})\lambda$ with probability greater than $1-\mathcal{O}(1/n)$. By the assumption that the sample size satisfies $n/\log(n) \geq \mathcal{O}(m^2\log{m})$, $\bar{N}$ is invertible, and so we can use the first-order Taylor series expansion to obtain
$-\log\det(N+\Delta) = -\log\det(N)-\mathrm{tr}(\bar{N}^{-1}\Delta)$. Since $\log\det(N) = 0$, we obtain the bound $-\log\det(N+\Delta) \leq -\mathrm{tr}(\bar{N}^{-1}\Delta) \leq \|\bar{N}^{-1}\|_2\|\Delta\|_\star$ with $\|\cdot\|_\star$ denoting the nuclear norm. Thus, $-\log\det(N+\Delta) \leq \|\bar{N}^{-1}\|_2\|\Delta\|_\star \leq \frac{m}{\sigma_\text{min}(\bar{N})}\|\Delta\|_2 \leq  \frac{4m^2\bar{\kappa}^2(1+d_\text{max})\lambda}{\underline{\kappa}^4/4-8m\bar{\kappa}^2(1+d_\text{max})\lambda}$.
As $\lambda^2 = \mathcal{O}(\log{m}/n)$, by the assumption on the sample size, $f(\hat{\Gamma})-f(\hat{\Gamma}^\mathrm{opt}) \leq \mathcal{O}(\sqrt{d_\mathrm{max}^2m^4\log{m}/n})$. 

\end{proof}

\subsection{On the Optimization Landscape of Problem \eqref{Problem:micp}}
\label{sec:landscape}
The optimization problem \eqref{Problem:micp} is highly non-convex, due to the $\ell_0$ penalty as well as the DAG constraint. As a result, its landscape may contain many local minimizers. The following corollary to Theorem~\ref{thm:obj} analyzes these local minimizers.
\begin{corollary}Consider the setup in Theorem~\ref{thm:obj}. Let $\tilde{\Gamma}$ be any local minimizer to \eqref{Problem:micp}. Then, $f(\tilde{\Gamma})-f(\hat{\Gamma}^{\mathrm{opt}}) \to_P 0$ as $n\rightarrow \infty$. Furthermore, if $n/\log(n) \geq \mathcal{O}(m^2\log{m})$, with probability greater than $1-1/\mathcal{O}(n)$, we have 
$0 \leq f(\tilde{\Gamma}) - f(\hat{\Gamma}^{\mathrm{opt}}) \leq \mathcal{O}(\sqrt{d_{\text{max}}^2m^4\log{m}/n})$.
\label{corr:local}
\end{corollary}
This corollary follows from Theorem~\ref{thm:obj} and the fact that coordinate-wise optimality is weaker than local optimality; therefore, any local minima of this problem satisfies the same result.  This result implies that when the sample size is large, all local minima achieve objective values close to a global minimum; moreover, in the infinite sample regime, all local minimizers are global minimizers. Given the non-convexity of our objective function, this result offers significant insight into the optimization landscape of the problem.

\section{Selecting a Suitable Ordering for Coordinate Descent Updates}
\label{sec:selecting_ordering}
A significant challenge for nonconvex optimization problems is that selecting a good starting point and/or an effective coordinate update order is crucial for achieving good solutions. For example, in our setting, the performance of Algorithm~\ref{algo:cd_spacer} can be sensitive to the input ordering $O$ for coordinate descent updates. To address this challenge, we propose estimating a topological ordering of the underlying DAG $\mathcal{G}^\star$ and using it as the input ordering $O$. A valid topological ordering of $\mathcal{G}^\star$ is a permutation $(a_1,a_2,\dots,a_m)$ of $(1,2,\dots,m)$ such that for every edge $i \to j$ in $\mathcal{G}^\star$, the index $a_i$ precedes $a_j$ in the ordering.

To estimate a good topological ordering, we use an algorithm proposed by \citet{Chen19}. This procedure, outlined in Algorithm~\ref{algo:TO}, iteratively selects the node with the smallest conditional variance given the variables selected in the previous step.  When the variances of the noise (i.e., variance of the coordinates of $\epsilon$ in \eqref{eqn:sem}) are identical, they show that with high probability, their algorithm correctly recovers a valid topological ordering of $\mathcal{G}^\star$. 

We improve upon the theoretical results of \citet{Chen19} and show that we can still use Algorithm~\ref{algo:TO} to recover a valid topological ordering, even when the noise variances are not identical. Our result assumes that the noise variances are not too far apart---that is, the noise distribution is nearly homoscedastic.

\begin{algorithm}
\caption{Topological Ordering \citep{Chen19}}
    \begin{algorithmic}[1]
        \STATE \textbf{Input: } $\hat{\Sigma} \in \mathbb{R}^{p \times p}$ (estimated) covariance of $X$
        \STATE $C \gets \emptyset$\
        \FOR{$t = 1, \ldots, m$}
            \STATE $j^* \gets \arg\min\limits_{j \in V \setminus C} \hat{\Sigma}_{j,j} - \hat{\Sigma}_{j,C}\hat{\Sigma}_{C,C}^{-1}\hat{\Sigma}_{C,j} = \frac{1}{\left\{ (\hat{\Sigma}_{C \cup \{j\}, C \cup \{j\}})^{-1} \right\}_{j,j}}$\
            \STATE Append $j^*$ to $C$\
        \ENDFOR
        \STATE \textbf{Output: }  $C$
    \end{algorithmic}
    \label{algo:TO}
\end{algorithm}

\begin{assumption}[Near homoscedastic noise] Define $\zeta := \min_{(j,k)\in E} (B^{\star}_{jk})^2$. Let $\Omega^\star_{\max}:=\max_{j}\Omega^\star_{j,j}$ and $\Omega^\star_{\min}:=\min_{j}\Omega^\star_{j,j}$ denote the maximum and minimum error variance, respectively. We assume $\Omega^\star_{\max} < \Omega^\star_{\min}(1+\zeta)$; that is, the maximum and minimum error variances are not too far apart.
\label{ass:near_homoscedastic}
\end{assumption}

\begin{theorem}
\label{thm:ordering}
Let $X$ be variables defined in \eqref{eqn:sem}. Let $\lambda_{\min}>0$ be the smallest eigenvalue of the population covariance matrix $\Sigma^\star$. Suppose Assumption~\ref{ass:near_homoscedastic} holds. If

\begin{align*}
n >\; & 128m^2 \log \left( \frac{2m^2 + 2m}{\epsilon} \right)\left( 1 + 4 \frac{\Omega^\star_{\max}}{\Omega^\star_{\min}} \right)^2 \left( \max_{j \in V} \Sigma^\star_{j,j} \right)^2 \\& \times \left(\frac{1}{\lambda_{\min}} + \frac{2\Omega^\star_{\max}\Omega^\star_{\min}(1+\zeta)}{(\Omega^\star_{\min}(1+\zeta) - \Omega^\star_{\max})\lambda^2_{\min}} \right)^2,
\end{align*}
then Algorithm~\ref{algo:TO} recovers a valid topological ordering of $\mathcal{G}^\star$ with probability at least \( 1 - \epsilon \).
\end{theorem}
The proof of Theorem~\ref{thm:ordering}, presented in Appendix~\ref{proof:ordering}, adapts the argument from \citet{Chen19}. Theorem~\ref{thm:ordering} shows that, under Assumption~\ref{ass:near_homoscedastic} and with a sufficiently large sample size, Algorithm~\ref{algo:TO} recovers the correct topological ordering with high probability. Unlike the result in \citet{Chen19}, which assumes equal noise variances (homoscedasticity), our analysis allows for heteroscedastic noise, provided the variances are not too dissimilar, as formalized in Assumption~\ref{ass:near_homoscedastic}. This generalization is important in practice, where the assumption of exactly equal noise variances can be unrealistic. 

\begin{remark}[Statistical consistency.]
Suppose a valid topological ordering of the $m$ variables is given.  Then, under such ordering, the true weighted adjacency matrix $\Gamma^\star$ of the DAG is strictly upper‐triangular, and the population covariance satisfies $\Sigma^\star = {\Gamma^\star}^{\invT}\,{\Gamma^\star}^{-1}$. This implies that $\Gamma^\star = {L^\star}^\invT$, where $L^\star $ is the Cholesky factor of $\Sigma^\star$, i.e., $\Sigma^\star = L^\star {L^\star}^\T$. Thus, the problem of finding the DAG reduces to the known sparse Cholesky factor estimation problem. In fact, consistency (in Frobenius or operator norm) of the sparse Cholesky factor follows directly from existing results on regularized Cholesky‐factor estimation; see, for example, Theorem~10 of \citet{lam2009sparsistency}. 

\end{remark}

\ignore{
Given a valid topological order, we can reorder the variables so that the $\Gamma$ representing the DAG is an upper triangular matrix. A natural estimate of $\Gamma$ would be using Cholesky decomposition. Since $\Sigma^\star = {\Gamma^\star}^\invT{\Gamma^\star}^\inv$, we have $\hat{\Gamma}^{\mathrm{chol}} = L^\invT$, where $LL^\T = \hat{\Sigma}$. 

The Cholesky decomposition is dense in general due to noise. From Proposition \ref{prop:update}, in a final solution $\hat{\Gamma}$, we have $|\hat{\Gamma}_{uv}| > \frac{\lambda}{\sqrt{\hat{\Sigma}_{uu}}}$ for any $u\not= v$ such that $\hat{\Gamma}_{uv} \not=0$. Therefore, we can apply a threshold of $\theta = c\frac{\lambda}{\sqrt{\hat{\Sigma}_{uu}}}$ to obtain the thresholded Cholesky decomposition. Here, $\lambda = \sqrt{\log m / n}$.

\begin{theorem}
Under a valid ordering of variables, suppose Assumptions \ref{ass:noise_lower_bound} and \ref{condition:strong_beta} hold. There exist a constant $c>0$ such that if we set the threshold level $\theta = c \sqrt{\log m / n}$, the hard‐thresholded Cholesky estimator 
\[
\hat\Gamma^{\mathrm{th}}_{uv}
=\hat\Gamma^{\mathrm{chol}}_{uv}\,\mathbf1\bigl\{|\hat\Gamma^{\mathrm{chol}}_{uv}|>\theta\bigr\},
\]
recovers the exact support of the true DAG \(\Gamma^\star\): $\mathrm{supp}(\hat\Gamma^{\mathrm{th}})
=\mathrm{supp}(\Gamma^\star)$,
with probability tending to 1 as \(n\to\infty\).
\end{theorem}

\begin{proof}
    The estimation error of the Cholesky factor is $|\hat{\Gamma}^{\mathrm{chol}}_{uv} - \Gamma^\star_{uv}| < \mathcal{O}_p(\sqrt{\log m / n})$ \citep{lam2009sparsistency}, so by hard-thresholding and strong beta-min condition, we could get the initial solution with correct support.

For $(u,v)$ such that $\Gamma^\star_{uv} = 0$, we have $|\hat{\Gamma}^{\mathrm{chol}}_{uv}| < \mathcal{O}_p(\sqrt{\log m / n}) < \theta$, thus $|\hat{\Gamma}^{\mathrm{chol}}_{uv}|$ would be set to zero.

For $(u,v)$ such that $\Gamma^\star_{uv} \not= 0$, by the Assumption \ref{ass:noise_lower_bound} and \ref{condition:strong_beta}, we have $|\Gamma^\star_{uv}| > \mathcal{O}(\sqrt{s^\star \log m / n}/\eta_0)$. Therefore, $|\hat{\Gamma}^{\mathrm{chol}}_{uv}| - \theta > |\Gamma^\star_{uv}| - \theta - \mathcal{O}_p(\sqrt{\log m / n}) > \mathcal{O}(\sqrt{s^\star \log m / n}/\eta_0) - \mathcal{O}_p(\sqrt{\log m / n}) > 0$.

\end{proof}
}

\section{Synthetic and Real Experiments}
\label{sec:experiments}
In this section, we illustrate the utility of our method in Algorithm~\ref{algo:cd_spacer} on synthetic and real data and compare its performance with competing methods. We dub our method CD-$\ell_0$ as it is a coordinate descent method using $\ell_0$ penalized loss function. The competing methods we compare against include greedy equivalence search (GES) \citep{chickering2002optimal}, NOTEARS \citep{zheng2018dags}, and the mixed-integer convex program (MICODAG) \citep{xu2024integer}. We also compare our method with other coordinate descent algorithms (CCDr-MCP) \citep{aragam2015concave, aragam2019learning, Fu13}, which use a minimax concave penalty instead of $\ell_0$ norm and are implemented as an \texttt{R} package \textit{sparsebn}. To obtain an input ordering $O$ for CD-$\ell_0$, we use the algorithm of \citet{Chen19} (presented in Algorithm~\ref{algo:TO}). Consequently, we include in our comparisons the method of \citet{Chen19}, which performs statistical testing to infer a network structure using this estimated ordering. All experiments were conducted on an Intel Xeon Gold 6230R CPU with eight cores and 8 GB of memory, using \texttt{Gurobi} 10.0.1 as the optimization solver.

As the input super-structure $E_\text{super}$, we supply an estimated moral graph, computed using the graphical lasso procedure \citep{friedman2008sparse}. In all experiments, we set the graphical lasso penalty to $0.01$ (using a larger value for some datasets to avoid ill-conditioning) and then threshold the estimated precision matrix by retaining only those entries with absolute value $\geq 0.1$. To make our comparisons fair, we appropriately modify the competing methods so that $E_\text{super}$ can also be supplied as input.  Note that we count the number of support after each update in Algorithm \ref{algo:cd_spacer}. Converting the graph into a string key at each iteration is inefficient. Therefore, in the implementation, we count the support only after each full loop, setting the threshold to $C$ instead of $Cm^2$. Throughout this paper, $C$ is set to 5.
We experimented with different initializations of $\Gamma^{\mathrm{init}}$ beyond the default value and observed similar performance across the various choices. As a result, throughout this section, we use the default $\Gamma^{\mathrm{init}} = \text{identity}$. We emphasize again that our theoretical convergence and optimality results in Section~\ref{sec:theory} hold for any initialization.

We use the metric $d_\text{cpdag}$ to evaluate the estimation accuracy as the underlying DAG is generally identifiable up to the Markov equivalence class. The metric $d_\text{cpdag}$ is the number of different entries between the unweighted adjacency matrices of the estimated completed partially directed acyclic graph (CPDAG) and the true CPDAG. A CPDAG has a directed edge from a node $i$ to a node $j$ if and only if this directed edge is present in every DAG in the associated Markov equivalence class, and it has an undirected edge between nodes $i$ and $j$ if the corresponding Markov equivalence class contains DAGs with both directed edges from $i$ to $j$ and from $j$ to $i$. 

The time limit for the integer programming method MICODAG is set to $50m$. If the algorithm does not terminate within the time limit, we report the solution time (in seconds) and the achieved relative optimality gap, computed as $\mathrm{RGAP}=  (\text{upper bound} - \text{lower bound})/\text{lower bound}$. Here, the $\text{upper bound}$ and $\text{lower bound}$ refer to the objective value associated with the best feasible solution and best lower bound, obtained respectively by MICODAG. A zero value for $\mathrm{RGAP}$ indicates that an optimal solution has been found. 

For each method, we report results under oracle tuning, i.e., selecting the parameter that yields the best performance. This provides a benchmark of the best-case potential of the methods.

\emph{Setup of synthetic experiments}: For all the synthetic experiments, following the experimental setup in \cite{xu2024integer}, once we specify a DAG, we generate data according to the SEM \eqref{eqn:sem}, where the nonzero entries of $B^\star$ are drawn uniformly at random from the set $\{-0.8, -0.6, 0.6, 0.8\}$. Unless otherwise specified, the diagonal entries of $\Omega^\star$ are drawn uniformly at random from the set $\{0.8, 1, 1.2\}$. This generation of noise variance introduces some heteroscedasticity. Later, we vary this set to study how our method performs under different levels of heteroscedasticity.

\subsection{Convergence of CD-\texorpdfstring{$\ell_0$}{l0} solution to an optimal solution} Theorem~\ref{thm:obj} states that as the sample size tends to infinity, CD-$\ell_0$ identifies an optimally scoring model. To observe how quickly the asymptotics take effect, we generate three synthetic DAGs with $m=10$ nodes, where the total number of edges is chosen from the set $\{7, 12, 21\}$. We obtain $10$ independently and identically distributed data sets according to the SEM described earlier with sample size $n=\{100, 200, 300, 400, 500, 800, 1000, 1600, 3200\}$. 
In Figure~\ref{fig:obj_diff}, we compute the normalized difference 
$(\mathrm{obj}^{\mathrm{method}} - \mathrm{obj}^{\mathrm{opt}})/\mathrm{obj}^{\mathrm{opt}}$ as a function of $n$ for the three graphs, averaged across the ten independent trials. Here, $\mathrm{obj}^{\mathrm{method}}$ is the objective value obtained by the corresponding method  (CD-$\ell_0$ or GES), while $\mathrm{obj}^{\mathrm{opt}}$ is the optimal objective of \eqref{Problem:micp} obtained by the integer programming approach MICODAG. For moderately large sample sizes, CD-$\ell_0$ attains the optimal objective value, whereas GES does not. Meanwhile, CD-$\ell_0$ obtains a near-optimal solution at about 0.1\% of time, as compared to MICODAG. In the left panel of Figure~\ref{fig:obj_diff}, we run CD-$\ell_0$ with three different random orderings. In all cases, the objective converges to zero as expected, though at varying rates. The worse-case rate is $\mathcal{O}(\sqrt{d_{\mathrm{max}}^2m^4\log m/n})$ according to Theorem~\ref{thm:obj}.

\begin{figure}[tb]
    \centering
    \begin{subfigure}[t]{0.32\textwidth}
        \centering
        \includegraphics[scale=0.3]{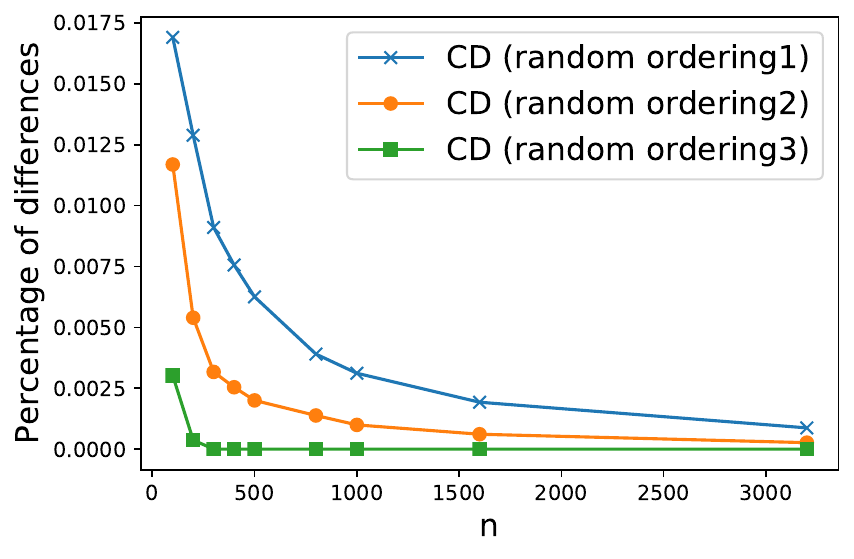}
    \end{subfigure}\hspace{-0.2cm}
    \begin{subfigure}[t]{0.32\textwidth}
        \centering
        \includegraphics[scale=0.3]{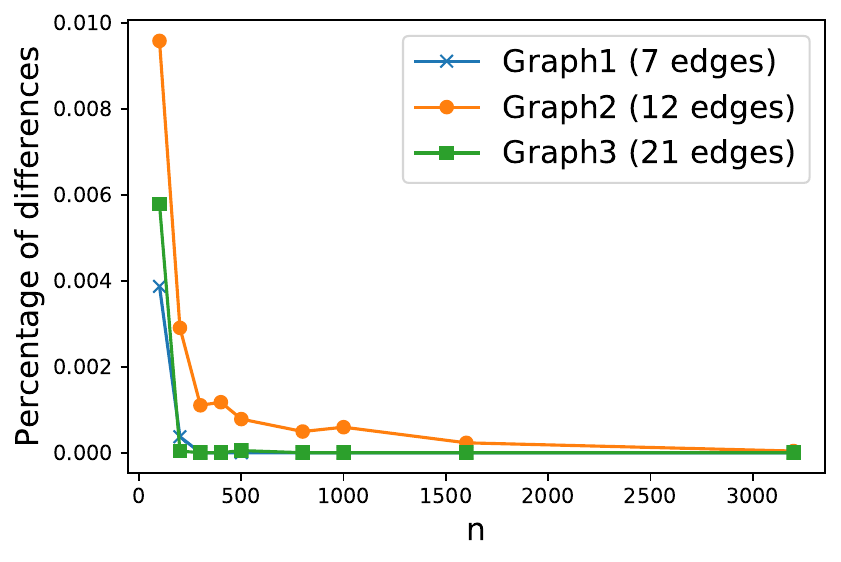}
    \end{subfigure}\hspace{-0.2cm}
    \begin{subfigure}[t]{0.32\textwidth}
        \centering
        \includegraphics[scale=0.3]{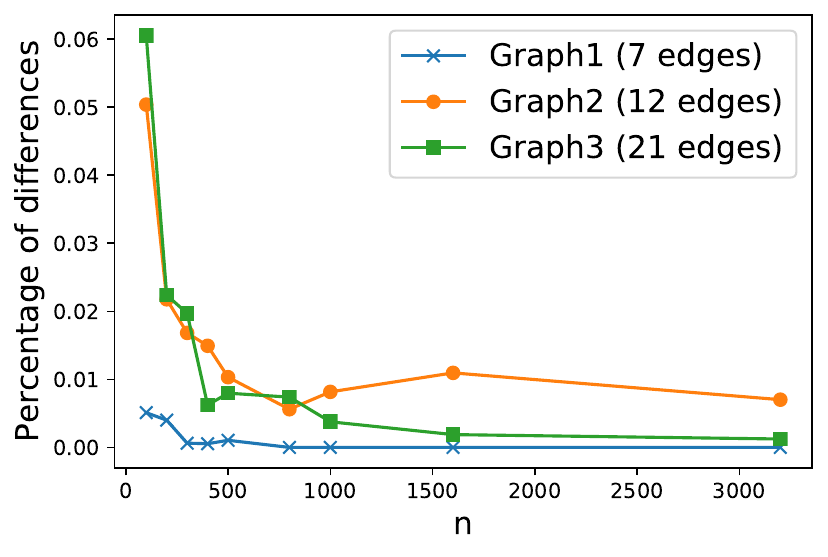}
    \end{subfigure}
    
    \caption{Convergence of CD-$\ell_0$ to an optimal solution}
    {\raggedright \footnotesize Left: normalized difference of CD-$\ell_0$ with 3 different random orderings; Middle: normalized difference for different graphs; Right: normalized difference of objectives of solutions obtained from MICODAG and GES. All results are computed and averaged over ten independent trials. 
 \par}
    \label{fig:obj_diff}
\end{figure}

\subsection{Comparison to benchmarks under near-homoscedastic error}

We next generate data sets from twelve publicly available networks sourced from  \cite{Manzour21} and the Bayesian Network Repository (bnlearn). These networks have different numbers of nodes, ranging from $m = 6$ to $m = 70$. We generate 10 independently and identically distributed data sets for each network according to the SEM described earlier with sample size $n=500$. Since the diagonal entries of $\Omega^\star$ are close (with the largest deviation being $0.8$ to $1.2$), our algorithm performs very similarly to TD, since the topological ordering obtained by Algorithm~\ref{algo:TO} is likely to be correct in this near-homoscedastic case, and both methods perform well. For this reason, we present the results for the nearly homoscedastic case in Appendix~\ref{sec:extra_experiment1}, and instead focus on a slightly higher degree of heteroscedasticity to highlight that our method is more robust to heteroscedasticity than TD. To that end, the diagonal entries of $\Omega^\star$ are drawn uniformly from $\{0.6, 1, 1.2\}$.

Table \ref{tab:compare_benchmarks_est} compares the performance of our method CD-$\ell_0$ with the competing ones. First, consider small graphs ($m \leq 20$) for which the integer programming approach MICODAG achieves an optimal or near-optimal solution with a small RGAP. As expected, in terms of the accuracy of the estimated model, MICODAG exhibits the best performance. For these small graphs, CD-$\ell_0$ performs similarly to MICODAG but attains the solutions much faster. Next, consider moderately sized graphs ($m > 20$). In this case, MICODAG cannot solve these problem instances within the time limit and hence finds inaccurate models, whereas CD-$\ell_0$ obtains much more accurate models much faster. Finally, CD-$\ell_0$ outperforms CCDr-MCP, GES, TD, and NOTEARS in most problem instances. The improved performance of CD-$\ell_0$ over CCDr-MCP highlights the advantage of using $\ell_0$ penalization over a minimax concave penalty: $\ell_0$ penalization ensures that DAGs in the same Markov equivalence class have the same score. In contrast, this property does not hold for other penalties.  Our method outperforms NOTEARS in terms of the $d_{\mathrm{cpdag}}$ for most graphs. Both methods are designed for computational efficiency, but NOTEARS uses a least squares loss, which implicitly assumes homoscedastic (equal variance) noise \citep{vgBuhlmann}. In contrast, our approach directly optimizes the original likelihood function, allowing us to account for heteroscedasticity. This results in improved accuracy and robustness under varying noise conditions.

\begin{table}[tb]
\def~{\hphantom{0}}
    \centering
    \caption{Comparison of our method, CD-$\ell_0$, with competing methods}
    \resizebox{\columnwidth}{!}{\begin{tabular}{lccccccccccccc}

        & \multicolumn{3}{c}{$\mathrm{MICODAG}$} & \multicolumn{2}{c}{CCDr-MCP} & \multicolumn{2}{c}{GES} & \multicolumn{2}{c}{TD} & \multicolumn{2}{c}{NOTEARS} & \multicolumn{2}{c}{CD-$\ell_0$} \\

        Network($m$) & Time & \rgap & $d_{\mathrm{cpdag}}$ & Time & $d_{\mathrm{cpdag}}$ & Time & $d_{\mathrm{cpdag}}$ & Time & $d_{\mathrm{cpdag}}$ & Time & $d_{\mathrm{cpdag}}$ & Time & $d_{\mathrm{cpdag}}$ \\

        Dsep(6)       & $\leq 1$   & 0     & 2.0$(\pm0)$    
                      & $\leq 1$   & 2.0$(\pm0)$    
                      & $\leq 1$   & 2.0$(\pm0)$    
                      & $\leq 1$   & 2.1$(\pm0.3)$    
                      & $\leq 1$   & 2.0$(\pm0.0)$    
                      & $\leq 1$   & 2.0$(\pm0)$ \\

        Asia(8)       & $\leq 1$        & 0     & 2.0$(\pm0)$    
                      & $\leq 1$   & 2.3$(\pm0.9)$    
                      & $\leq 1$   & 2.1$(\pm0.3)$    
                      & $\leq 1$   & 2.1$(\pm0.3)$    
                      & $\leq 1$   & 3.5$(\pm2.4)$    
                      & $\leq 1$   & 2.1$(\pm0.3)$ \\

        Bowling(9)    & 1.2        & 0     & 2.0$(\pm0)$    
                      & $\leq 1$   & 3.1$(\pm1.9)$    
                      & $\leq 1$   & 11.8$(\pm8.0)$   
                      & $\leq 1$   & 2.2$(\pm0.6)$    
                      & $\leq 1$   & 3.6$(\pm0.8)$    
                      & $\leq 1$   & 2.0$(\pm0)$ \\

        InsSmall(15)  & $\geq 750$ & .007  & 7.2$(\pm1.0)$   
                      & $\leq 1$   & 27.0$(\pm4.1)$   
                      & $\leq 1$   & 37.2$(\pm4.7)$   
                      & $\leq 1$   & 3.9$(\pm4.8)$    
                      & $\leq 1$   & 10.9$(\pm1.6)$   
                      & $\leq 1$   & 4.0$(\pm4.6)$ \\

        Rain(14)      & 56.0       & 0     & 2.0$(\pm0)$    
                      & $\leq 1$   & 8.5$(\pm2.1)$    
                      & $\leq 1$   & 15.5$(\pm2.2)$   
                      & $\leq 1$   & 6.6$(\pm2.1)$    
                      & $\leq 1$   & 5.4$(\pm2.1)$    
                      & $\leq 1$   & 5.5$(\pm2.0)$ \\

        Cloud(16)     & 9.7       & 0     & 12.7$(\pm5.5)$    
                      & $\leq 1$   & 8.5$(\pm4.5)$    
                      & $\leq 1$   & 10.6$(\pm8.2)$   
                      & $\leq 1$   & 11.9$(\pm7.8)$   
                      & $\leq 1$   & 17.6$(\pm3.7)$   
                      & $\leq 1$   & 11.6$(\pm6.5)$ \\

        Funnel(18)    & 8.0       & 0     & 2.0$(\pm0)$    
                      & $\leq 1$   & 3.3$(\pm1.7)$    
                      & $\leq 1$   & 2.6$(\pm1.3)$    
                      & $\leq 1$   & 6.4$(\pm2.5)$    
                      & $\leq 1$   & 2.9$(\pm1.4)$    
                      & $\leq 1$   & 5.4$(\pm2.0)$ \\

        Galaxy(20)    & 48.6      & 0     & 1.0$(\pm0)$    
                      & $\leq 1$   & 9.1$(\pm4.3)$    
                      & $\leq 1$   & 26.2$(\pm0.6)$   
                      & $\leq 1$   & 11.7$(\pm6.7)$   
                      & $\leq 1$   & 12.4$(\pm2.1)$   
                      & $\leq 1$   & 9.7$(\pm4.9)$ \\

        Insurance(27) & $\geq 1350$& .272  &  18.5$(\pm8.5)$  
                      & $\leq 1$   & 32.5$(\pm6.0)$   
                      & $\leq 1$   & 24.2$(\pm7.9)$   
                      & $\leq 1$   & 16.8$(\pm2.2)$   
                      & $\leq 1$   & 12.8$(\pm4.0)$   
                      & $\leq 1$   & 18.3$(\pm1.9)$ \\

        Factors(27)   & $\geq 1350$& .242  &  60.4$(\pm7.3)$  
                      & $\leq 1$   & 65.1$(\pm6.4)$   
                      & $\leq 1$   & 68.5$(\pm9.3)$   
                      & $\leq 1$   & 26.9$(\pm6.4)$   
                      & $\leq 1$   & 41.7$(\pm5.1)$   
                      & $\leq 1$   & 25.2$(\pm9.7)$ \\

        Hfinder(56)   & $\geq 2800$& .181  & 12.7$(\pm4.6)$   
                      & $\leq 1$   & 14.9$(\pm3.8)$   
                      & $\leq 1$   & 27.5$(\pm14.4)$  
                      & 1.2        & 38.3$(\pm7.7)$   
                      & 1.1& 17.3$(\pm4.0)$   
                      & $\leq 1$   & 45.1$(\pm8.9)$ \\

        Hepar(70)     & $\geq 3500$&  .868 & 45.6$(\pm9.8)$     
                      & $\leq 1$   & 51.6$(\pm9.5)$   
                      & $\leq 1$   & 71.9$(\pm19.7)$  
                      & 2.6        & 54.2$(\pm9.8)$   
                      & 2.2& 43.9$(\pm5.4)$   
                      & $\leq 1$   & 38.5$(\pm5.9)$ \\
    \end{tabular}}
    {\raggedright \footnotesize {Here, MICODAG, mixed-integer convex program \citep{xu2024integer}; CCDr-MCP, minimax concave penalized estimator with coordinate descent \citep{aragam2019learning}; GES, greedy equivalence search algorithm \citep{chickering2002optimal}; TD, top-down method \citep{Chen19}; NOTEARS, \citep{zheng2018dags}; $d_{\mathrm{cpdag}}$, differences between the true and estimated completed partially directed acyclic graphs; \rgap, relative optimality gap. All results are computed over ten independent trials, where the average $d_\mathrm{cpdag}$ values are presented with their standard deviations.}\par}
    \label{tab:compare_benchmarks_est}
\end{table}

\textbf{Large graphs}: We next demonstrate the scalability of our coordinate descent algorithm for learning large DAGs with over $100$ nodes. We consider networks from the Bayesian Network Repository and generate $10$ independent datasets similar to those in the previous experiment. The method TD is excluded since it cannot scale to very large graphs. Table~\ref{tab:compare_benchmarks_large_est} presents the results where we see that our method CD-$\ell_0$ can effectively scale to large graphs and obtain better or comparable performance to competing methods, as measured by the $d_{\mathrm{cpdag}}$ metric.

\begin{table}[tb]
\def~{\hphantom{0}}
    \centering
    \caption{Comparison of our method, CD-$\ell_0$, with competing methods for large graphs}
    \resizebox{\columnwidth}{!}{
    \begin{tabular}{lcccccc}
        &  \multicolumn{2}{c}{CCDr-MCP}  &  \multicolumn{2}{c}{GES}  &  \multicolumn{2}{c}{CD-$\ell_0$}\\
       Network($m$) &   Time & $d_{\mathrm{cpdag}}$  &   Time & $d_{\mathrm{cpdag}}$  &   Time & $d_{\mathrm{cpdag}}$\\
       Pathfinder(109) & $5.5$ & 206.6$(\pm4.86)$     & $\leq 1$ & 253.8$(\pm19.52)$     & 24.0  & 95.0$(\pm18.40)$      \\
        Andes(223)      & $12.6$ & 121.5$(\pm5.23)$      & $1.9$ & 152.8$(\pm22.26)$     & 108.9  & 98.4$(\pm6.82)$      \\
        Diabetes(413)   & $42.6$ & 322.9$(\pm9.39)$      & $10.7$ & 458.7$(\pm28.73)$     & 732.3 & 158.4$(\pm10.76)$ 
    \end{tabular}
    }
      {\raggedright \footnotesize See Table~\ref{tab:compare_benchmarks_est} for the description of the methods. All results are computed over ten independent trials, where the average $d_\mathrm{cpdag}$ values are presented with their standard deviations. \par}
\label{tab:compare_benchmarks_large_est}
\end{table}

\subsection{Severe heteroscedastic error and limitations of our method}

One limitation of our algorithm is that under severe heteroscedasticity, recovering an accurate topological ordering becomes difficult, which can degrade performance. To demonstrate this, we draw diagonal entries of $\Omega^\star$ uniformly at random from the set $\{0.2,1,5\}$, so that the noise distribution is highly heteroscedastic.  In this setup, Assumption~\ref{ass:near_homoscedastic} is unlikely to hold, making the input topological ordering less reliable for both TD and our method. Despite this, the results in Table~\ref{tab:comparison_reorder_extra_large} show that both approaches remain competitive. 

\begin{table}[h!]
\centering
\caption{Comparison of GES, Top-down (TD), and CD-$\ell_0$ using datasets with very large heteroscedasticity.}
\begin{tabular}{lccc}
&\multicolumn{3}{c}{$d_{\mathrm{cpdag}}$}\\
Network($m$) &GES & TD & $\text{CD-}{\ell_0}$ \\
Dsep(6) & $4.9(\pm1.4)$ & $5.4(\pm0.8)$ & $4.7(\pm1.2)$ \\
Asia(8) & $6.4(\pm2.8)$ & $14.4(\pm1.6)$ & $16.4(\pm1.0)$ \\
Bowling(9) & $8.2(\pm6.8)$ & $10.0(\pm0)$ & $10.0(\pm0)$ \\
InsuranceSmall(15) & $22.5(\pm5.1)$ & $18.7(\pm2.5)$ & $17.2(\pm1.6)$ \\
Rain(14) & $26.6(\pm1.8)$ & $10.5(\pm2.8)$ & $9.9(\pm2.3)$ \\
Cloud(16) & $14.9(\pm3.6)$ & $19.4(\pm1.8)$ & $19.0(\pm3.8)$ \\
Funnel(18) & $10.0(\pm2.1)$ & $9.4(\pm2.1)$ & $7.6(\pm1.4)$ \\
Galaxy(20) & $6.0(\pm2.4)$ & $35.6(\pm3.9)$ & $26.5(\pm1.6)$ \\
Insurance(27) & $61.1(\pm7.7)$ & $65.8(\pm3.6)$ & $46.9(\pm4.1)$ \\
Factors(27) & $74.1(\pm4.8)$ & $47.5(\pm7.5)$ & $40.0(\pm10.0)$ \\
Hfinder(56) & $41.0(\pm5.1)$ & $93.6(\pm2.8)$ & $100.0(\pm3.7)$ \\
Hepar(70) & $68.5(\pm11.9)$ & $124.1(\pm8.7)$ & $91.5(\pm12.5)$ \\
\end{tabular}
\label{tab:comparison_reorder_extra_large}
\end{table}

Our algorithm CD-$\ell_0$ does not rely on a specific topological ordering and can take any ordering as input---not just the one produced by TD. In practice, if an accurate method for obtaining the topological ordering in a heteroscedastic setting is available, it can be used as input to Algorithm~\ref{algo:cd_spacer}.

\subsection{Real data from causal chambers}
Recently, \citet{gamella2024causal} constructed two devices, referred to as causal chambers, allowing us to quickly and inexpensively produce large data sets from non-trivial but well-understood real physical systems. The ground-truth DAG underlying this system is known and shown in Figure~\ref{fig:chamber}(a). We collect $n=1000$ to $n =10000$ observational samples of $m=20$ variables at increments of $1000$. To maintain clarity, we only plot a subset of the variables in Figure \ref{fig:chamber}(a, b, c). However, the analysis includes all variables. With this data, we obtain estimates for the Markov equivalence class of the ground-truth DAG using GES and our method CD-$\ell_0$ and measure the accuracy of the estimates using the $d_\mathrm{cpdag}$ metric.

\begin{figure}[ht]
    \centering
\begin{subfigure}{0.23\textwidth}
\centering
\begin{tikzpicture}[scale=0.65, transform shape]
        \node[circle, inner sep=0.12em] (0) at (-0.919, 1.430) {$R$};
        \node[circle, inner sep=0.12em] (1) at (-1.546, 0.706) {$G$};
        \node[circle, inner sep=0.12em] (2) at (-1.683, -0.242) {$B$};
        \node[circle, inner sep=0.12em] (3) at (-1.285, -1.113) {$\tilde{C}$};
        \node[circle, inner sep=0.12em] (4) at (-0.479, -1.631) {$\tilde{I}_1$};
        \node[circle, inner sep=0.12em] (5) at (0.479, -1.631) {$\tilde{I}_2$};
        \node[circle, inner sep=0.12em] (6) at (1.285, -1.113) {$\tilde{I}_3$};
        \node[circle, inner sep=0.12em] (7) at (1.683, -0.242) {$\theta_1$};
        \node[circle, inner sep=0.12em] (8) at (1.546, 0.706) {$\theta_2$};
        \node[circle, inner sep=0.12em] (9) at (0.919, 1.430) {$\tilde{\theta}_1$};
        \node[circle, inner sep=0.12em] (10) at (0.000, 1.700) {$\tilde{\theta}_2$};
        \begin{scope}[]
            \draw[->] (0) edge (3);
            \draw[->] (0) edge (4);
            \draw[->] (0) edge (5);
            \draw[->] (0) edge (6);
            \draw[->] (1) edge (3);
            \draw[->] (1) edge (4);
            \draw[->] (1) edge (5);
            \draw[->] (1) edge (6);
            \draw[->] (2) edge (3);
            \draw[->] (2) edge (4);
            \draw[->] (2) edge (5);
            \draw[->] (2) edge (6);
            \draw[->] (7) edge (6);
            \draw[->] (7) edge (9);
            \draw[->] (8) edge (6);
            \draw[->] (8) edge (10);
            \draw[->] (9) edge (7);
            \draw[->] (10) edge (8);
        \end{scope}
    \end{tikzpicture}
\caption{Ground truth}
\end{subfigure}
\begin{subfigure}{0.23\textwidth}
\centering
    \begin{tikzpicture}[scale=0.65, transform shape]
        \node[circle, inner sep=0.12em] (0) at (-0.919, 1.430) {$R$};
        \node[circle, inner sep=0.12em] (1) at (-1.546, 0.706) {$G$};
        \node[circle, inner sep=0.12em] (2) at (-1.683, -0.242) {$B$};
        \node[circle, inner sep=0.12em] (3) at (-1.285, -1.113) {$\tilde{C}$};
        \node[circle, inner sep=0.12em] (4) at (-0.479, -1.631) {$\tilde{I}_1$};
        \node[circle, inner sep=0.12em] (5) at (0.479, -1.631) {$\tilde{I}_2$};
        \node[circle, inner sep=0.12em] (6) at (1.285, -1.113) {$\tilde{I}_3$};
        \node[circle, inner sep=0.12em] (7) at (1.683, -0.242) {$\theta_1$};
        \node[circle, inner sep=0.12em] (8) at (1.546, 0.706) {$\theta_2$};
        \node[circle, inner sep=0.12em] (9) at (0.919, 1.430) {$\tilde{\theta}_1$};
        \node[circle, inner sep=0.12em] (10) at (0.000, 1.700) {$\tilde{\theta}_2$};
        \begin{scope}[]
            \draw[->] (0) edge (3);
            \draw[->] (0) edge (4);
            \draw[->] (0) edge (5);
            \draw[->] (0) edge (6);
            \draw[->] (1) edge (3);
            \draw[->] (1) edge (4);
            \draw[->] (1) edge (5);
            \draw[->] (1) edge (6);
            \draw[->] (2) edge (3);
            \draw[->] (2) edge (4);
            \draw[->] (2) edge (5);
            \draw[->] (2) edge (6);
            \draw[dashed, ->, red] (3) edge (5);
            \draw[dashed, ->, red] (5) edge (4);
            \draw[dashed, ->, red] (5) edge (6);
            \draw[->] (7) edge (6);
            \draw[->] (7) edge (9);
            \draw[->] (8) edge (10);
            \draw[->] (9) edge (7);
            \draw[dashed, ->, red] (10) edge (6);
            \draw[->] (10) edge (8);
        \end{scope}
    \end{tikzpicture}
    \caption{GES}
\end{subfigure}
\begin{subfigure}{0.23\textwidth}
\centering
\begin{tikzpicture}[scale=0.65, transform shape]
        \node[circle, inner sep=0.12em] (0) at (-0.919, 1.430) {$R$};
        \node[circle, inner sep=0.12em] (1) at (-1.546, 0.706) {$G$};
        \node[circle, inner sep=0.12em] (2) at (-1.683, -0.242) {$B$};
        \node[circle, inner sep=0.12em] (3) at (-1.285, -1.113) {$\tilde{C}$};
        \node[circle, inner sep=0.12em] (4) at (-0.479, -1.631) {$\tilde{I}_1$};
        \node[circle, inner sep=0.12em] (5) at (0.479, -1.631) {$\tilde{I}_2$};
        \node[circle, inner sep=0.12em] (6) at (1.285, -1.113) {$\tilde{I}_3$};
        \node[circle, inner sep=0.12em] (7) at (1.683, -0.242) {$\theta_1$};
        \node[circle, inner sep=0.12em] (8) at (1.546, 0.706) {$\theta_2$};
        \node[circle, inner sep=0.12em] (9) at (0.919, 1.430) {$\tilde{\theta}_1$};
        \node[circle, inner sep=0.12em] (10) at (0.000, 1.700) {$\tilde{\theta}_2$};
        \begin{scope}[]
            \draw[-] (0) edge (3);
            \draw[->] (0) edge (4);
            \draw[->] (0) edge (5);
            \draw[->] (0) edge (6);
            \draw[-] (1) edge (3);
            \draw[->] (1) edge (4);
            \draw[->] (1) edge (5);
            \draw[->] (1) edge (6);
            \draw[-] (2) edge (3);
            \draw[->] (2) edge (4);
            \draw[->] (2) edge (5);
            \draw[->] (2) edge (6);
            \draw[->] (7) edge (9);
            \draw[->] (8) edge (10);
            \draw[->] (9) edge (7);
            \draw[->] (10) edge (8);
        \end{scope}
    \end{tikzpicture}
   \caption{CD-$\ell_0$}
\end{subfigure}
 \begin{subfigure}{0.28\textwidth}
\centering
    \includegraphics[scale=0.25]{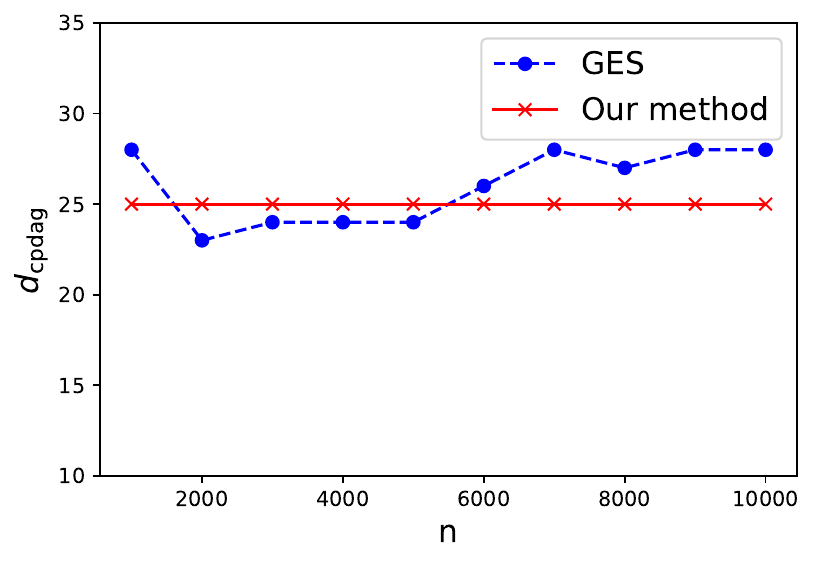}
    \caption{$d_{\mathrm{cpdag}}$ comparison}
    \label{fig:chamber_n}
\end{subfigure}   
\caption{Learning causal models from causal chambers data in \cite{gamella2024causal}}
    {\raggedright \footnotesize Here, a. ground-truth DAG described in \cite{gamella2024causal}, b-c. the estimated CPDAGs by GES and CD-$\ell_0$  for sample size $n = 10000$, d. comparing the accuracy of the CPDAGs estimated by our method CD-$\ell_0$ and GES with different sample sizes $n$; here the accuracy is computed relative to CPDAG of the ground-truth DAG and uses the metric $d_\mathrm{cpdag}$.
    \par} 
\label{fig:chamber}
\end{figure}
Figures \ref{fig:chamber}(b-c) show the estimated CPDAG for each approach when $n = 10000$. Neither method picks up the edges between the polarizer angles $\theta_1$ and $\theta_2$ and other variables. As mentioned in  \cite{gamella2024causal}, this phenomenon is likely due to these effects being nonlinear. However, with a large sample size, CD-$\ell_0$ produces graphs with fewer false positive edges. Figure \ref{fig:chamber}(d) compares the accuracy of CD-$\ell_0$ and GES in estimating the Markov equivalence class of the ground-truth DAG for different sample sizes. We observe that CD-$\ell_0$ and GES perform similarly on average.

\section{Discussion}
\label{sec:discussion}
In this paper, we propose the first coordinate descent procedure with proven optimality guarantees in the context of learning Bayesian networks. 
 Numerical experiments demonstrate that our coordinate descent method is scalable and provides high-quality solutions.  
 
 We showed in Theorem~\ref{thm:convergence} that our coordinate descent algorithm converges. It would be of interest to characterize the speed of convergence. In addition, the computational complexity of our algorithm may be improved by updating blocks of variables instead of one coordinate at a time. Further, an open question is whether, in the context of our guarantees in Theorem~\ref{thm:obj}, the sample size requirement and the rates of convergence to an optimal solution can be improved. Finally, another open question is to provide statistical consistency guarantees for our algorithm. \cite{xu2024integer} showed that, for sufficiently large $n$, any feasible $\Gamma$ for \eqref{Problem:micp} whose objective value differs from the optimum by at most $\mathcal{O}(\lambda^2)$ also belongs to the population Markov equivalence class. According to Theorem~\ref{thm:obj}, the gap achieved by coordinate descent can be much larger. It would be interesting to investigate whether under some orderings $O$, the gap obtained by coordinate descent can be on the order $\lambda^2$ so that we can attain consistency. Indeed, the synthetic result in Figure~\ref{fig:obj_diff} (left) suggests that some orderings have much faster convergence to optimality than others.
 
\clearpage
\appendix

\section{Proof of Theorem~\ref{thm:convergence}}
\label{proof:convergence}
The proof of Theorem~\ref{thm:convergence} relies on the following definitions and lemmas, and it closely follows the approach outlined in \cite{Rahul20coordinate}. With a slight abuse of notation, we let $\ell(\Gamma) := \sum_{i=1}^m-2\log(\Gamma_{ii})+\mathrm{tr}(\Gamma\Gamma^\T\hat{\Sigma}_n)$ to be the negative log-likelihood function associated with parameter $\Gamma \in \mathbb{R}^{m \times m}$.
\begin{definition}(Coordinate-wise (CW) minimum \citep{Rahul20coordinate})
A connectivity matrix $\Gamma^\mathrm{CW} \in \R^{m\times m}$ of a DAG is the CW minimum of problem \eqref{Problem:micp} if for every $(u,v), u,v=1,\ldots,m$, $\Gamma^\mathrm{CW}_{uv}$
is a minimizer of  $g(\Gamma_{uv})$ with other coordinates of $\Gamma^\mathrm{CW}$ held fixed.
\end{definition}
\begin{lemma}
\label{lem:decrease}
    Let $\{\Gamma^{j}\}_{j=1}^\infty$ be the sequence generated by Algorithm \ref{algo:cd_spacer}. Then the sequence of objective values $\{f(\Gamma^{j})\}_{j=1}^\infty$ is decreasing and converges.
\end{lemma}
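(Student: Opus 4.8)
The plan is to invoke the monotone convergence theorem: I would show that the sequence $\{f(\Gamma^{j})\}_{j=1}^\infty$ is nonincreasing and bounded below, which immediately yields convergence. First I would note that all iterates remain in the domain where the diagonal entries are positive (the diagonal update in Proposition~\ref{prop:update} always returns a positive value, since $\sqrt{A_{uu}^2+16\hat{\Sigma}_{uu}} > A_{uu}$), so $f$ is well defined throughout. The substance of the argument is to verify that neither type of update performed by Algorithm~\ref{algo:cd_spacer}---a regular coordinate update nor a spacer step---can increase $f$.

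For a regular update of $\Gamma_{uv}$, the functions $f$ and $g$ coincide as functions of the single coordinate $\Gamma_{uv}$ with all others fixed, so it suffices to show the update does not increase $g$. I would split into cases according to the acyclicity branch. When the update is permitted, the algorithm sets $\Gamma_{uv}=\hat{\Gamma}_{uv}$, the exact coordinatewise minimizer of $g$ from Proposition~\ref{prop:update}, so $g$ cannot increase. When the update is blocked by acyclicity, the algorithm sets $\Gamma_{uv}=0$; here I would use the invariant that the support is acyclic at every iteration (maintained by construction from $\Gamma^0=I$), which implies that a coordinate can fail the acyclicity test only when it is currently zero---a coordinate that is already nonzero belongs to an acyclic support, so retaining it creates no cycle and the test passes. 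Consequently the blocked update leaves $\Gamma_{uv}=0$ unchanged and does not alter $f$.

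The spacer step is where I expect the main difficulty, because it updates each coordinate in $\mathrm{support}(\Gamma^t)$ using the penalty-free minimizer $\hat{\gamma}_{uv}$ (that is, with $\lambda^2=0$) rather than the minimizer of the penalized objective, so one cannot appeal directly to coordinatewise optimality of $f$. The resolution is to track the smooth part $\ell(\Gamma)$ and the penalty $\lambda^2\norm{\Gamma-\mathrm{diag}(\Gamma)}_{\ell_0}$ separately. Since a spacer step modifies only coordinates already in the support, it cannot introduce a new edge; the support can only shrink (should some $\hat{\gamma}_{uv}$ vanish), so the penalty term is nonincreasing and the graph remains acyclic. At the same time, each spacer update sets its coordinate to the exact coordinatewise minimizer of $\ell$, which is strictly convex in each coordinate under Assumption~\ref{assumption:convexity} (the leading coefficient $\hat{\Sigma}_{uu}>0$), so the sweep of spacer updates does not increase $\ell$. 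Adding a nonincreasing smooth part to a nonincreasing penalty shows $f$ does not increase across a spacer step.

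Finally, I would establish the lower bound using Assumption~\ref{assumption:convexity}. Dropping the nonnegative penalty and bounding $\mathrm{tr}(\Gamma\Gamma^\T\hat{\Sigma}) \ge \lambda_{\min}(\hat{\Sigma})\norm{\Gamma}^2_F \ge \lambda_{\min}(\hat{\Sigma})\sum_{i=1}^m \Gamma_{ii}^2$ gives $f(\Gamma) \ge \sum_{i=1}^m\left[-2\log(\Gamma_{ii}) + \lambda_{\min}(\hat{\Sigma})\,\Gamma_{ii}^2\right]$. Each summand has the form $x \mapsto -2\log x + c\,x^2$ with $c=\lambda_{\min}(\hat{\Sigma})>0$, which is bounded below by the finite value $1+\log c$ attained at $x=c^{-1/2}$. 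Summing these finite bounds shows $f$ is bounded below, and together with the monotonicity above this proves that $\{f(\Gamma^{j})\}_{j=1}^\infty$ converges.
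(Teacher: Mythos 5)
Your proof is correct and follows essentially the same route as the paper's: monotonicity is checked separately for regular steps (exact coordinatewise minimization of $g$) and spacer steps (non-increasing $\ell$ combined with a non-increasing $\ell_0$ penalty, since the support can only shrink), and convergence then follows from a lower bound on $f$ guaranteed by Assumption~\ref{assumption:convexity}. Your extra details---the observation that an acyclicity-blocked update can only occur at a coordinate that is already zero, and the explicit per-coordinate bound $-2\log x + c\,x^2 \geq 1 + \log c$ in place of the paper's appeal to strong convexity---merely flesh out steps the paper treats as immediate.
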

\begin{proof}
The Hessian matrix of function $\ell(\Gamma)$ with respect to $\Gamma$ is $2\mathrm{diag}\left(1/\Gamma_{11}^2, \ldots, 1/\Gamma_{mm}^2\right) + 2\hat{\Sigma}$. Therefore, by Assumption \ref{assumption:convexity}, $\ell(\Gamma)$ is strongly convex and thus bounded below, and so is $f(\Gamma)$.
If $\Gamma^j$ is the result of a non-spacer step, then the inequality $f(\Gamma^j) \leq f(\Gamma^{j-1})$ holds trivially. Similarly, we know that if $\Gamma^j$ results from a spacer step, then, $\ell(\Gamma^j) \leq \ell(\Gamma^{j-1})$. Since a spacer step updates only coordinates on the support, it cannot increase the support size of $\Gamma^{j-1}$, that is, $\norm{\Gamma^j-\mathrm{diag}(\Gamma^j)}_{\ell_0} \leq \norm{\Gamma^{j-1}-\mathrm{diag}(\Gamma^{j-1})}_{\ell_0}$, thus $f(\Gamma^j) \leq f(\Gamma^{j-1})$. Since $f(\Gamma^j)$ is non-increasing and bounded below, it must converge.
\end{proof}

\begin{lemma}
\label{lem:bounded}
     The sequence $\{\Gamma^t\}_{t=1}^\infty$ generated by Algorithm \ref{algo:cd_spacer} is bounded.
\end{lemma}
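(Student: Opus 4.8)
\emph{Proof proposal.} The plan is to show every iterate lies in a sublevel set of the objective that the positive definiteness of $\hat{\Sigma}$ forces to be bounded. First, by Lemma~\ref{lem:decrease} the objective is nonincreasing, so $f(\Gamma^t) \leq f(\Gamma^1) =: M_0$ for all $t$; since the $\ell_0$ penalty is nonnegative, this yields $\ell(\Gamma^t) \leq M_0$ as well. It therefore suffices to prove that the set $\{\Gamma : \ell(\Gamma) \leq M_0,\ \Gamma_{ii} > 0 \text{ for all } i\}$ is bounded in Frobenius norm.

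Second, I would invoke Assumption~\ref{assumption:convexity}. Since $\hat{\Sigma} \succeq \lambda_{\min}(\hat{\Sigma}) I$ with $\lambda_{\min}(\hat{\Sigma}) > 0$, writing the trace term column-by-column as $\mathrm{tr}(\Gamma\Gamma^\T\hat{\Sigma}) = \sum_j \Gamma_{\cdot j}^\T \hat{\Sigma}\, \Gamma_{\cdot j} \geq \lambda_{\min}(\hat{\Sigma})\norm{\Gamma}_F^2$ and splitting $\norm{\Gamma}_F^2$ into diagonal and off-diagonal parts gives
\[
\ell(\Gamma) \;\geq\; \sum_{i=1}^m \big[\, -2\log \Gamma_{ii} + \lambda_{\min}(\hat{\Sigma})\,\Gamma_{ii}^2 \,\big] \;+\; \lambda_{\min}(\hat{\Sigma}) \sum_{i \neq j} \Gamma_{ij}^2 .
\]
The scalar map $h(x) := -2\log x + \lambda_{\min}(\hat{\Sigma})\, x^2$ is continuous and coercive on $(0,\infty)$, diverging to $+\infty$ both as $x \to 0^+$ and as $x \to \infty$; hence it is bounded below by a finite constant $c$ and has sublevel sets that are compact subsets of $(0,\infty)$.

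Third, I would combine these bounds. Fixing an index $k$ and using $h(\Gamma^t_{ii}) \geq c$ for $i \neq k$ gives $h(\Gamma^t_{kk}) \leq \ell(\Gamma^t) - (m-1)c \leq M_0 - (m-1)c$, so each diagonal entry stays in a fixed compact subset of $(0,\infty)$ — uniformly bounded above and bounded away from zero in $t$. Similarly, $\lambda_{\min}(\hat{\Sigma}) \sum_{i\neq j} (\Gamma^t_{ij})^2 \leq M_0 - mc$ bounds all off-diagonal entries. Together these give a uniform bound on $\norm{\Gamma^t}_F$.

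The only genuine subtlety — and the step I expect to require the most care — is the well-definedness of the argument, namely the positivity $\Gamma^t_{ii} > 0$ for all $t$, without which neither the $\log$ term nor the decomposition above makes sense. I would verify this from the update rule in Proposition~\ref{prop:update}: since $\hat{\Sigma}_{uu} > 0$, the diagonal update $\hat{\Gamma}_{uu} = (-A_{uu} + \sqrt{A_{uu}^2 + 16\hat{\Sigma}_{uu}})/(4\hat{\Sigma}_{uu})$ has strictly positive numerator (as $\sqrt{A_{uu}^2 + 16\hat{\Sigma}_{uu}} > |A_{uu}| \geq -A_{uu}$) and denominator, and the initialization $\Gamma^0 = I$ together with spacer steps acting only on existing (positive) diagonal entries preserves positivity throughout. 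The remainder is routine coercivity bookkeeping; the essential point is that the $-2\log\Gamma_{ii}$ term, which is unbounded below as $\Gamma_{ii} \to \infty$, must be paired with the quadratic contribution from the trace rather than controlled on its own.
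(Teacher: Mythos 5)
Your proof is correct, and it reaches the conclusion by the same high-level strategy as the paper---every iterate lies in the sublevel set $\{\Gamma : f(\Gamma)\leq f(\Gamma^0)\}$, $f \geq \ell$, and coercivity of $\ell$ forces that set to be bounded---but you execute the key step differently. The paper dispatches coercivity abstractly: Assumption~\ref{assumption:convexity} makes $\ell$ strongly convex, strongly convex functions are coercive, and coercive functions have bounded sublevel sets (citing Proposition~11.11 of Bauschke--Combettes). You instead prove coercivity by hand, via the columnwise bound $\mathrm{tr}(\Gamma\Gamma^\T\hat{\Sigma}) = \sum_j \Gamma_{\cdot j}^\T\hat{\Sigma}\,\Gamma_{\cdot j} \geq \lambda_{\min}(\hat{\Sigma})\norm{\Gamma}_F^2$ and the scalar function $h(x) = -2\log x + \lambda_{\min}(\hat{\Sigma})x^2$. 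What the paper's route buys is brevity; what yours buys is more substantive. First, it is self-contained and quantitative: you get explicit uniform bounds, including that the diagonal entries $\Gamma^t_{ii}$ stay in a compact subset of $(0,\infty)$, i.e.\ bounded \emph{away from zero}, which mere boundedness of $\norm{\Gamma^t}_F$ does not give and which is implicitly needed later (e.g.\ so that $\ell$ is finite at limit points in Lemma~\ref{lem:converge}). Second, you explicitly close a gap the paper leaves tacit: the argument only makes sense on the open domain $\{\Gamma_{ii}>0\}$, and you verify from Proposition~\ref{prop:update} and the initialization $\Gamma^0 = I$ that every diagonal update produces a strictly positive value, so the iterates never leave that domain; the paper's appeal to strong convexity silently operates on this domain without checking the iterates remain in it. One small imprecision: in the spacer step the diagonal entries (which are always in the support) are updated by the positive-root formula $\hat{\Gamma}_{uu}$ rather than $\hat{\gamma}_{uv}$, but this only strengthens your positivity claim, so the argument stands.
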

\begin{proof}
    By Algorithm \ref{algo:cd_spacer}, $\Gamma^t \in G := \{\Gamma \in \mathbb{R}^{m\times m} \mid f(\Gamma) \leq f(\Gamma^0)\}$. It suffices to show that the set $G$ is bounded. From Proposition  11.11 in \cite{Bauschke2011ConvexAA}, if the function $f$ is coercive, then the set $G$ is bounded. Since $f(\Gamma) \geq \ell(\Gamma)$ for every $\Gamma$, it suffices to show that the function $\ell$ is coercive. By Assumption \ref{assumption:convexity}, we have that the function $\ell$ is strongly convex. The lemma then follows from the classical result in convex analysis that strongly convex functions are coercive.    

\end{proof}

The following lemma characterizes the limit points of Algorithm \ref{algo:cd_spacer}.

\begin{lemma}
\label{lem:converge}
    Let $\hat{E}$ be a support set that is generated infinitely often by the non-spacer steps of Algorithm~\ref{algo:cd_spacer}, and let $\left\{\Gamma^l\right\}_{l \in L}$ be the estimates from the spacer steps when the support of the input matrix is $\hat{E}$. Then:
    \begin{enumerate}
        \item There exists a positive integer $M$ such that for all $l \in L$ with $l \geq M$, $\mathrm{support}(\Gamma^l)=\hat{E}$.
        \item There exists a subsequence of $\left\{\Gamma^l\right\}_{l \in L}$ that converges to a stationary solution ${\Gamma}^\mathrm{CW}$, where, ${\Gamma}^\mathrm{CW}$ is the unique minimizer of $\min _{\mathrm{support}(\Gamma) \subseteq \hat{E}} \ell(\Gamma)$.
        \item Every subsequence of $\{\Gamma^t\}_{t \geq 0}$ with support $\hat{E}$ converges to $\Gamma^\mathrm{CW}$.
    \end{enumerate}
\end{lemma}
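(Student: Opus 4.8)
The plan is to prove the three claims in order, using three facts already established: the objective values $\{f(\Gamma^j)\}$ form a non-increasing sequence that converges (Lemma~\ref{lem:decrease}), so the consecutive gaps $f(\Gamma^j)-f(\Gamma^{j+1})\ge 0$ are summable and therefore tend to $0$; the iterates $\{\Gamma^t\}$ are bounded (Lemma~\ref{lem:bounded}); and, by Assumption~\ref{assumption:convexity}, $\ell$ is $\mu$-strongly convex for some $\mu>0$. I would first note that $L$ is infinite: since $\hat{E}$ is produced infinitely often by non-spacer steps, its counter reaches $Cm^2$ infinitely often, each time triggering a spacer step whose input has support $\hat{E}$. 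Throughout, I use that a spacer step is exactly one Gauss--Seidel pass of the \emph{unpenalized} coordinate minimizations $\hat{\gamma}_{uv}$ from Proposition~\ref{prop:update} over the coordinates in the support of its input; consequently it can only decrease $\ell$ and can never enlarge the support.

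For claim (1), fix a spacer step with input $\Gamma^{j}$ of support $\hat{E}$ and output $\Gamma^{j+1}$, and let $q$ denote the number of off-diagonal pairs in $\hat{E}$. Since the support cannot grow, $\mathrm{support}(\Gamma^{j+1})\subseteq\hat{E}$, and the decomposition
\[
f(\Gamma^{j})-f(\Gamma^{j+1})=\bigl(\ell(\Gamma^{j})-\ell(\Gamma^{j+1})\bigr)+\lambda^2\bigl(q-\norm{\Gamma^{j+1}-\mathrm{diag}(\Gamma^{j+1})}_{\ell_0}\bigr)
\]
has both bracketed terms non-negative, while its left-hand side tends to $0$ along $L$. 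Hence each term tends to $0$ separately. For $\lambda>0$ the off-diagonal count $\norm{\Gamma^{j+1}-\mathrm{diag}(\Gamma^{j+1})}_{\ell_0}$ is an integer converging to $q$, so it equals $q$ for all large $l\in L$; that is, $\mathrm{support}(\Gamma^{l})=\hat{E}$, proving (1). I record the byproduct $\ell(\Gamma^{j})-\ell(\Gamma^{l})\to 0$ for use below.

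For claim (2) --- the step I expect to be the main obstacle --- once the support has been pinned to $\hat{E}$, the spacer map $G$ (one coordinate-descent pass over the now-fixed coordinate set $\hat{E}$, diagonal included) is a self-map of the subspace $\{\Gamma:\mathrm{support}(\Gamma)\subseteq\hat{E},\ \Gamma_{ii}>0\}$, on which $\ell$ is smooth and strongly convex. Each single-coordinate minimizer is a continuous function of the remaining entries (affine off the diagonal, and a smooth square-root expression on it, using $\hat{\Sigma}_{uu}>0$), so $G$ is continuous and preserves $\Gamma_{ii}>0$. By boundedness I pass to a subsequence of spacer inputs converging to some $\bar{\Gamma}$; continuity gives the corresponding outputs $\Gamma^{l}\to G(\bar{\Gamma})$, and the byproduct $\ell(\text{input})-\ell(\text{output})\to 0$ yields $\ell(G(\bar{\Gamma}))=\ell(\bar{\Gamma})$. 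Since the full Gauss--Seidel pass produces no net decrease while each coordinate step is a non-increase, every individual coordinate is already optimal at $\bar{\Gamma}$, so $\nabla\ell(\bar{\Gamma})=0$ on the subspace; smoothness and strong convexity then force $\bar{\Gamma}$ to be the unique minimizer $\Gamma^{\mathrm{CW}}$ of $\min_{\mathrm{support}(\Gamma)\subseteq\hat{E}}\ell(\Gamma)$, and $\Gamma^{l}\to G(\Gamma^{\mathrm{CW}})=\Gamma^{\mathrm{CW}}$ along the subsequence.

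For claim (3), write $f^\star=\lim_j f(\Gamma^j)$. Along any subsequence with support exactly $\hat{E}$ the penalty equals the constant $\lambda^2 q$, so $\ell(\Gamma^t)\to f^\star-\lambda^2 q=:\ell^\star$; evaluating $\ell$ along the subsequence from claim (2), which has support $\hat{E}$ by claim (1) and limit $\Gamma^{\mathrm{CW}}$, identifies $\ell^\star=\ell(\Gamma^{\mathrm{CW}})$. The strong-convexity lower bound at the minimizer, $\tfrac{\mu}{2}\norm{\Gamma^t-\Gamma^{\mathrm{CW}}}^2\le \ell(\Gamma^t)-\ell(\Gamma^{\mathrm{CW}})$, then drives $\Gamma^t\to\Gamma^{\mathrm{CW}}$ for every such subsequence, completing the proof.
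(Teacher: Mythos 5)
Your proof is correct, and while part (1) coincides in substance with the paper's argument (both charge a cost of $\lambda^2$ in $f$ to every dropped coordinate and invoke the convergence of $\{f(\Gamma^j)\}$ from Lemma~\ref{lem:decrease}), parts (2) and (3) take a genuinely different route. For part (2), the paper tracks the intermediate iterates $\Gamma^{l,1},\dots,\Gamma^{l,|\hat{E}|}$ of a spacer pass, writes out the exact per-coordinate decrease of $\ell$ (a quadratic term $(\Delta\Gamma_{uv})^2\hat{\Sigma}_{uu}$ off the diagonal, plus a $-\log(a/b)+a/b-1$ term on it), shows consecutive intermediates collapse to a common limit, and then deduces coordinate-wise minimality via a limiting comparison of objective values; you instead observe that the whole pass is a continuous self-map $G$ (affine coordinate updates off-diagonal, a smooth square-root on the diagonal), extract a convergent subsequence of inputs, and use the vanishing $\ell$-gap plus strict single-coordinate convexity to show the limit is a fixed point of $G$, hence stationary, hence the unique minimizer by strong convexity. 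For part (3), the paper argues by contradiction with the uniqueness of the minimizer, whereas you give a direct quantitative argument from the growth bound $\tfrac{\mu}{2}\norm{\Gamma^t-\Gamma^{\mathrm{CW}}}^2 \le \ell(\Gamma^t)-\ell(\Gamma^{\mathrm{CW}})$, which is cleaner and yields a rate in terms of the residual $\ell(\Gamma^t)-\ell(\Gamma^{\mathrm{CW}})$ for free. Two small points, both also elided by the paper: your continuity claims for $\ell$ and $G$ at the limit point need the diagonal entries to stay bounded away from zero, which follows since $\ell(\Gamma^t)\le f(\Gamma^t)\le f(\Gamma^0)$ while the iterates are bounded (Lemma~\ref{lem:bounded}), so $-2\log(\Gamma^t_{ii})$ cannot blow up; and your explicit restriction to $\lambda>0$ in part (1) is honest---the paper's own contradiction (a decrease of at least $\lambda^2$ occurring infinitely often) likewise silently requires it.
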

\begin{proof}
\textbf{Part 1.)} Since spacer steps optimize only over the coordinates in $\hat{E}$, no element outside $\hat{E}$ can be added to the support. Thus, for every $l \in L$ we have $\mathrm{support}(\Gamma^l) \subseteq \hat{E}$. We next show that strict containment is not possible via contradiction. Suppose $ \mathrm{support}(\Gamma^l) \subsetneq \hat{E}$ occurs infinitely often, and consider some $l \in L$ where this occurs. By the spacer step of Algorithm \ref{algo:cd_spacer}, the previous iterate $\Gamma^{l-1}$ has support $\hat{E}$, implying $\left\|\Gamma^{l-1}\right\|_0-\left\|\Gamma^l\right\|_0 \geq$ 1. Moreover, from the definition of the spacer step, we have $\ell(\Gamma^l) \leq \ell(\Gamma^{l-1})$. Therefore, we get
$f(\Gamma^{l-1})-f(\Gamma^l)=\ell(\Gamma^{l-1})-\ell(\Gamma^l)+\lambda^2(\|\Gamma^{l-1}\|_0-\|\Gamma^l\|_0) \geq \lambda^2$.
Thus, when $\mathrm{support}(\Gamma^l) \subsetneq \hat{E}$ occurs, $f$ decreases by at least $\lambda^2$. Therefore, $\Gamma^l \subsetneq \hat{E}$ infinitely many times implies that $f(\Gamma)$ is not lower-bounded, which is a contradiction. 

\textbf{Part 2.)} The proof follows the conventional procedure for establishing the convergence of cyclic coordinate descent (CD) \citep{bertsekas2016nonlinear,Rahul20coordinate}. 
We obtain $\Gamma^{l}$ by updating every coordinate in $\hat{E}$ of $\Gamma^{l-1}$. Denote the intermediate steps as $\Gamma^{l,1},\ldots,\Gamma^{l,|\hat{E}|}$, where $\Gamma^{l,|\hat{E}|} = \Gamma^l$. We aim to show that the sequence $\{\Gamma^{l,|\hat{E}|}\}_{l\in L}$ converges to a point $\Gamma^\mathrm{CW}$, and similarly, other sequences $\{\Gamma^{l,i}\}_{l\in L}, i=1,\ldots, |\hat{E}|-1$, also converge to $\Gamma^\mathrm{CW}$.  By Lemma \ref{lem:bounded}, since $\{\Gamma^{l,|\hat{E}|}\}_{l\in L}$ is a bounded sequence, there exists a converging subsequence $\{\Gamma^{l',|\hat{E}|}\}_{l'\in L'}$ with a limit point $\Gamma^\mathrm{CW}$. Without loss of generality, we choose the subsequence satisfying $l' > M$, $\forall l'\in L'$. From Part 1 of the lemma, $\{\Gamma^{l', 1}\}_{l'\in L'}, \ldots, \{\Gamma^{l',|\hat{E}|-1}\}_{l'\in L'}$ all have the same support $\hat{E}$. For $\{\Gamma^{l', |\hat{E}|-1}\}_{l'\in L'}$, we have $f(\Gamma^{l',|\hat{E}|-1})-f(\Gamma^{l',|\hat{E}|})=\ell(\Gamma^{l'|\hat{E}|-1})-\ell(\Gamma^{l',|\hat{E}|})$. If the change from $\Gamma^{l',|\hat{E}|-1}$ to $\Gamma^{l', |\hat{E}|}$ is on a diagonal entry, say $\Gamma_{uu}$, then,  

\begin{equation}
\begin{aligned}
    &\ell\left(\Gamma^{l',|\hat{E}|-1}\right)-\ell\left(\Gamma^{l',|\hat{E}|}\right) \\
    =& -2\log{(\Gamma_{uu}^{l',|\hat{E}|-1}}/{\Gamma_{uu}^{l',|\hat{E}|}})+\left(\left(\Gamma_{uu}^{l',|\hat{E}|-1}\right)^2 - \left(\Gamma_{uu}^{l',|\hat{E}|}\right)^2\right)\hat{\Sigma}_{uu}+(\Gamma_{uu}^{l',|\hat{E}|-1}-\Gamma_{uu}^{l',|\hat{E}|})A_{uu}\\
    =&-2\log{(\Gamma_{uu}^{l',|\hat{E}|-1}}/{\Gamma_{uu}^{l',|\hat{E}|}})+\left(\Gamma_{uu}^{l',|\hat{E}|-1} - \Gamma_{uu}^{l',|\hat{E}|}\right)^2\hat{\Sigma}_{uu} - 2\left(\Gamma_{uu}^{l',|\hat{E}|}\right)^2\hat{\Sigma}_{uu} \\
    &+2\Gamma_{uu}^{l',|\hat{E}|-1}\Gamma_{uu}^{l',|\hat{E}|}\hat{\Sigma}_{uu}+(\Gamma_{uu}^{l',|\hat{E}|-1}-\Gamma_{uu}^{l',|\hat{E}|})A_{uu}\\
    =&-2\log{(\Gamma_{uu}^{l',|\hat{E}|-1}}/{\Gamma_{uu}^{l',|\hat{E}|}})+\left(\Gamma_{uu}^{l',|\hat{E}|-1} - \Gamma_{uu}^{l',|\hat{E}|}\right)^2\hat{\Sigma}_{uu}\\
    &+2(\Gamma_{uu}^{l',|\hat{E}|-1}-\Gamma_{uu}^{l',|\hat{E}|})\Gamma_{uu}^{l',|\hat{E}|}\hat{\Sigma}_{uu}+(\Gamma_{uu}^{l',|\hat{E}|-1}-\Gamma_{uu}^{l',|\hat{E}|})A_{uu}.
\end{aligned}
\label{eq:diff_uu}
\end{equation}

From the update rule in Proposition~\ref{prop:update}, we have 
\begin{equation*}
    (4\Gamma_{uu}^{l',|\hat{E}|}\hat{\Sigma}_{uu} + A_{uu})^2 = A_{uu}^2 + 16\hat{\Sigma}_{uu},
\end{equation*}
which gives
\begin{equation}
    2\Gamma_{uu}^{l',|\hat{E}|}\hat{\Sigma}_{uu} + A_{uu} = \frac{2}{\Gamma_{uu}^{l',|\hat{E}|}}.
    \label{eq:update_uu_rearrange}
\end{equation}

Combining \eqref{eq:diff_uu} and \eqref{eq:update_uu_rearrange}, we have
\begin{equation}
\begin{aligned}
&\ell\left(\Gamma^{l',|\hat{E}|-1}\right)-\ell\left(\Gamma^{l',|\hat{E}|}\right) \\
 =&2\left(-\log{(\Gamma_{uu}^{l',|\hat{E}|-1}}/{\Gamma_{uu}^{l',|\hat{E}|})} + {\Gamma_{uu}^{l',|\hat{E}|-1}}/{\Gamma_{uu}^{l',|\hat{E}|}} - 1\right) + \left(\Gamma_{uu}^{l',|\hat{E}|-1} - \Gamma_{uu}^{l',|\hat{E}|}\right)^2\hat{\Sigma}_{uu}.
\end{aligned}
\label{eq:diff_uu_positive}
\end{equation}
Since $a-1 \geq \log(a)$ for $a\geq 0$, each of the two terms above is non-negative. From Lemma~\ref{lem:decrease}, as $l' \to \infty$, $f(\Gamma^{l',|\hat{E}|-1})-f(\Gamma^{l',|\hat{E}|})$ or equivalently $\ell(\Gamma^{l',|\hat{E}|-1})-\ell(\Gamma^{l',|\hat{E}|})$ converges to $0$ as $l' \rightarrow \infty.$
Combining this with the fact that $\ell(\Gamma^{l',|\hat{E}|-1})-\ell(\Gamma^{l',|\hat{E}|}) \geq 0$ and that each term in \eqref{eq:diff_uu_positive} is non-negative, we conclude that $\Gamma^{l',|\hat{E}|-1}$ must converge to $\Gamma^{l',|\hat{E}|}$ as $l' \to \infty$. Since $\Gamma^{l',|\hat{E}|}$ converges to $\Gamma^\mathrm{CW}$, $\Gamma^{l',|\hat{E}|-1}$ must also converge to $\Gamma^\mathrm{CW}$. Repeating a similar argument, we conclude that $\Gamma^{l',j}$ converges to $\Gamma^\mathrm{CW}$ for all $j= 1,2,\dots,|\hat{E}|$.

If the change from $\Gamma^{l',|\hat{E}|-1}$ to $\Gamma^{l', |\hat{E}|}$ is on an off-diagonal entry, say $\Gamma_{uv}$ with $u\neq v$, then, after some algebra, 
\begin{equation*}
f\left(\Gamma^{l',|\hat{E}|-1}\right)-f\left(\Gamma^{l',|\hat{E}|}\right)= \ell\left(\Gamma^{l',|\hat{E}|-1}\right)-\ell\left(\Gamma^{l',|\hat{E}|}\right) = \left(\Gamma_{uv}^{l',|\hat{E}|-1} - \Gamma_{uv}^{l',|\hat{E}|}\right)^2\hat{\Sigma}_{uu}\,.
\end{equation*}
Again, appealing to Lemma~\ref{lem:decrease} as before, we can conclude that $\Gamma^{l',|\hat{E}|-1}$ converges to $\Gamma^\mathrm{CW}$ as $l' \to \infty$. Similarly, $\Gamma^{l',j}$ converges to $\Gamma^\mathrm{CW}$ for every $j = 1,2,\dots,|\hat{E}|-1$.

Consider $k, l \in L'$ with $k > l$ such that for the $j$-th coordinate in $\hat{E}$, $f(\Gamma^{k})\leq f(\Gamma^{l, j})\leq f(\tilde{\Gamma}^{l, j})$. Here, $\tilde{\Gamma}^{l, j}$ equals to 
$\Gamma^{l, j}$ except for the $j$-th nonzero coordinate in $\hat{E}$. As $k, l \rightarrow \infty$, we have, from the above analysis, that there exists a matrix $\Gamma^\mathrm{CW}$ such that 
$\Gamma^{k} \to \Gamma^\mathrm{CW}$ and $\Gamma^{l,j} \to \Gamma^\mathrm{CW}$. Thus, $\Gamma^\mathrm{CW}$ and $\lim_{l \to \infty}\tilde{\Gamma}^{l, j}$ differ by only one coordinate in the $j$-th position. We conclude that $f(\Gamma^\mathrm{CW}) \leq f(\lim_{l \to \infty}\tilde{\Gamma}^{l, j})$. In other words, $\Gamma^\mathrm{CW}$ is coordinate-wise minimum. Furthermore, since the optimization problem $\min _{\mathrm{support}(\Gamma) \subseteq \hat{E}} \ell(\Gamma)$ is strongly convex by Assumption~\ref{assumption:convexity}, $\Gamma^\mathrm{CW}$ is the unique minimizer of this optimization problem. 

\textbf{Part 3.)} Consider any subsequence $\{\Gamma^k\}_{k\in K}$ such that $\mathrm{support}(\Gamma^k)=\hat{E}$. We will show by contradiction that $\{\Gamma^k\}_{k\in K}$ must converge to $\Gamma^\mathrm{CW}$. Suppose $\{\Gamma^k\}_{k\in K}$ has a limit point $\hat{\Gamma} \not = \Gamma^\mathrm{CW}$. Then there exist a subsequence $\{\Gamma^{k'}\}_{k'\in K'}$, with $K' \subseteq K$, that converges to $\hat{\Gamma}$. Therefore,  $\lim_{k'\rightarrow \infty} f(\Gamma^{k'}) = \ell(\hat{\Gamma}) + \lambda^2 |\hat{E}|.$
From part 1 and part 2, we have that, for the subsequence $\{\Gamma^{l'}\}$, $\lim_{l'\rightarrow \infty} f(\Gamma^{l'}) = \ell(\Gamma^\mathrm{CW}) + \lambda^2 |\hat{E}|.$ 
By Lemma \ref{lem:decrease}, we have $\lim_{k'\rightarrow \infty} f(\Gamma^{k'}) = \lim_{l'\rightarrow \infty} f(\Gamma^{l'})$. Thus, we conclude that $\ell(\hat{\Gamma}) = \ell(\Gamma^\mathrm{CW})$, which contradicts the fact that $\Gamma^\mathrm{CW}$ is the unique minimizer of $\min_{\mathrm{support}(\Gamma)\subseteq \hat{E}}\ell(\Gamma)$. Therefore, we conclude that any subsequence with support $\hat{E}$ converges to $\Gamma^\mathrm{CW}$ as $k\rightarrow \infty$.
\end{proof}

The next lemma shows that if the limit point of $\{\Gamma^k\}$ has a certain support, then that support must appear infinitely often in the sequence. 
\begin{lemma}
\label{lem:io}
    Let $\Gamma$ be a limit point of $\{\Gamma^k\}_{k=1}^\infty$ with $\mathrm{support}(\Gamma)=E$. We have $\mathrm{support}(\Gamma^k)=E$ for infinitely many $k$'s.
\end{lemma}
\begin{proof}
    We prove this result by contradiction. Assume that there are only finitely many $k$'s such that $\mathrm{support}(\Gamma^k)=E$. 
    Since there are finitely many possible support sets, there is a support $E'\not=E$ and a subsequence $\{\Gamma^{k'}\}$ of $\{\Gamma^k\}$ such that $\mathrm{support}(\Gamma^{k'}) =E'$ for all $k'$, and $\lim_{k'\rightarrow \infty} \Gamma^{k'} = \Gamma$. However, by Lemma \ref{lem:converge}, the subsequence converges to a minimizer $\Gamma^\mathrm{CW}$ with $\mathrm{support}(\Gamma^\mathrm{CW}) = E'$ and thus $\Gamma^\mathrm{CW}\not=\Gamma$. This is a contradiction.
\end{proof}
This lemma rules out the possibility that a coordinate remains nonzero throughout the sequence, gradually decreasing to zero in the limit.

We are now ready to complete the proof of Theorem \ref{thm:convergence}.
\begin{proof}[Proof of Theorem \ref{thm:convergence}]
Let $\Gamma$ be a limit point of $\{\Gamma^k\}$ with  the largest support size and denote its support by
$\hat{E}$. By Lemma \ref{lem:io}, there is a subsequence $\{\Gamma^r\}_{r\in R}$ of $\{\Gamma^k\}$ such that $\mathrm{support}(\Gamma^r) = \hat{E}, \forall r \in R$, and $\lim_{r\rightarrow\infty} \Gamma^r = \Gamma$. By Lemma \ref{lem:converge}, there exists an integer $M$ such that for every $r \geq M $ and $r+1$ is a spacer step, we have $\mathrm{support}(\Gamma^r) = \mathrm{support}(\Gamma^{r+1})$. Without loss of generality, we choose the subsequence for which $r>M, \forall r \in R$. We will demonstrate by contradiction that any coordinate $(u,v)$ in $\hat{E}$ cannot be dropped infinitely often in $\{\Gamma^k\}$. 
To this end, assume that $(u,v)\not \in \{\mathrm{support}(\Gamma^{k})\}_{k > M}$ infinitely often. Let $\{\Gamma^{r'}\}_{r' \in R'}$, where $R' \subseteq R$, be the subsequence with $\mathrm{support}(\Gamma^{r'+1}) = \hat{E} \setminus \{(u,v)\}, \forall r'\in R'$. Since $r' > M$ and the support has been changed, $r'+1$ is not a spacer step. Therefore, using Proposition \ref{prop:update}, we have $f(\Gamma^{r'}) - f(\Gamma^{r'+1}) =\ell(\Gamma^{r'})-\ell(\Gamma^{r'+1}) + \lambda^2 = \lambda^2 + \Gamma^{r'}_{uv}(\Gamma^{r'}_{uv}\hat{\Sigma}_{uv}+A_{uv}) = \lambda^2 - {A^2_{uv}}/{4\hat{\Sigma}_{uu}} > 0$, where we use the update rule that $\Gamma^{r'}_{uv} = -A_{uv}/(2\hat{\Sigma}_{uu})$. By Lemma \ref{lem:decrease}, we have $\lim_{r' \rightarrow \infty} f(\Gamma^{r'}) - f(\Gamma^{r'+1}) = 0$. Thus, $\lambda^2 = {A^2_{uv}}/{4\hat{\Sigma}_{uu}}$, where $A_{uv} = \sum_{j\not=u}\Gamma^{r'}_{jv}\hat{\Sigma}_{ju} + \sum_{k\not=u}\Gamma_{kv}^{r'}\hat{\Sigma}_{uk}$. By Proposition \ref{prop:update}, in step $r'+1$, we have $|\Gamma^{r'+1}_{uv}| = {\lambda}/{\sqrt{\hat{\Sigma}_{uu}}} > 0$, which contradicts the definition of $\{\Gamma^{r'}\}_{r' \in R'}$. Therefore, no coordinate in $\hat{E}$ can be dropped infinitely often. Moreover, no coordinate can be added to $\hat{E}$ infinitely often as $\hat{E}$ is the largest support. As a result, the support converges to $\hat{E}$. With stabilized support $\hat{E}$, by Lemma \ref{lem:converge}, we have that $\{\Gamma^k\}$ converges to the limit $\Gamma^\mathrm{CW}$ with support $\hat{E}$. From Algorithm \ref{algo:cd_spacer} and Proposition \ref{prop:update}, we have $\Gamma_{uv}$ is a minimizer of $f(\Gamma_{uv})$ with respect to the coordinate $(u,v)$ and others fixed. Therefore, $\Gamma^\mathrm{CW}$ is the CW minimum.

\end{proof}
\section{Proof of Theorem~\ref{thm:ordering}}
\label{proof:ordering}
To facilitate the proof, we first introduce several technical lemmas that extend the homoscedastic case of \cite{Chen19}'s Theorem~2 to our setting with mild heteroscedasticity. These lemmas rely on standard terminology and structural properties of directed acyclic graphs, which we recall below.

In a directed graph $\mathcal{G}(B^\star)$ induced by $B^\star$, if there is an edge $k \to j$, then $k$ is a parent of $j$, and $j$ is a child of $k$. We write $\mathrm{pa}(j)$ for the set of parents of $j$ and $\mathrm{ch}(j)$ for its children. If there is a directed path $k \to \cdots \to j$, then $k$ is an ancestor of $j$, and $j$ is a descendant of $k$. The sets of ancestors and descendants of $j$ are denoted by $\mathrm{an}(j)$ and $\mathrm{de}(j)$, respectively, with $j \in \mathrm{an}(j)$ and $j \in \mathrm{de}(j)$ by convention. A set $C$ is \emph{ancestral} if $\mathrm{an}(j) \subseteq C$ for all $j \in C$. 

The first lemma clarifies that the sources in $\mathcal{G}(B^\star)$ are characterized by minimal variances. 

\begin{lemma}\label{lemma:var_bound}
Let $X$ be the variables defined in \eqref{eqn:sem}, we have
\begin{itemize}
    \item If \( \mathrm{pa}(j) = \emptyset \), then \( \mathrm{var}(X_j) = \Omega_{jj}^\star \).
    \item If \( \mathrm{pa}(j) \neq \emptyset \), then \( \mathrm{var}(X_j) \geq \Omega_{jj}^\star + \Omega_{ll}^\star\zeta \) for some $l\in V$.
\end{itemize}
\end{lemma}

\begin{proof}
Recall that $X = (I - {B^\star}^\top)^{-1}\epsilon$. Let \( \Pi = (\pi_{jk}) = (I - {B^\star}^\top)^{-1} \), where $\pi_{jk}$ denotes the total causal effect from node 
$k$ to node $j$ in the DAG defined by $B^\star$. we have
\[
\mathrm{var}(X_j) =  \sum_{k=1}^m \Omega_{kk}^\star\pi_{jk}^2.
\]

If \( \mathrm{pa}(j) = \emptyset \), then \( \pi_{jk}^2 = 0 \) for all \( k \neq j \), and thus  
\[
\mathrm{var}(X_j) = \Omega^\star_{jj} \pi_{jj}^2 = \Omega^\star_{jj}.
\]

If \( \mathrm{pa}(j) \neq \emptyset \), then by acyclicity of \( \mathcal{G}(B^\star) \), there exists a node \( l \in \mathrm{pa}(j) \) such that \( \mathrm{de}(l) \cap \mathrm{pa}(j) = \{l\} \). Then  
\[
\pi_{jl}^2 = {B^\star_{jl}}^2 \geq \zeta,
\]
and  
\[
\mathrm{var}(X_j) = \Omega^\star_{jj} + \sum_{k \neq j}\Omega_{kk}^\star \pi_{jk}^2 \geq \Omega^\star_{jj} + \Omega_{ll}^\star\pi_{jl}^2 \geq \Omega^\star_{jj} + \Omega_{ll}^\star\zeta.
\]
\end{proof}

The next lemma shows that by conditioning on a source, or more generally, an ancestral set, one recovers a structural equation model whose associated graph has the source node or the entire ancestral set removed. For a variable \( X_j \) and a vector \( X_C = (X_k : k \in C) \), we define \( X_{j,C} = X_j - \mathbb{E}(X_j|X_C) \).

\begin{lemma}
Let \( C \) be an ancestral set in \(\mathcal{G}(B^\star)\). Then \((X_{j,C} : j \notin C) \) is associated with the submatrix \( B^\star[-C] = (\beta_{jk})_{j,k \notin C} \).
\end{lemma}

\begin{proof}
Let \( j \notin C \). Since \( C \) is ancestral, \( X_C \) is a function of \( \epsilon_C = (\epsilon_k:k\in C) \) only and thus independent of \( \epsilon_j \). Hence, \( \mathbb{E}(\epsilon_j|X_C) = \mathbb{E}(\epsilon_j) = 0 \). Because it also holds that \( X_{k,C} = 0 \) for \( k \in C \), we have that

\[
X_{j,C} = \sum_{k \in \text{pa}(j) \setminus C} \beta_{jk} X_{k,C} + \epsilon_j.
\]
\end{proof}






Before proving Theorem~\ref{thm:ordering}, we give a lemma that addresses the estimation error for inverse covariances.

\begin{lemma}[Lemma~6 in \cite{Chen19}]\label{lemma:estimation_error}
Suppose all $(q+1) \times (q+1)$ principal submatrices of the covariance matrix \(\Sigma^\star = \mathbb{E}(XX^T)\) has minimum eigenvalue at least \(\lambda_{\min} > 0\). If for \(\epsilon, \eta > 0\) we have  
\begin{equation}
\label{eqn:n_require}
n > (q+1)^2 \left\{ \log \left( \frac{2m^2 + 2m}{\epsilon} \right) \right\} 128 \left( 1 + 4 \frac{\Omega^\star_{\max}}{\Omega^\star_{\min}} \right)^2 \left( \max_{j \in V} \Sigma^\star_{jj} \right)^2 \left( \frac{\eta \lambda_{\min} + 1}{\eta \lambda_{\min}^2} \right)^2,
\end{equation}
then  
\[
\max_{C \subseteq V, |C|\leq q+1} \| (\Sigma^\star_{C,C})^{-1} - (\hat{\Sigma}_{C,C})^{-1} \|_{\infty} \leq \eta
\]  
with probability at least \(1 - \epsilon\).
\end{lemma}

\ignore{
\begin{proof}
Let \(\delta = \frac{\eta \lambda_{\min}^2}{(q+1)(\eta \lambda_{\min} + 1)}\). Because \(\delta < \frac{\lambda_{\min}}{q+1}\), by Lemma~5 from \cite{harris2013pc}, we have  
\[
\max_{C \subseteq V, |C| \leq (q+1)} \| (\Sigma^\star_{C,C})^{-1} - (\hat{\Sigma}_{C,C})^{-1} \|_{\infty} \leq \frac{(q+1)\delta / \lambda_{\min}^2}{1 - (q+1)\delta / \lambda_{\min}} = \eta,
\]  
provided \(\|\hat{\Sigma} - \Sigma^\star\|_{\infty} \leq \delta\). The proof is thus complete if we show that \(\mathbb{P} \left( \|\hat{\Sigma} - \Sigma^\star\|_{\infty} > \delta \right) \leq \epsilon\).

Note that \(X_{j}=\varepsilon_{j}+\sum_{k\in\operatorname{an}(j)}\pi_{jk}\varepsilon_{k}\) has variance \(\Omega_{jj}^\star+\sum_{k\in\operatorname{an}(j)}\pi^2_{jk}\Omega_{kk}^\star\). Since \(\Omega_{\max}^\star\) is a bound on the variances of all \(\epsilon_{j}\), it follows that \(X_{j}/\sqrt{\operatorname{var}(X_{j})}\) is sub-Gaussian with parameter at most \(\Omega^\star_{\max}/\Omega^\star_{\min}\). Lemma~1 of \cite{Ravikumar2008HighdimensionalCE} applies and gives  
\[
\mathbb{P}\{\left|\widehat{\Sigma}_{i,j}-\Sigma^\star_{i,j}\right|>\delta\}\leq 4 \exp\left\{-\frac{n\delta^{2}}{128(1+4\Omega^\star_{\max}/\Omega^\star_{\min})^{2}\max_{j}(\Sigma^\star_{j,j})^{2}}\right\}\leq\frac{2}{m(m+1)}\epsilon.
\]  
A union bound over the entries of \(\Sigma\) yields that indeed \(\mathbb{P}\left(\left\|\widehat{\Sigma}-\Sigma^\star\right\|_{\infty}>\delta\right)\leq \epsilon\).
\end{proof}
}

\begin{proof}[Proof of Theorem~\ref{thm:ordering}]
Our assumption on \( n \) is as in \eqref{eqn:n_require} with \( \eta = \frac{1}{2\Omega^\star_{\max}} - \frac{1}{2\Omega^\star_{\min}(1 + \zeta)} \). Lemma~\ref{lemma:estimation_error} thus implies that, with probability at least \( 1 - \epsilon \), we have, for all subsets \( C \subseteq V \),  
\begin{equation}\label{eq:bound}
\left\|(\hat{\Sigma}_{C,C})^{-1} - (\Sigma_{C,C})^{-1}\right\|_\infty \leq \frac{1}{2\Omega^\star_{\max}} - \frac{1}{2\Omega^\star_{\min}(1 + \zeta)}.
\end{equation}

Let \( j \) be a source in \( \mathcal{G}(B^\star) \), and let \( k \) be a non-source. Note that the variance of \( j \) conditional on some set \( C \) is
\[
\Omega^\star_{j|C} = \frac{1}{\left\{ (\Sigma_{C \cup \{j\}, C \cup \{j\}})^{-1} \right\}_{j,j}}.
\]

For an ancestral set \( C \subseteq V \setminus \{j,k\} \) such that \( \operatorname{pa}(j) \subseteq C \) and \( \operatorname{pa}(k) \not\subseteq C \),
\begin{equation}\label{eq:precision_diff}
\left\{ (\Sigma_{C \cup \{j\}, C \cup \{j\}})^{-1} \right\}_{j,j} - \left\{ (\Sigma_{C \cup \{k\}, C \cup \{k\}})^{-1} \right\}_{k,k} \geq \frac{1}{\Omega^\star_{jj}} - \frac{1}{\Omega^\star_{kk} + \Omega^\star_{\min}\zeta} \geq  \frac{1}{\Omega^\star_{\max}} - \frac{1}{\Omega^\star_{\min}(1 + \zeta)}.
\end{equation}

Using \eqref{eq:bound}, we obtain that
\begin{equation}\label{eq:est_diff}
\left\{ (\hat{\Sigma}_{C \cup \{j\}, C \cup \{j\}})^{-1} \right\}_{j,j} - \left\{ (\hat{\Sigma}_{C \cup \{k\}, C \cup \{k\}})^{-1} \right\}_{k,k} > 0.
\end{equation}

Thus, \( \hat{\Omega}_{j|C}^2 < \hat{\Omega}_{k|C}^2 \), which implies that Algorithm~\ref{algo:TO} correctly selects a source node at each step. In the first step, \( C = \emptyset \), which is trivially an ancestral set. By induction, each subsequent step then correctly adds a sink to \( C \), so \( C \) remains ancestral and a correct ordering is recovered.
\end{proof}

\section{Results of near homoscedastic case}
\label{sec:extra_experiment1}
Table~\ref{tab:compare_benchmarks_est_small} presents results for a near-homoscedastic scenario, where the diagonal entries of $\Omega^\star$ are sampled uniformly from $\{0.8, 1, 1.2\}$. Because the variance differences are small, the TD method is able to identify an good topological ordering, leading to comparable performance across methods.

\begin{table}[tb]
\def~{\hphantom{0}}
    \centering
    \caption{Comparison of our method, CD-$\ell_0$, with competing methods}
    \resizebox{\columnwidth}{!}{\begin{tabular}{lccccccccccccc}

        & \multicolumn{3}{c}{$\mathrm{MICODAG}$} & \multicolumn{2}{c}{CCDr-MCP} & \multicolumn{2}{c}{GES} & \multicolumn{2}{c}{TD} & \multicolumn{2}{c}{NOTEARS} & \multicolumn{2}{c}{CD-$\ell_0$} \\

        Network($m$) & Time & \rgap & $d_{\mathrm{cpdag}}$ & Time & $d_{\mathrm{cpdag}}$ & Time & $d_{\mathrm{cpdag}}$ & Time & $d_{\mathrm{cpdag}}$ & Time & $d_{\mathrm{cpdag}}$ & Time & $d_{\mathrm{cpdag}}$ \\

        Dsep(6)       & $\leq 1$   & 0     & 2.0$(\pm0)$    
                      & $\leq 1$   & 2.0$(\pm0)$    
                      & $\leq 1$   & 2.0$(\pm0)$    
                      & $\leq 1$   & 2.0$(\pm0)$    
                      & $\leq 1$   & 2.0$(\pm0)$    
                      & $\leq 1$   & 2.0$(\pm0)$ \\

        Asia(8)       & $\leq 1$   & 0     & 2.0$(\pm0)$    
                      & $\leq 1$   & 2.3$(\pm0.95)$    
                      & $\leq 1$   & 2.1$(\pm0.32)$    
                      & $\leq 1$   & 3.2$(\pm1.93)$    
                      & $\leq 1$   & 2.8$(\pm1.75)$    
                      & $\leq 1$   & 3.2$(\pm1.93)$ \\

        Bowling(9)    & $\leq 1$   & 0     & 2.0$(\pm0)$    
                      & $\leq 1$   & 3.1$(\pm1.91)$    
                      & $\leq 1$   & 2.4$(\pm1.26)$    
                      & $\leq 1$   & 2.0$(\pm0)$    
                      & $\leq 1$   & 3.8$(\pm0.63)$    
                      & $\leq 1$   & 2.0$(\pm0)$ \\

        InsSmall(15)  & 569       & .007  & 12.2$(\pm2.74)$   
                      & $\leq 1$   & 27.0$(\pm4.08)$   
                      & $\leq 1$   & 20.2$(\pm5.69)$   
                      & $\leq 1$   & 8.0$(\pm0)$    
                      & $\leq 1$   & 6.7$(\pm1.49)$   
                      & $\leq 1$   & 8.0$(\pm0)$ \\

        Rain(14)      & 36.7       & 0     & 2.2$(\pm0.63)$    
                      & $\leq 1$   & 8.5$(\pm2.12)$    
                      & $\leq 1$   & 4.2$(\pm3.33)$   
                      & $\leq 1$   & 2.0$(\pm0)$    
                      & $\leq 1$   & 3.4$(\pm2.50)$    
                      & $\leq 1$   & 2.0$(\pm0)$ \\

        Cloud(16)     & 30.0       & 0     & 5.0$(\pm0)$    
                      & $\leq 1$   & 8.5$(\pm4.45)$    
                      & $\leq 1$   & 5.0$(\pm0)$   
                      & $\leq 1$   & 3.2$(\pm1.03)$   
                      & $\leq 1$   & 19.1$(\pm2.73)$   
                      & $\leq 1$   & 5.2$(\pm2.62)$ \\

        Funnel(18)    & 14.4       & 0     & 2.1$(\pm0.32)$    
                      & $\leq 1$   & 3.3$(\pm1.70)$    
                      & $\leq 1$   & 8.9$(\pm10.29)$    
                      & $\leq 1$   & 2.0$(\pm0)$    
                      & $\leq 1$   & 3.2$(\pm1.55)$    
                      & $\leq 1$   & 2.0$(\pm0)$ \\

        Galaxy(20)    & 125.4      & 0     & 1.0$(\pm0)$    
                      & $\leq 1$   & 9.1$(\pm4.33)$    
                      & $\leq 1$   & 2.1$(\pm3.48)$    
                      & $\leq 1$   & 1.2$(\pm0.63)$    
                      & $\leq 1$   & 17.0$(\pm2.91)$    
                      & $\leq 1$   & 1.2$(\pm0.63)$ \\

        Insurance(27) & 1379.5     & .220  & 15.8$(\pm5.12)$  
                      & $\leq 1$   & 32.5$(\pm6.04)$   
                      & $\leq 1$   & 28.8$(\pm4.61)$   
                      & 0.124       & 11.7$(\pm2.67)$   
                      & $\leq 1$   & 12.7$(\pm5.93)$   
                      & 0.111       & 10.3$(\pm1.49)$ \\

        Factors(27)   & 1490.9     & .217  & 65.1$(\pm6.69)$  
                      & $\leq 1$   & 65.1$(\pm6.37)$   
                      & $\leq 1$   & 67.0$(\pm8.26)$   
                      & $\leq 1$   & 18.7$(\pm3.92)$   
                      & $\leq 1$   & 18.4$(\pm4.27)$   
                      & $\leq 1$   & 20.9$(\pm4.28)$ \\

        Hfinder(56)   & 3940.4     & .156  & 12.9$(\pm4.09)$   
                      & $\leq 1$   & 14.9$(\pm3.84)$   
                      & $\leq 1$   & 29.6$(\pm11.65)$  
                      & 1.50       & 14.1$(\pm7.34)$   
                      & 1.66       & 18.2$(\pm3.88)$   
                      & $\leq 1$   & 14.7$(\pm9.19)$ \\

        Hepar(70)     & 4132.8     & .732  & 32.6$(\pm6.54)$     
                      & $\leq 1$   & 51.6$(\pm9.50)$   
                      & $\leq 1$   & 76.6$(\pm29.45)$  
                      & 3.60       & 18.5$(\pm7.37)$   
                      & 3.21       & 42.6$(\pm5.68)$   
                      & 1.23       & 11.6$(\pm6.33)$ \\
    \end{tabular}}
    {\raggedright \footnotesize {Here, MICODAG, mixed-integer convex program \citep{xu2024integer}; CCDr-MCP, minimax concave penalized estimator with coordinate descent \citep{aragam2019learning}; GES, greedy equivalence search algorithm \citep{chickering2002optimal}; TD, top-down method \citep{Chen19}; NOTEARS, \citep{zheng2018dags}; $d_{\mathrm{cpdag}}$, differences between the true and estimated completed partially directed acyclic graphs; \rgap, relative optimality gap. All results are computed over ten independent trials where the average $d_\mathrm{cpdag}$ values are presented with their standard deviations.}\par}
    \label{tab:compare_benchmarks_est_small}
\end{table}



\acks{We thank the AE and the two referees for their constructive comments that improved this paper. We also thank Rohit Bhattacharya for his thoughtful feedback on an earlier version of this manuscript, which led to improvements in the paper. Simge Küçükyavuz and Tong Xu are supported, in part, by the Office of Naval Research Global [Grant N00014-22-1-2602]. Armeen Taeb is supported by NSF [Grant DMS-2413074] and by the Royalty Research Fund at the University of Washington. Ali Shojaie is supported by grant R01GM133848 from the  National Institutes of Health.}

\vskip 0.2in
\bibliography{reference}

\begin{thebibliography}{30}
\providecommand{\natexlab}[1]{#1}
\providecommand{\url}[1]{\texttt{#1}}
\expandafter\ifx\csname urlstyle\endcsname\relax
  \providecommand{\doi}[1]{doi: #1}\else
  \providecommand{\doi}{doi: \begingroup \urlstyle{rm}\Url}\fi

\bibitem[Aragam and Zhou(2015)]{aragam2015concave}
Bryon Aragam and Qing Zhou.
\newblock Concave penalized estimation of sparse {Gaussian} {Bayesian} networks.
\newblock \emph{Journal of Machine Learning Research}, 16\penalty0 (1):\penalty0 2273--2328, 2015.

\bibitem[Aragam et~al.(2019)Aragam, Gu, and Zhou]{aragam2019learning}
Bryon Aragam, Jiaying Gu, and Qing Zhou.
\newblock Learning large-scale {Bayesian} networks with the sparsebn package.
\newblock \emph{Journal of Statistical Software}, 91:\penalty0 1--38, 2019.

\bibitem[Bauschke and Combettes(2011)]{Bauschke2011ConvexAA}
Heinz~H. Bauschke and Patrick~L. Combettes.
\newblock Convex analysis and monotone operator theory in {Hilbert} spaces.
\newblock In \emph{CMS Books in Mathematics}, 2011.

\bibitem[Behdin et~al.(2023)Behdin, Chen, and Mazumder]{behdin2023sparse}
Kayhan Behdin, Wenyu Chen, and Rahul Mazumder.
\newblock Sparse gaussian graphical models with discrete optimization: Computational and statistical perspectives.
\newblock \emph{arXiv preprint arXiv:2307.09366}, 2023.

\bibitem[Bertsekas(2016)]{bertsekas2016nonlinear}
Dimitri Bertsekas.
\newblock \emph{Nonlinear Programming}, volume~4.
\newblock Athena Scientific, 2016.

\bibitem[Chen et~al.(2019)Chen, Drton, and Wang]{Chen19}
Wenyu Chen, Mathias Drton, and Y~Samuel Wang.
\newblock {On causal discovery with an equal-variance assumption}.
\newblock \emph{Biometrika}, 106\penalty0 (4):\penalty0 973--980, 09 2019.
\newblock ISSN 0006-3444.
\newblock \doi{10.1093/biomet/asz049}.
\newblock URL \url{https://doi.org/10.1093/biomet/asz049}.

\bibitem[Chickering(2002)]{chickering2002optimal}
David~Maxwell Chickering.
\newblock Optimal structure identification with greedy search.
\newblock \emph{Journal of Machine Learning Research}, 3\penalty0 (Nov):\penalty0 507--554, 2002.

\bibitem[Ellis and Wong(2008)]{ellis2008learning}
Byron Ellis and Wing~Hung Wong.
\newblock Learning causal {Bayesian} network structures from experimental data.
\newblock \emph{Journal of the American Statistical Association}, 103\penalty0 (482):\penalty0 778--789, 2008.

\bibitem[Friedman et~al.(2007)Friedman, Hastie, H{\"o}fling, and Tibshirani]{Friedman07coordinate}
Jerome Friedman, Trevor Hastie, Holger H{\"o}fling, and Robert Tibshirani.
\newblock {Pathwise coordinate optimization}.
\newblock \emph{Annals of Applied Statistics}, 1\penalty0 (2):\penalty0 302 -- 332, 2007.

\bibitem[Friedman et~al.(2008)Friedman, Hastie, and Tibshirani]{friedman2008sparse}
Jerome Friedman, Trevor Hastie, and Robert Tibshirani.
\newblock Sparse inverse covariance estimation with the graphical lasso.
\newblock \emph{Biostatistics}, 9\penalty0 (3):\penalty0 432--441, 2008.

\bibitem[Fu and Zhou(2013)]{Fu13}
Fei Fu and Qing Zhou.
\newblock Learning sparse causal {Gaussian} networks with experimental intervention: Regularization and coordinate descent.
\newblock \emph{Journal of the American Statistical Association}, 108\penalty0 (501):\penalty0 288--300, 2013.

\bibitem[Gamella et~al.(2024)Gamella, Peters, and Bühlmann]{gamella2024causal}
Juan~L. Gamella, Jonas Peters, and Peter Bühlmann.
\newblock The causal chambers: Real physical systems as a testbed for {AI} methodology, 2024.
\newblock URL \url{https://arxiv.org/abs/2404.11341}.

\bibitem[Hazimeh and Mazumder(2020)]{Rahul20coordinate}
Hussein Hazimeh and Rahul Mazumder.
\newblock Fast best subset selection: Coordinate descent and local combinatorial optimization algorithms.
\newblock \emph{Operations Research}, 68\penalty0 (5):\penalty0 1517--1537, 2020.

\bibitem[Kalisch and B{{\"u}}hlmann(2007)]{kalisch2007estimating}
Markus Kalisch and Peter B{{\"u}}hlmann.
\newblock Estimating high-dimensional directed acyclic graphs with the {PC}-algorithm.
\newblock \emph{Journal of Machine Learning Research}, 8\penalty0 (22):\penalty0 613--636, 2007.

\bibitem[K\"{u}\c{c}\"{u}kyavuz et~al.(2023)K\"{u}\c{c}\"{u}kyavuz, Shojaie, Manzour, Wei, and Wu]{kucukyavuz2022consistent}
Simge K\"{u}\c{c}\"{u}kyavuz, Ali Shojaie, Hasan Manzour, Linchuan Wei, and Hao-Hsiang Wu.
\newblock Consistent second-order conic integer programming for learning {Bayesian} networks.
\newblock \emph{Journal of Machine Learning Research}, 24\penalty0 (322):\penalty0 1--38, 2023.

\bibitem[Lam and Fan(2009)]{lam2009sparsistency}
Clifford Lam and Jianqing Fan.
\newblock Sparsistency and rates of convergence in large covariance matrix estimation.
\newblock \emph{Annals of Statistics}, 37\penalty0 (6B):\penalty0 4254, 2009.

\bibitem[Manzour et~al.(2021)Manzour, K\"{u}\c{c}\"{u}kyavuz, Wu, and Shojaie]{Manzour21}
Hasan Manzour, Simge K\"{u}\c{c}\"{u}kyavuz, Hao-Hsiang Wu, and Ali Shojaie.
\newblock Integer programming for learning directed acyclic graphs from continuous data.
\newblock \emph{INFORMS Journal on Optimization}, 3\penalty0 (1):\penalty0 46--73, 2021.

\bibitem[Meinshausen and Buhlmann(2006)]{Meinshausen2006HighdimensionalGA}
Nicolai Meinshausen and Peter Buhlmann.
\newblock High-dimensional graphs and variable selection with the lasso.
\newblock \emph{Annals of Statistics}, 34:\penalty0 1436--1462, 2006.

\bibitem[Nandy et~al.(2018)Nandy, Hauser, and Maathuis]{nandy2018high}
Preetam Nandy, Alain Hauser, and Marloes Maathuis.
\newblock High-dimensional consistency in score-based and hybrid structure learning.
\newblock \emph{Annals of Statistics}, 46\penalty0 (6A):\penalty0 3151--3183, 2018.

\bibitem[Shah and Peters(2018)]{Shah2018TheHO}
Rajen~Dinesh Shah and J.~Peters.
\newblock The hardness of conditional independence testing and the generalised covariance measure.
\newblock \emph{Annals of Statistics}, 2018.

\bibitem[Silander and Myllym\"{a}ki(2006)]{silander2012simple}
Tomi Silander and Petri Myllym\"{a}ki.
\newblock A simple approach for finding the globally optimal {Bayesian} network structure.
\newblock In \emph{Proceedings of the Twenty-Second Conference on Uncertainty in Artificial Intelligence}, UAI'06, page 445–452, Arlington, Virginia, USA, 2006. AUAI Press.
\newblock ISBN 0974903922.

\bibitem[Spirtes et~al.(1993)Spirtes, Glymour, and Scheines]{causalitybase}
Peter Spirtes, Clark Glymour, and Richard Scheines.
\newblock \emph{Causation, Prediction, and Search}.
\newblock The MIT Press, 1993.
\newblock ISBN 978-1-4612-7650-0.

\bibitem[Tsamardinos et~al.(2006)Tsamardinos, Brown, and Aliferis]{Tsamardinos2006TheMH}
Ioannis Tsamardinos, Laura~E. Brown, and Constantin~F. Aliferis.
\newblock The max-min hill-climbing {Bayesian} network structure learning algorithm.
\newblock \emph{Machine Learning}, 65:\penalty0 31--78, 2006.

\bibitem[Uhler et~al.(2012)Uhler, Raskutti, Buhlmann, and Yu]{Uhler2012GeometryOT}
Caroline Uhler, Garvesh Raskutti, Peter Buhlmann, and Bin Yu.
\newblock Geometry of the faithfulness assumption in causal inference.
\newblock \emph{Annals of Statistics}, 41:\penalty0 436--463, 2012.

\bibitem[van~de Geer and B{\"u}hlmann(2013)]{vgBuhlmann}
Sara van~de Geer and Peter B{\"u}hlmann.
\newblock {$\ell_{0}$-penalized maximum likelihood for sparse directed acyclic graphs}.
\newblock \emph{Annals of Statistics}, 41\penalty0 (2):\penalty0 536 -- 567, 2013.

\bibitem[Verma and Pearl(1990)]{verma1990equivalence}
Thomas Verma and Judea Pearl.
\newblock Equivalence and synthesis of causal models.
\newblock In \emph{Uncertainty in Artificial Intelligence}, pages 255--270, 1990.

\bibitem[Xu et~al.(2024)Xu, Taeb, Küçükyavuz, and Shojaie]{xu2024integer}
Tong Xu, Armeen Taeb, Simge Küçükyavuz, and Ali Shojaie.
\newblock Integer programming for learning directed acyclic graphs from non-identifiable {Gaussian} models.
\newblock arXiv.2404.12592, 2024.

\bibitem[Ye et~al.(2020)Ye, Amini, and Zhou]{ye2020optimizing}
Qiaoling Ye, Arash~A Amini, and Qing Zhou.
\newblock Optimizing regularized {Cholesky} score for order-based learning of {Bayesian} networks.
\newblock \emph{IEEE Transactions on Pattern Analysis and Machine Intelligence}, 43\penalty0 (10):\penalty0 3555--3572, 2020.

\bibitem[Yu et~al.(2019)Yu, Chen, Gao, and Yu]{yu2019dag}
Yue Yu, Jie Chen, Tian Gao, and Mo~Yu.
\newblock {DAG-GNN: DAG} structure learning with graph neural networks.
\newblock In \emph{International Conference on Machine Learning}, pages 7154--7163. PMLR, 2019.

\bibitem[Zheng et~al.(2018)Zheng, Aragam, Ravikumar, and Xing]{zheng2018dags}
Xun Zheng, Bryon Aragam, Pradeep~K. Ravikumar, and Eric~P Xing.
\newblock {DAGs} with {NO TEARS}: continuous optimization for structure learning.
\newblock \emph{Advances in Neural Information Processing Systems}, 31:\penalty0 9492--9503, 2018.

\end{thebibliography}

\end{document}